\definecolor{DarkRed}{rgb}{0.545,0,0}
\def \ifempty#1{\def\temp{#1} \ifx\temp\empty }
\newcommand{\U}{\ensuremath{\mathbf{U}}}
\newcommand{\V}{\ensuremath{\mathbf{V}}}
\newcommand{\W}{\ensuremath{\mathbf{W}}}
\renewcommand{\b}{\ensuremath{\mathbf{b}}}
\newcommand{\w}{\ensuremath{\mathbf{w}}}
\newcommand{\x}{\ensuremath{\mathbf{x}}}
\newcommand{\y}{\ensuremath{\mathbf{y}}}
\newcommand{\z}{\ensuremath{\mathbf{z}}}
\newcommand{\0}{\ensuremath{\mathbf{0}}}
\newcommand{\1}{\ensuremath{\mathbf{1}}}
\newcommand{\blambda}{\ensuremath{\boldsymbol{\lambda}}}
\newcommand{\btheta}{\ensuremath{\boldsymbol{\theta}}}
\newcommand{\bDelta}{\ensuremath{\boldsymbol{\Delta}}}
\newcommand{\bbR}{\ensuremath{\mathbb{R}}}
\newcommand{\calC}{\ensuremath{\mathcal{C}}}
\newcommand{\calN}{\ensuremath{\mathcal{N}}}
\newcommand{\calO}{\ensuremath{\mathcal{O}}}
\newcommand{\ceil}[1]{\lceil#1\rceil}
\newcommand{\norm}[2][]{%
  \ifempty{#1} {\left\lVert#2\right\rVert} \else {#1\lVert#2#1\rVert} \fi}
\newcommand{\caja}[4][1]{{%
    \renewcommand{\arraystretch}{#1}%
    \begin{tabular}[#2]{@{}#3@{}}%
      #4%
    \end{tabular}%
    }}
\DeclareMathOperator*{\argmin}{arg\,min}
\newcommand{\rankop}{\operatorname{rank}}
\newcommand{\rank}[1]{\ensuremath{\rankop\left(#1\right)}}
\theoremstyle{plain}
\newtheorem{thm}{Theorem}[section]
\newtheorem*{lemma*}{Lemma}
\newtheorem*{prop*}{Proposition}
\theoremstyle{definition}
\newtheorem*{defn*}{Definition}
\newtheorem*{exmp*}{Example}
\newtheorem*{conj*}{Conjecture}
\theoremstyle{remark}
\newtheorem*{rmk*}{Remark}
\DeclareMathOperator{\bits}{\texttt{bits}}
\DeclareMathOperator{\layers}{\texttt{layers}}
\DeclareMathOperator{\params}{\texttt{weights}}
\newcommand{\na}{\text{--}}
\newcommand\num{\scalebox{0.8}{\raisebox{0.4ex}{\#}}}
\title{Model compression as constrained optimization, \\ with application to neural nets. \\Part V: combining compressions}
\author{
  Miguel {\'A}.\ Carreira-Perpi{\~n}{\'a}n \hspace{5ex} Yerlan Idelbayev\\
  Dept.\ of Computer Science \& Engineering, University of California, Merced \\
  {\url{http://eecs.ucmerced.edu}} \\
  {\url{https://github.com/UCMerced-ML/LC-model-compression}}
}
\date{July 9, 2021}
\begin{document}

\maketitle

\begin{abstract}
Model compression is generally performed by using quantization, low-rank approximation or pruning, for which various algorithms have been researched in recent years. One fundamental question is: what types of compression work better for a given model? Or even better: can we improve by combining compressions in a suitable way? We formulate this generally as a problem of optimizing the loss but where the weights are constrained to equal an additive combination of separately compressed parts; and we give an algorithm to learn the corresponding parts' parameters. Experimentally with deep neural nets, we observe that 1) we can find significantly better models in the error-compression space, indicating that different compression types have complementary benefits, and 2) the best type of combination depends exquisitely on the type of neural net. For example, we can compress ResNets and AlexNet using only 1 bit per weight without error degradation at the cost of adding a few floating point weights. However, VGG nets can be better compressed by combining low-rank with a few floating point weights.

\end{abstract}

\begin{figure*}[t]
  \centering
  \begin{tabular}{@{}c@{\hspace{0.05\linewidth}}c@{\hspace{0.05\linewidth}}c@{}}
    reference & low-rank (L) & pruning (P) \\
    \psfrag{W}[][]{\W}
    \includegraphics[width=0.30\linewidth]{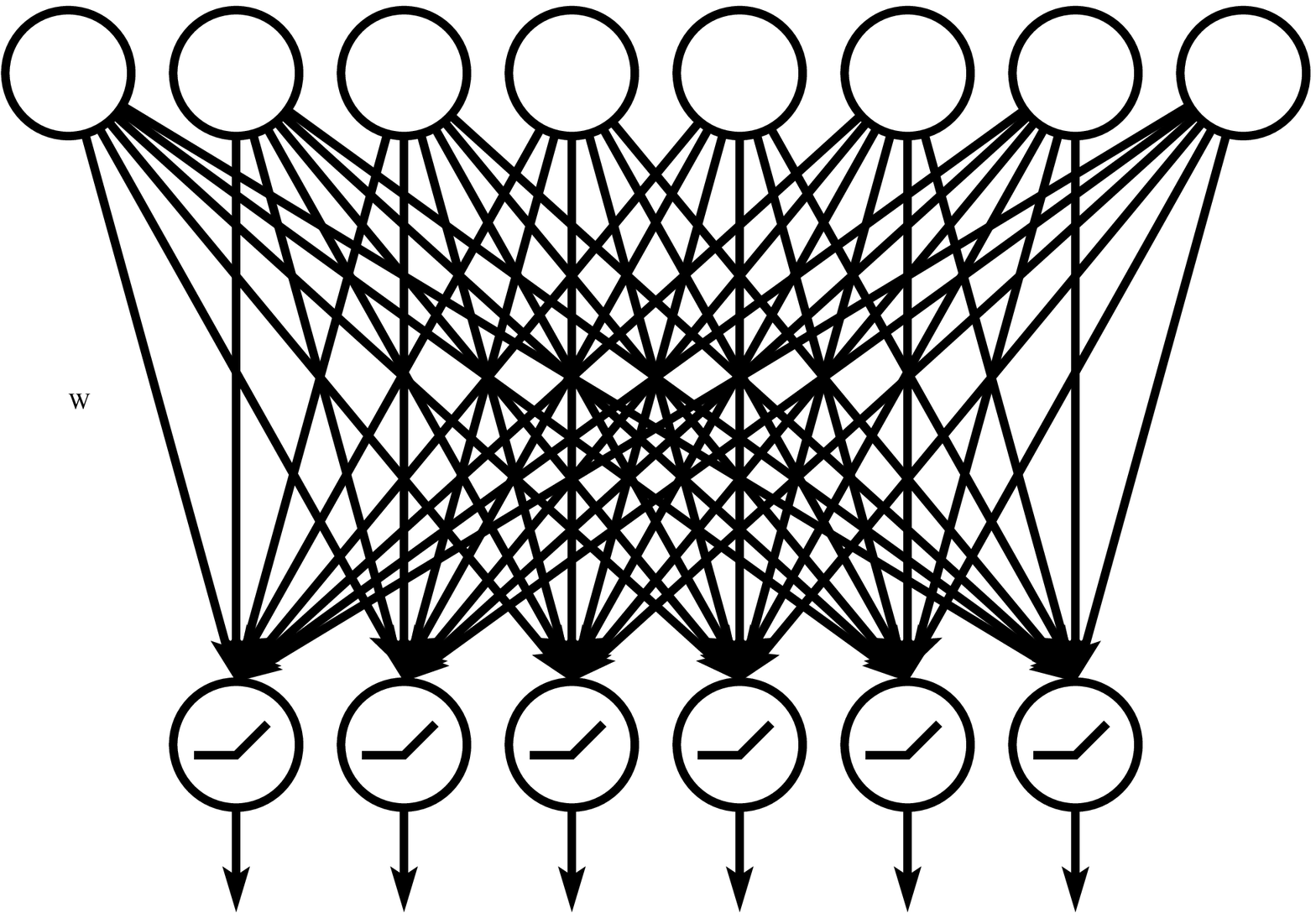} &
    \psfrag{W}[l][r]{$\W_1$}
    \includegraphics[width=0.30\linewidth]{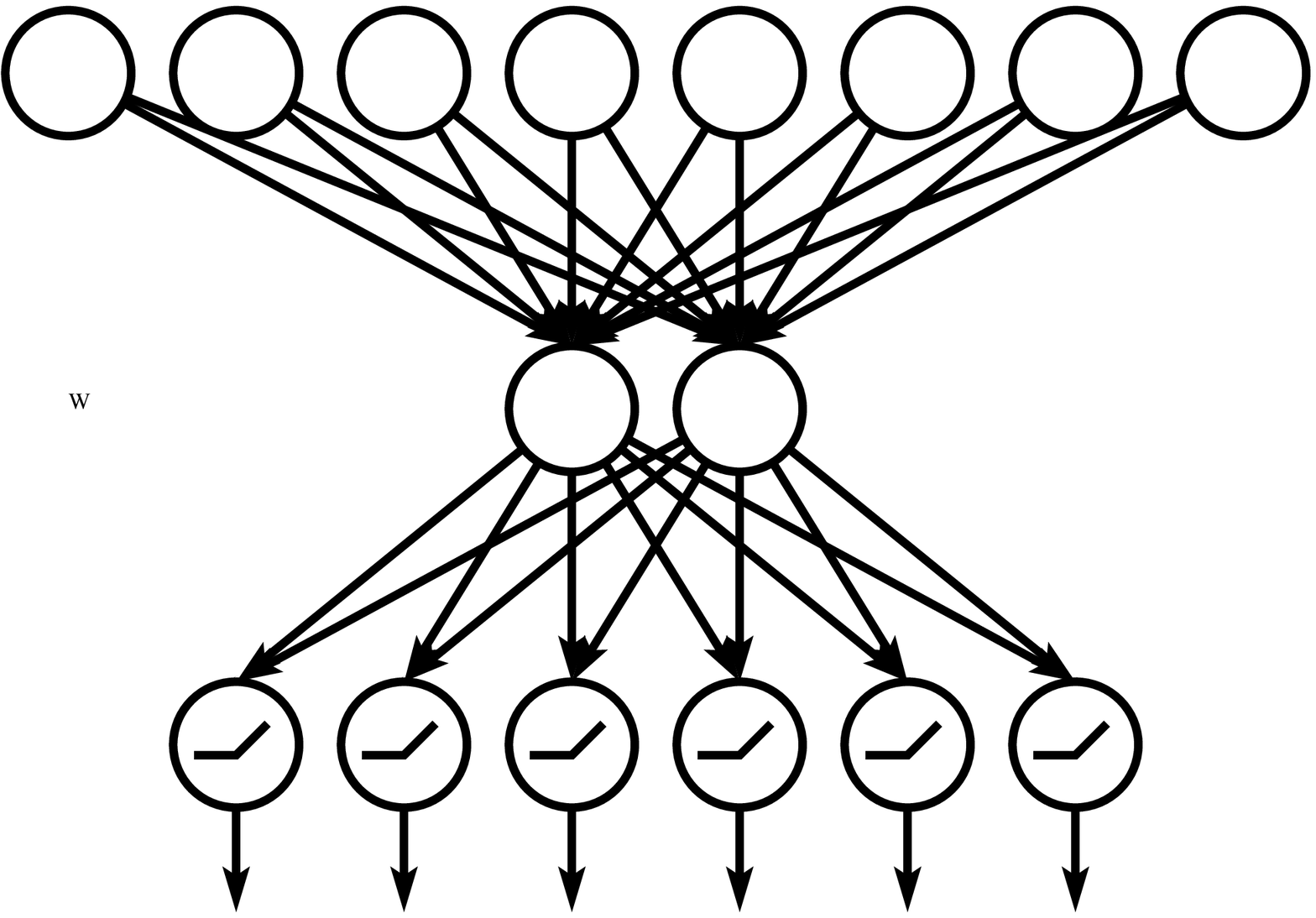} &
    \psfrag{W}[r][r]{$\W_2$\hspace{-0.8ex}}
    \includegraphics[width=0.30\linewidth]{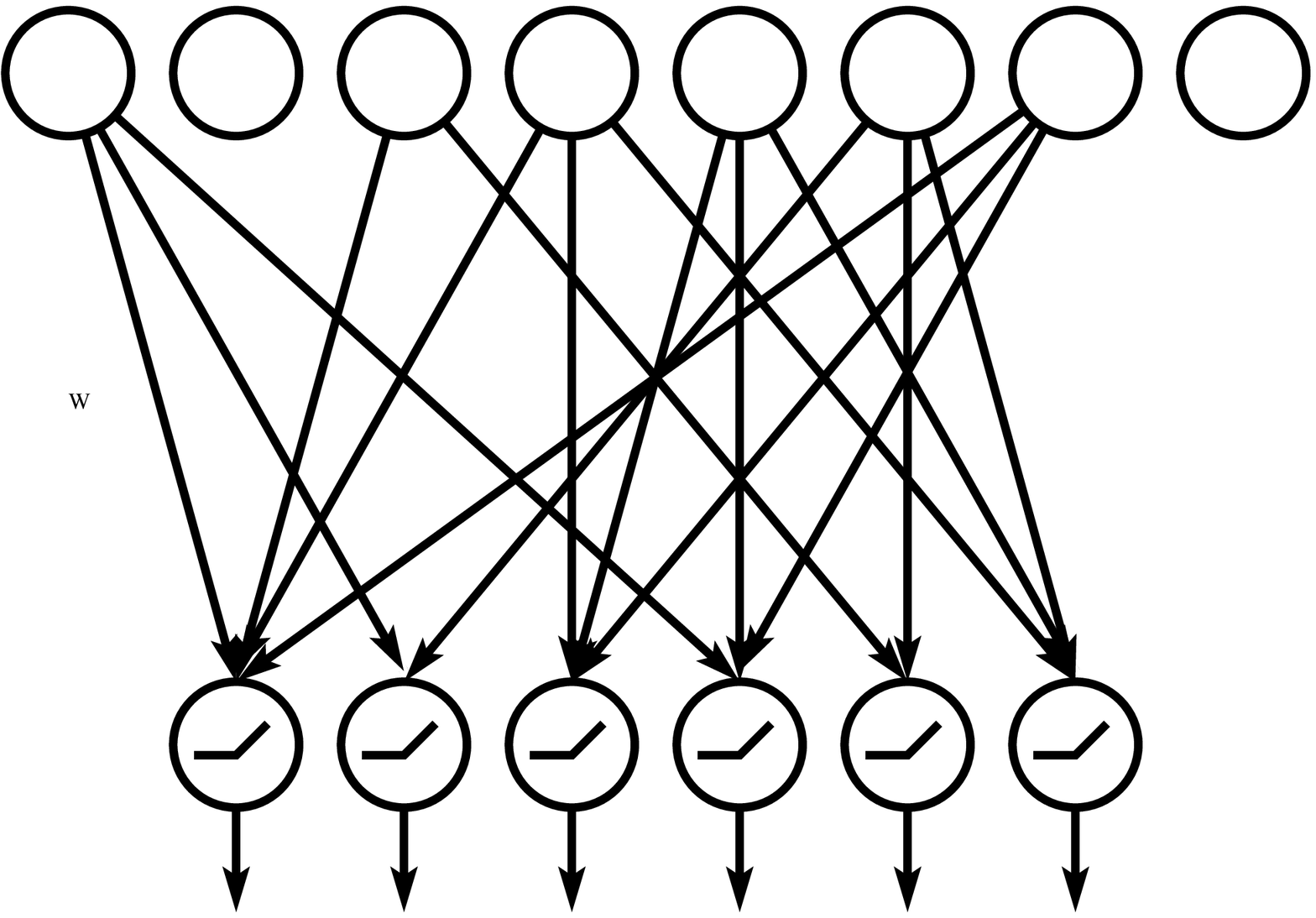} \\[2ex]
    binarization (B) & L+B & \makebox[0pt][c]{P + B} \\
    \psfrag{W}[r][r]{$\W_3$\hspace{-0.8ex}}
    \includegraphics[width=0.30\linewidth]{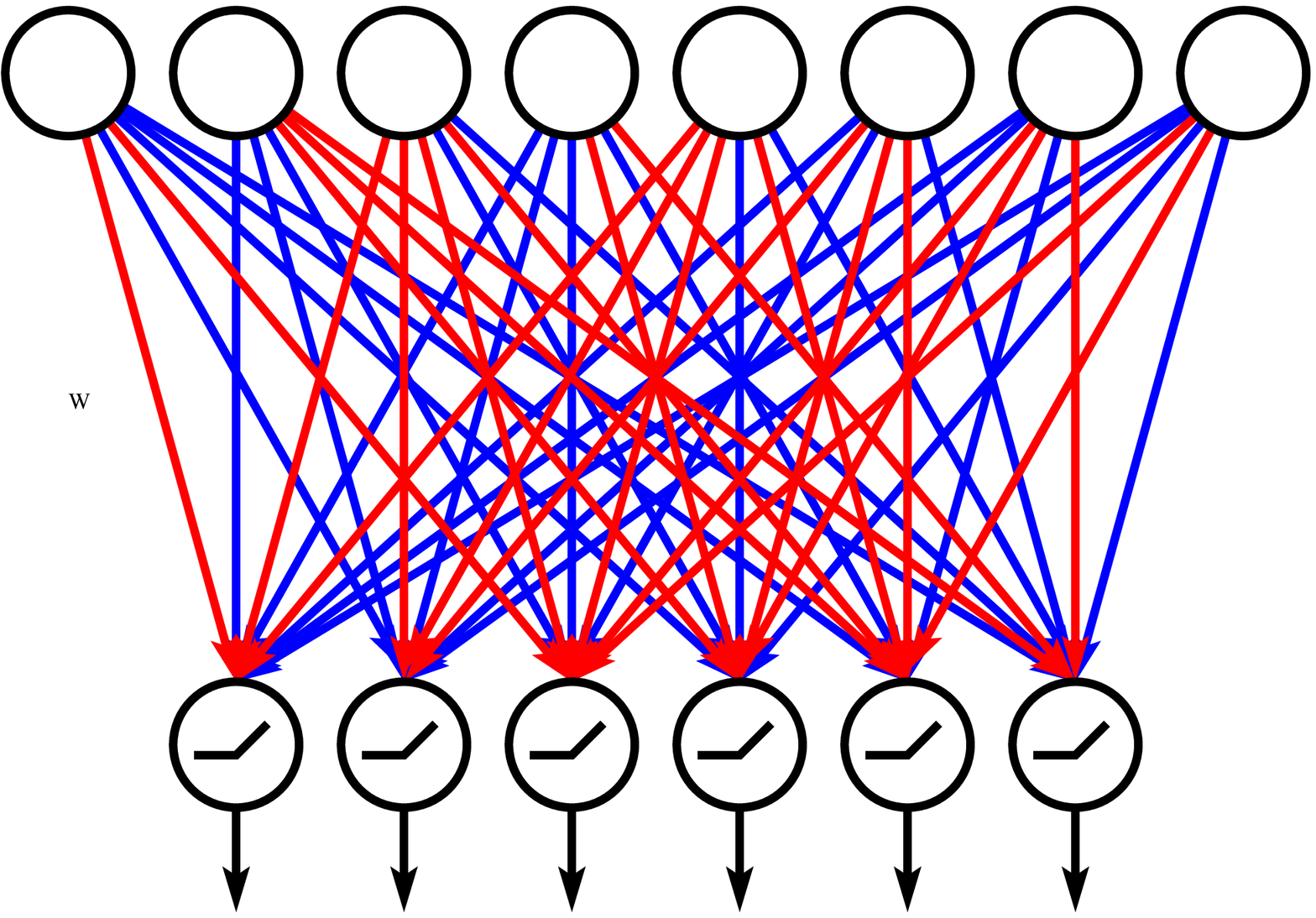} &
    \psfrag{W}[r][r]{\caja[0.7]{c}{c}{\\ $\W_1$ \\ $+$ \\ $\W_2$}\hspace{-0.8ex}}
    \includegraphics[width=0.30\linewidth]{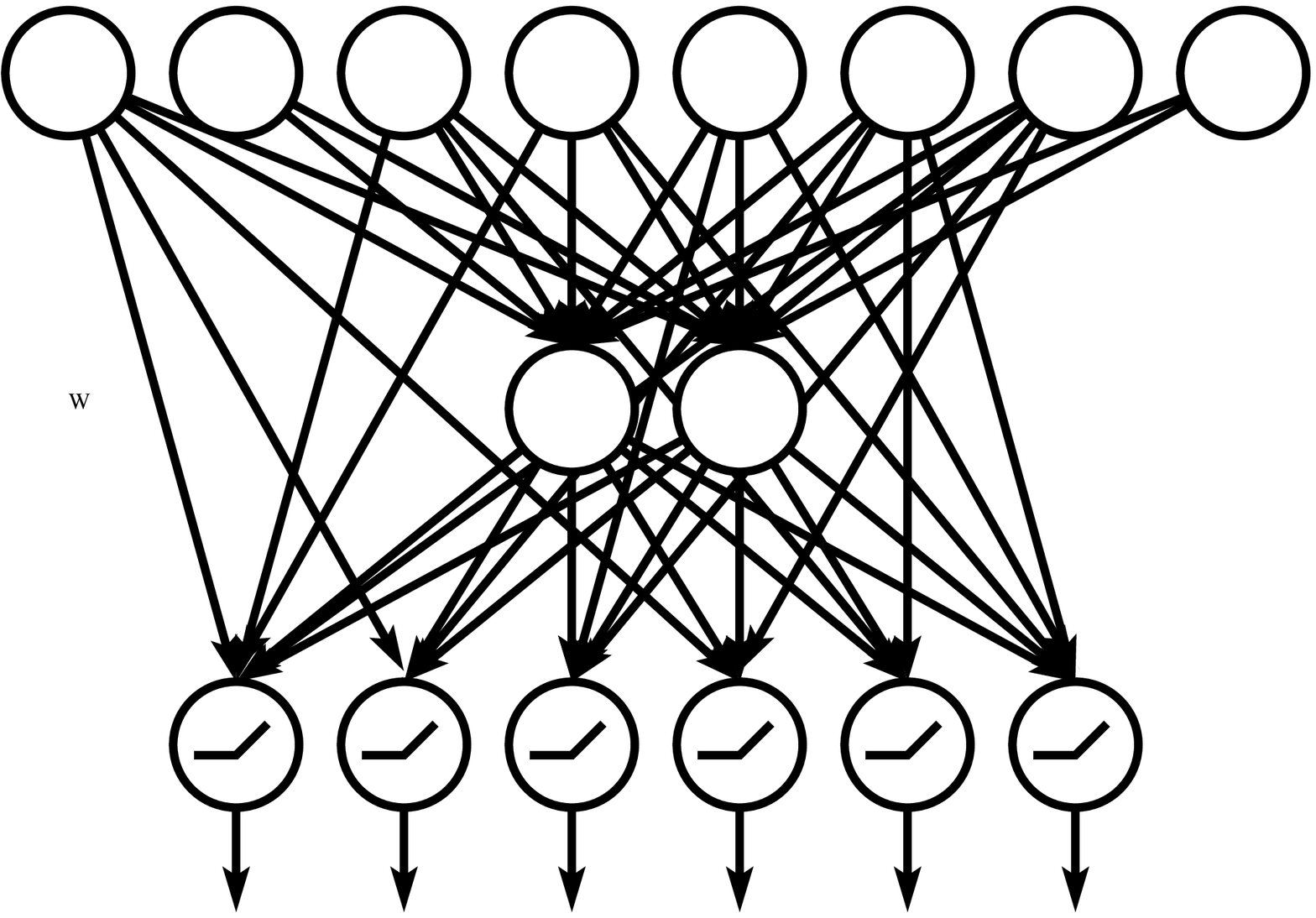} &
    \psfrag{W}[r][r]{\caja[0.7]{c}{c}{\\ $\W_2$ \\ $+$ \\ $\W_3$}\hspace{-0.8ex}}
    \includegraphics[width=0.30\linewidth]{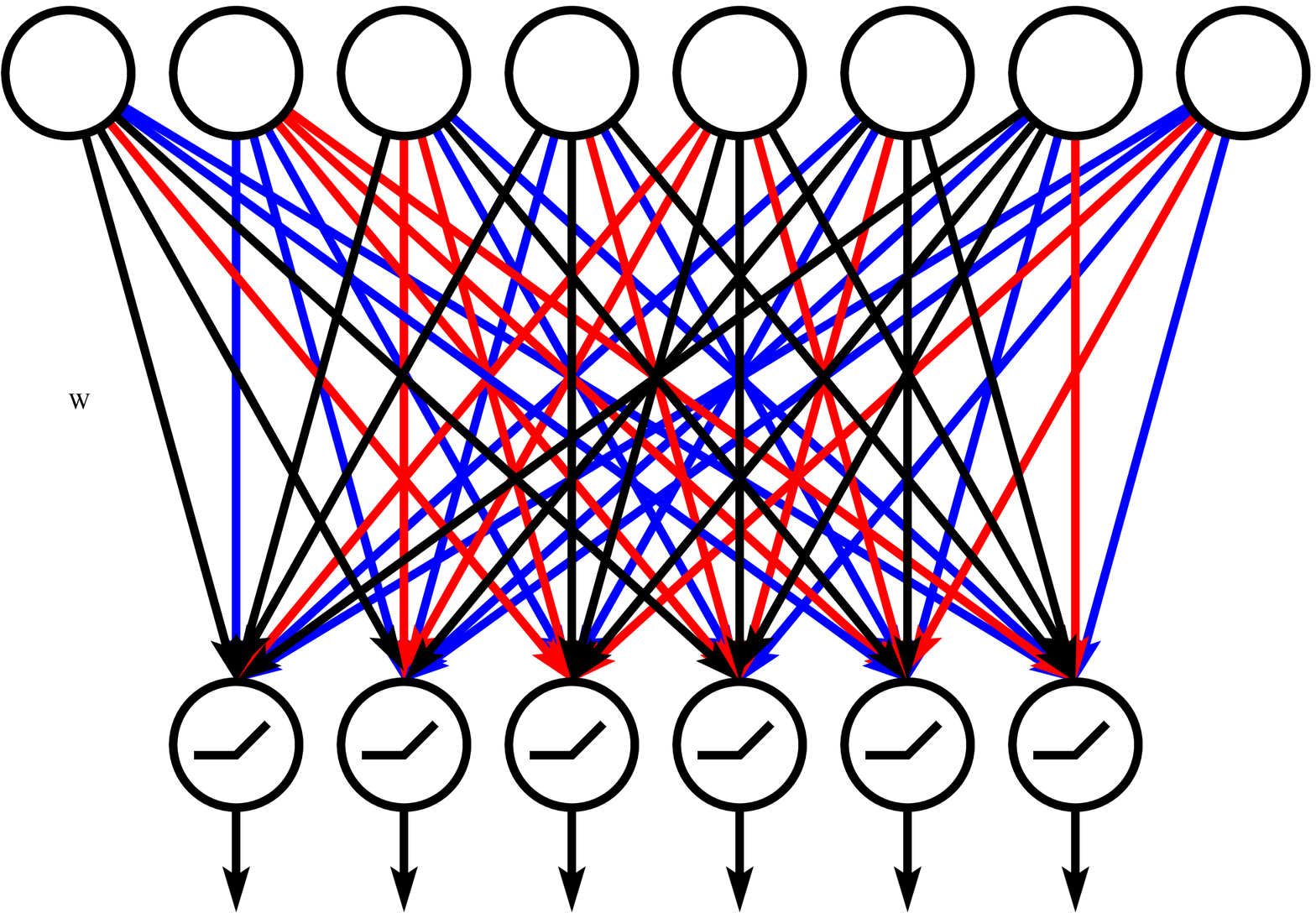}
  \end{tabular}
  \caption{Illustration of compression by additive combination $\W = \W_1 + \W_2 + \W_3$. Black weights are real, red weights are $-1$ and blue weights are $+1$.}
  \label{f:arch}
\end{figure*}

\section{Introduction}
\label{s:intro}

In machine learning, model compression is the problem of taking a neural net or some other model, which has been trained to perform (near)-optimal prediction in a given task and dataset, and transforming it into a model that is smaller (in size, runtime, energy or other factors) while maintaining as good a prediction performance as possible. This problem has recently become important and actively researched because of the large size of state-of-the-art neural nets, trained on large-scale GPU clusters without constraints on computational resources, but which cannot be directly deployed in IoT devices with much more limited capabilities.

The last few years have seen much work on the topic, mostly focusing on specific forms of compression, such as quantization, low-rank matrix approximation and weight pruning, as well as variations of these. These papers typically propose a specific compression technique and a specific algorithm to compress a neural net with it. The performance of these techniques individually varies considerably from case to case, depending on the algorithm (some are better than others) but more importantly on the compression technique. This is to be expected, because (just as happens with image or audio compression) some techniques achieve more compression for certain types of signals.

A basic issue is the representation ability of the compression: given an optimal point in model space (the weight parameters for a neural net), which manifold or subset of this space can be compressed exactly, and is that subset likely to be close to the optimal model for a given machine learning task? For example, for low-rank compression the subset contains all matrices of a given rank or less. Is that a good subset to model weight matrices arising from, say, deep neural nets for object classification?

One way to understand this question is to try many techniques in a given task and gain experience about what works in which case. This is made difficult by the multiplicity of existing algorithms, the heuristics often used to optimize the results experimentally (which are compounded by the engineering aspects involved in training deep nets, to start with), and the lack at present of an apples-to-apples evaluation in the field of model compression.

A second way to approach the question which partly sidesteps this problem is to use a common algorithm that can handle any compression technique. While compressing a deep net in a large dataset will still involve careful selection of optimization parameters (such as SGD learning rates), having a common algorithmic framework should put different compression techniques in the same footing. Yet a third approach, which we propose in this paper, is to \emph{combine} several techniques (rather than try each in isolation) while jointly optimizing over the parameters of each (codebook and assignments for quantization, component matrices for low-rank, subset and value of nonzero weights for pruning, etc.).

There are multiple ways to define a combination of compression techniques. One that is simple to achieve is by applying compression techniques sequentially, such as first pruning the weights, then quantizing the remaining nonzero weights and finally encoding them with Huffman codes \citep{Han_16a}. This is suboptimal in that the global problem is solved greedily, one compression at a time. The way we propose here is very different: an \emph{additive combination} of compression techniques. For example, we may want to compress a given weight matrix \W\ as the sum (or linear combination) $\W = \W_1 + \W_2 + \W_3$ of a low-rank matrix $\W_1$, a sparse matrix $\W_2$ and a quantized matrix $\W_3$. This introduces several important advantages. First, it contains as a particular case each technique in isolation (e.g., quantization by making $\W_1 = \0$ a zero-rank matrix and $\W_2 = \0$ a matrix with no nonzeros). Second, and critically, it allows techniques to help each other because of having complementary strengths. For example, pruning can be seen as adding a few elementwise real-valued corrections to a quantized or low-rank weight matrix. This could result (and does in some cases) in using fewer bits, lower rank and fewer nonzeros and a resulting higher compression ratio (in memory or runtime). Third, the additive combination vastly enlarges the subset of parameter space that can be compressed without loss compared to the individual compressions. This can be seen intuitively by noting that a fixed vector times a scalar generates a 1D space, but the additive combination of two such vectors generates a 2D space rather than two 1D spaces).

One more thing remains to make this possible: a formulation and corresponding algorithm of the compression problem that can handle such additive combinations of arbitrary compression techniques. We rely on the previously proposed ``learning-compression (LC)'' algorithm \citep{Carreir17a}. This explicitly defines the model weights as a function (called \emph{decompression mapping}) of low-dimensional compression parameters; for example, the low-rank matrix above would be written as $\W_1 = \U \V^T$. It then iteratively optimizes the loss but constraining the weights to take the desired form (an additive combination in our case). This alternates \emph{learning (L)} steps that train a regularized loss over the original model weights with \emph{compression (C)} steps that compress the current weights, in our case according to the additive compression form.

Next, we review related work (section~\ref{s:related}), describe our problem formulation (section~\ref{s:MCCO}) and corresponding LC algorithm (section~\ref{s:LC}), and demonstrate the claimed advantages with deep neural nets (sections \ref{s:expts}, \ref{s:expts_imagenet}). A shorter version of this paper appears as \cite{IdelbayCarreir21e}.

\section{Related work}
\label{s:related}

\subsection{General approaches}

In the literature of model and particularly neural net compression, various approaches have been studied, including most prominently weight quantization, weight pruning and low-rank matrix or tensor decompositions. There are other approaches as well, which can be potentially used in combination with. We briefly discuss the individual techniques first. Quantization is a process of representing each weight with an item from a codebook. This can be achieved through fixed codebook schemes, i.e., with predetermined codebook values that are not learned (where only the assignments should be optimized). Examples of this compression are binarization, ternarization, low-precision, fixed-point or other number representations \citep{Rasteg_16a,Li_16b}. Quantization can also be achieved through adaptive codebook schemes, where the codebook values are learned together with the assignment variables, with algorithms based on soft quantization \citep{NowlanHinton92a,Agusts_17a} or hard quantization \citep{Han_16a}. Pruning is a process of removal of weights (unstructured) or filters and neurons (structured). It can be achieved by salience ranking \citep{Lecun_90a,Han_16a} in one go or over multiple refinements, or by using sparsifying norms \citep{CarreirIdelbay18a,Ye_18a}. Low-rank approximation is a process of replacing weights with low-rank \citep{Wen_17a,Kim_19a,Xu_20a,IdelbayCarreir21b} or tensors-decomposed versions \citep{Novikov_15a}.

\subsection{Usage of combinations}
One of the most used combinations is to apply compressions sequentially, most notably first to prune weights and then to quantize the remaining ones \citep{Han_16a,Han_16a,Choi_17a,TungMori18a,Yang_20b}, which may possibly be further compressed via lossless coding algorithms (e.g., Huffman coding). Additive combination of quantizations \citep{BabenkLempit14a,Zhou_17a,Xu_18b}, where weights are the sum of quantized values, as well as low-rank + sparse combination \cite{AlvarezSalzman17a, Yu_17a} has been used to compress neural networks. However, these methods rely on optimization algorithms highly specialized to a problem, limiting its application to new combinations (e.g., quantization + low-rank).

\section{Compression via an additive combination as constrained optimization}
\label{s:MCCO}

Our basic formulation is that we define the weights as an additive combination of weights, where each term in the sum is individually compressed in some way. Consider for simplicity the case of adding just two compressions for a given matrix%
\footnote{The general case of multiple compressions, possibly applied separately to each layer of a net and not necessarily in matrix form, follows in an obvious way. Throughout the paper we use \W\ or \w\ or $w$ to notate matrix, vector or scalar weights as appropriate (e.g., \W\ is more appropriate for low-rank decomposition).}.
We then write a matrix of weights as $\W = \bDelta_1(\btheta_1) + \bDelta_2(\btheta_2)$, where $\btheta_i$ is the low-dimensional parameters of the $i$th compression and $\bDelta_i$ is the corresponding decompression mapping. Formally, the $\bDelta$ maps a compressed representation of the weight matrix \btheta{} to the real-valued, uncompressed weight matrix \W. Its intent is to represent the space of matrices that can be compressed via a constraint subject to which we optimize the loss of the model in the desired task (e.g., classification). That is, a constraint $\W = \bDelta(\btheta)$ defines a feasible set of compressed models. For example:
\begin{itemize}
\item Low-rank: $\W = \U \V^T$ with \U\ and \V\ of rank $r$, so $\btheta = (\U, \V)$.
\item Pruning: $\w = \btheta$ s.t.\ $\norm{\btheta}_0 \le \kappa$, so \btheta\ is the indices of its nonzeros and their values.
\item Scalar quantization: $w = \sum^K_{k=1}{z_k c_k}$ with assignment variables $\z \in \{0,1\}^K$, $\1^T\z = 1$ and codebook $\calC = \{c_1,\dots,c_K\} \subset \bbR$, so $\btheta = (\z, \calC)$.
\item Binarization: $w \in \{-1,+1\}$ or equivalently a scalar quantization with $\calC = \{-1,+1\}$.
\end{itemize}
Note how the mapping $\bDelta(\btheta)$ and the low-dimensional parameters \btheta\ can take many forms (involving scalars, matrices or other objects of continuous or discrete type) and include constraints on \btheta. Then, our problem formulation takes the form of \emph{model compression as constrained optimization} \citep{Carreir17a} and given as:
\begin{equation}
  \label{e:MCCO}
  \min_{\w}{ L(\w) } \quad \text{s.t.} \quad \w = \bDelta_1(\btheta_1) + \bDelta_2(\btheta_2).
\end{equation}
This expresses in a mathematical way our desire that 1) we want a model with minimum loss on the task at hand ($L(\w)$ represents, say, the cross-entropy of the original deep net architecture on a training set); 2) the model parameters \w\ must take a special form that allows them to be compactly represented in terms of low-dimensional parameters $\btheta = (\btheta_1, \btheta_2)$; and 3) the latter takes the form of an additive combination (over two compressions, in the example). Problem~\eqref{e:MCCO} has the advantage that it is amenable to modern techniques of numerical optimization, as we show in section~\ref{s:LC}.

Although the expression ``$\w = \bDelta_1(\btheta_1) + \bDelta_2(\btheta_2)$'' is an addition, it implicitly is a linear combination because the coefficients can typically be absorbed inside each $\bDelta_i$. For example, writing $\alpha \U \V^T$ (for low-rank compression) is the same as, say, $\U' \V^T$ with $\U' = \alpha \U$. In particular, any compression member may be implicitly removed by becoming zero. Some additive combinations are redundant, such as having both $\W_1$ and $\W_2$ be of rank at most $r$ (since $\rank{\W_1 + \W_2} \le 2r$) or having each contain at most $\kappa$ nonzeros (since $\norm{\smash{\W_1 + \W_2}}_0 \le 2\kappa$).

The additive combination formulation has some interesting consequences. First, \emph{an additive combination of compression forms can be equivalently seen as a new, learned deep net architecture}. For example (see fig.~\ref{f:arch}), low-rank plus pruning can be seen as a layer with a linear bottleneck and some skip connections which are learned (i.e., which connections to have and their weight value). It is possible that such architectures may be of independent interest in deep learning beyond compression. Second, while \emph{pruning in isolation means (as is usually understood) the removal of weights from the model, pruning in an additive combination means the addition of a few elementwise real-valued corrections}. This can potentially bring large benefits. As an extreme case, consider binarizing both the multiplicative and additive (bias) weights in a deep net. It is known that the model's loss is far more sensitive to binarizing the biases, and indeed compression approaches generally do not compress the biases (which also account for a small proportion of weights in total). In binarization plus pruning, all weights are quantized but we learn which ones need a real-valued correction for an optimal loss. Indeed, our algorithm is able to learn that the biases need such corrections more than other weights (see corresponding experiment in appendix~\ref{section:biases}).

\subsection{Well known combinations} Our motivation is to combine generically existing compressions in the
  context of model compression. However, some of the combinations are well known and extensively studied. Particularly, low-rank + sparse combination has been used in its own right in the fields of compressed sensing \cite{Candes_11a}, matrix decomposition \cite{ZhouTao11a}, and image processing \cite{BouwmanZahzah16a}. This combination enjoys certain theoretical guarantees \cite{Candes_11a,Chandr_10a}, yet it is unclear whether similar results can be stated over more general additive combinations (e.g., with non-differentiable scheme like quantization) or when applied to non-convex models as deep nets.

\subsection{Hardware implementation}
The goal of model compression is to implement in practice the compressed model based on the \btheta\ parameters, not the original weights \W. With an additive combination, the implementation is straightforward and efficient by applying the individual compressions sequentially and cumulatively. For example, say $\W = \W_1 + \W_2$ is a weight matrix in a layer of a deep net and we want to compute the layer's output activations $\sigma(\W\x)$ for a given input vector of activations \x\ (where $\sigma(\cdot)$ is a nonlinearity, such as a ReLU). By the properties of linearity, $\W\x = \W_1\x + \W_2\x$, so we first compute $\y = \W_1\x$ according to an efficient implementation of the first compression, and then we accumulate $\y = \y + \W_2\x$ computed according to an efficient implementation of the second compression. This is particularly beneficial because some compression techniques are less hardware-friendly than others. For example, quantization is very efficient and cache-friendly, since it can store the codebook in registers, access the individual weights with high memory locality, use mostly floating-point additions (and nearly no multiplications), and process rows of $\W_1$ in parallel. However, pruning has a complex, nonlocal pattern of nonzeros whose locations must be stored. Combining quantization plus pruning not only can achieve higher compression ratios than either just quantization or just pruning, as seen in our experiments; it can also reduce the number of bits per weight and (drastically) the number of nonzeros, thus resulting in fewer memory accesses and hence lower runtime and energy consumption.

\section{Optimization via a learning-compression algorithm}
\label{s:LC}

Although optimizing~\eqref{e:MCCO} may be done in different ways for specific forms of the loss $L$ or the decompression mapping constraint $\bDelta$, it is critical to be able to do this in as generic way as possible, so it applies to any combination of forms of the compressions, loss and model. Following Carreira-Perpinan \cite{Carreir17a}, we apply a penalty method and then alternating optimization. We give the algorithm for the quadratic-penalty method \citep{NocedalWright06a}, but we implement the augmented Lagrangian one (which works in a similar way but with the introduction of a Lagrange multiplier vector \blambda\ of the same dimension as \w). We then optimize the following while driving a penalty parameter $\mu \to \infty$:
\begin{equation}
  \label{e:QP}
  Q(\w,\btheta;\mu) = L(\w) + \frac{\mu}{2} \norm{\w - \bDelta_1(\btheta_1) - \bDelta_2(\btheta_2)}^2  
\end{equation}
by using alternating optimization over \w\ and \btheta. The step over \w\ (``learning (L)'' step) has the form of a standard loss minimization but with a quadratic regularizer on \w\ (since $\bDelta_1(\btheta_1) + \bDelta_2(\btheta_2)$ is fixed), and can be done using a standard algorithm to optimize the loss, e.g., SGD with deep nets. The step over \btheta\ (``compression (C)'' step) has the following form:
\begin{equation}
  \label{e:Cstep}
  \min_{\btheta}{ \norm{\w - \bDelta(\btheta)}^2 } \, \Leftrightarrow \, \min_{\btheta_1,\btheta_2}{ \norm{\w - \bDelta_1(\btheta_1) - \bDelta_2(\btheta_2)}^2 }.
\end{equation}

In the original LC algorithm \cite{Carreir17a}, this step (over just a single compression $\bDelta(\btheta)$) typically corresponds to a well-known compression problem in signal processing and can be solved with existing algorithms. This gives the LC algorithm a major advantage: in order to change the compression form, we simply call the corresponding subroutine in this step (regardless of the form of the loss and model). For example, for low-rank compression the solution is given by a truncated SVD, for pruning by thresholding the largest weights, and for quantization by $k$-means. It is critical to preserve that advantage here so that we can handle in a generic way an arbitrary additive combination of compressions. Fortunately, we can achieve this by applying alternating optimization again but now to~\eqref{e:Cstep} over $\btheta_1$ and $\btheta_2$, as follows%
\footnote{This form of iterated ``fitting'' (here, compression) by a ``model'' (here, $\bDelta_1$ or $\bDelta_2$) of a ``residual'' (here, $\w - \bDelta_2(\btheta_2)$ or $\w - \bDelta_1(\btheta_1)$) is called backfitting in statistics, and is widely used with additive models \citep{HastieTibshir90a}.}:
\begin{equation}
  \begin{split}
    \label{e:Cstep1}
    \hspace{-1ex}\btheta_1 = \argmin_{\btheta}{ \norm{ (\w - \bDelta_2(\btheta_2)) - \bDelta_1(\btheta)}^2 } \\ \btheta_2 = \argmin_{\btheta}{ \norm{(\w - \bDelta_1(\btheta_1)) - \bDelta_2(\btheta)}^2 }
  \end{split}
\end{equation}
Each problem in~\eqref{e:Cstep1} now does have the standard compression form of the original LC algorithm and can again be solved by an existing algorithm to compress optimally according to $\bDelta_1$ or $\bDelta_2$. At the beginning of each C step, we initialize $\btheta$ from the previous C step's result (see Alg.~\ref{alg:psuedocode}).

\begin{algorithm}[t]
\begin{center}
\small
    \begin{minipage}[c]{0.99\linewidth}
      \begin{tabbing}
        n \= n \= n \= n \= n \= \kill
        \underline{\textbf{input}} \caja{t}{l}{training data, neural net architecture with weights \w } \\
        $\w \leftarrow \argmin_{\w}{ L(\w) }$ \` {\small\textsf{reference net}} \\
        $\btheta_1, \btheta_2 \leftarrow \argmin_{\btheta_1, \btheta_2} \norm{\w - \bDelta_1(\btheta_1) - \bDelta_2(\btheta_2)}^2$ \` {\small\textsf{init}} \\
        \underline{\textbf{for}} $\mu = \mu_0 < \mu_1 < \dots < \infty$ \+ \\
        $\w \leftarrow \argmin_{\w}{ L(\w) + \smash{\frac{\mu}{2} \norm{\smash{\w - \bDelta_1(\btheta_1) -\bDelta_2(\btheta_2)}}^2} }$ \` {\small\textsf{L step}} \\
        \underline{\textbf{while}} alternation does not converge \+ \\
        $\btheta_1 \leftarrow \argmin_{\btheta_1} \norm{\left(\w-\bDelta_2(\btheta_2)\right) - \bDelta_1(\btheta_1)}^2$ \` {\small\raisebox{0pt}[0pt][0pt]{$\left.\rule{0cm}{0.7cm}\right\}\quad$}\textsf{C step}} \\
        $\btheta_2 \leftarrow \argmin_{\btheta_2} \norm{\left(\w - \bDelta_1(\btheta_1)\right) -\bDelta_2(\btheta_2)}^2$ \-\\
        \textbf{if} $\norm{\w - \bDelta_1(\btheta_1) -\bDelta_2(\btheta_2)}$ is small enough \textbf{then} exit the loop \- \\
        \underline{\textbf{return}} $\w, \btheta_1, \btheta_2$ 
      \end{tabbing}
    \end{minipage}
    \caption{Pseudocode (quadratic-penalty version)}
    \label{alg:psuedocode}
\end{center}
\end{algorithm}

It is possible that a better algorithm exists for a specific form of additive combination compression \eqref{e:Cstep}. In such case we can employ specialized version during the C step. But our proposed alternating optimization~\eqref{e:Cstep1} provides a generic, efficient solution as long as we have a good algorithm for each individual compression.

Convergence of the alternating steps~\eqref{e:Cstep1} to a global optimum of~\eqref{e:Cstep} over $(\btheta_1,\btheta_2)$ can be proven in some cases, e.g., low-rank + sparse \cite{ZhouTao11a}, but not in general, as one would expect since some of the compression problems involve discrete and continuous variables and can be NP-hard (such as quantization with an adaptive codebook). Convergence can be established quite generally for convex functions \citep{BeckTetruas13a,Wright16a}. For nonconvex functions, convergence results are complex and more restrictive \citep{Tseng01a}. One simple case where convergence occurs is if the objective in~\eqref{e:Cstep} (i.e., each $\bDelta_i$) is continuously differentiable and it has a unique minimizer over each $\btheta_i$ \citep[Proposition 2.7.1]{Bertsek99a}. However, in certain cases the optimization can be solved exactly without any alternation. We give a specific result next.

\subsection{Exactly solvable C step}
\label{s:exact_c_step}
Solution of the C step (eq.~\ref{e:Cstep}) does not need to be an alternating optimization. Below we give an exact algorithm for the additive combination of fixed codebook quantization (e.g., $\{-1,+1\}$, $\{-1,0,+1\}$, etc.) and sparse corrections.

\begin{thm}[Exactly solvable C step for combination of fixed codebook quantization + sparse corrections]
  \label{th:fixed_codebook}
  Given a fixed codebook $\calC$ consider compression of the weights $w_i$ with an additive combinations of quantized values $q_i \in \calC$ and sparse corrections $s_i$:
\begin{equation}
\label{eq:fixed_sparse}
{
\min_{q,s}{ \sum_i{ (w_i - (q_i+s_i))^2 } } \quad \text{s.t.} \quad \norm{s}_0 \leq \kappa,}
\end{equation}  
  Then the following provides one optimal solution $(q^*,s^*)$: first set
  $q_i^* = closest(w_i)$ in codebook for each $i$, then solve for $s$:
    $\min_s \,{ \sum_i { (w_i-q_i^* - s_i))^2 } } \text{ s.t. } \norm{s}_0 \leq \kappa$.
\end{thm}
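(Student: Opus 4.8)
The plan is to reduce the joint problem over $(q,s)$ to a purely combinatorial statement about ``the sum of the $n-\kappa$ smallest entries of a vector,'' and then exploit a monotonicity property of that functional. The point is that the recipe decouples: optimizing $q$ coordinatewise (ignoring the $\ell_0$ budget entirely) turns out to be optimal because the sparse correction then cleans up the $\kappa$ worst residuals, whichever they are.

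First I would dispose of the inner minimization over $s$. For any fixed $q$, problem~\eqref{eq:fixed_sparse} becomes $\min_{\norm{s}_0\le\kappa}\sum_i (r_i - s_i)^2$ with residual $r_i \bydef w_i - q_i$; this is the classical best $\kappa$-sparse $\ell_2$ approximation, whose optimum sets $s_i = r_i$ on the $\kappa$ indices with largest $\abs{r_i}$ and $s_i=0$ elsewhere, attaining value $\phi(q)\bydef\sum_{i\in I}(w_i-q_i)^2$, where $I$ indexes the $n-\kappa$ smallest values among $\{(w_i-q_i)^2\}$ (ordering by $\abs{r_i}$ and by $r_i^2$ coincide). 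Hence minimizing~\eqref{eq:fixed_sparse} is equivalent to $\min_{q_i\in\calC}\phi(q)$: choose the per-coordinate codebook entries so as to minimize the sum of the $n-\kappa$ smallest squared errors.

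Next I would prove the key lemma: if $a,b\in\bbR^n$ with $a_i\ge b_i$ for all $i$, then the sum of the $n-\kappa$ smallest entries of $a$ is at least the sum of the $n-\kappa$ smallest entries of $b$. This follows from the variational characterization ``(sum of $m$ smallest entries of $a$) $=\min_{\abs{S}=m}\sum_{i\in S} a_i$'': for every $S$ with $\abs{S}=n-\kappa$ one has $\sum_{i\in S}a_i\ge\sum_{i\in S}b_i\ge\min_{\abs{T}=n-\kappa}\sum_{i\in T}b_i$, and taking the minimum over $S$ gives the claim. I would then combine the two ingredients: set $d_i\bydef\min_{c\in\calC}(w_i-c)^2=(w_i-q_i^*)^2$ with $q_i^*=closest(w_i)$, so that $(w_i-q_i)^2\ge d_i$ for every admissible $q$; applying the lemma with $a_i=(w_i-q_i)^2$, $b_i=d_i$ yields $\phi(q)\ge\sum_{i\in I^*} d_i$ for all $q$, where $I^*$ indexes the $n-\kappa$ smallest $d_i$. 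But $\sum_{i\in I^*} d_i$ is exactly the objective value attained by $(q^*,s^*)$ when $s^*$ solves the stated sparse subproblem with residuals $w_i-q_i^*$. Therefore no feasible $(q,s)$ beats $(q^*,s^*)$, proving optimality. Ties (in the nearest-codebook map or in which $\kappa$ coordinates get corrected) only make the optimum non-unique, consistent with the theorem claiming merely ``one optimal solution.''

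I expect the only delicate point to be the bookkeeping in the first step: the index set $I$ of uncorrected coordinates depends on $q$, so one cannot argue coordinate by coordinate that replacing $q_i$ by $q_i^*$ can only help — a change in one coordinate can reshuffle which coordinates fall in $I$. The monotonicity lemma is precisely what bridges this gap, comparing the two vectors of squared errors globally rather than trying to align their sorted orders; everything else is routine.
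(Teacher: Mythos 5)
Your proof is correct, but it takes a genuinely different route from the paper's. The paper conditions on the support $\calN$ of $s$: given the support, the problem decouples coordinatewise and is solved exactly (on $\calN$ one takes $s_i = w_i - q_i$ with $q_i \in \calC$ arbitrary, giving zero error there; off $\calN$ one takes $s_i = 0$ and $q_i^* = closest(w_i)$), so the objective collapses to $\sum_{i\notin\calN}(w_i-q_i^*)^2$ and the outer problem over $\calN$ is just selecting the $\kappa$ largest values of $\abs{w_i-q_i^*}$. You instead eliminate $s$ first, obtaining $\phi(q)$ equal to the sum of the $n-\kappa$ smallest squared residuals, and then certify the nearest-codebook choice through the pointwise bound $(w_i-q_i)^2 \ge d_i$ combined with the monotonicity of the ``sum of the $m$ smallest entries'' functional (proved via its variational characterization), which you correctly identify as the device needed because the uncorrected index set depends on $q$, so naive coordinatewise replacement does not apply. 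What each buys: the paper's argument is more constructive and exhibits the whole family of optima (arbitrary $q_i$ on corrected coordinates, ties in the support), which is what motivates its preference for the representative with smaller $\norm{s}_1$; your argument is a lower-bound certificate that establishes optimality of the specific stated solution without enumerating alternatives, and its key lemma is reusable for any separable quantization scheme in which the per-coordinate minimal error $d_i = \min_{c\in\calC}(w_i-c)^2$ is computable. Both are elementary and complete.
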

\begin{proof}
  Imagine we know the optimal set of nonzeros of the vector $s$, which we denote as $\calN$. Then, for the
  elements not in $\calN$, the optimal solution is $s_i^*=0$ and $q_i^* = closest(w_i)$.
  For the elements in $\calN$, we can find their optimal solution by solving
  independently for each $i$:  
  \begin{equation*}
    \min_{q_i,s_i} \, { (w_i - (q_i+s_i))^2 } \quad \text{ s.t. } \quad q_i \in \calC. 
  \end{equation*}    
  The solution is $s_i^* = w_i-q_i$ for arbitrary chosen $q_i \in \calC$. Using this, we can rewrite the
  eq.~\ref{eq:fixed_sparse} as $\sum_{i \notin \calN}{ (w_i - q_i^*)^2 }$. 
  
  This is minimized by taking as set $\calN$ the $\kappa$ largest in magnitude elements of $w_i-q_i^*$ (indexed over $i$). Hence, the final solution is: 1) Set the elements of \calN{} to be the $\kappa$ largest in magnitude elements of $w_i-q_i^*$ (there may be multiple such sets, any one is valid). 2) For each $i$ in \calN: set $s_i^* = w_i-q_i^*$, and $q_i^* = $ any element in $\calC$. For each $i$ not in $\calN$: set $s_i^*=0$, $q_i^* = closest(w_i)$ (there may be 2 closest values, any one is valid).
  This contains multiple solutions. One particular one is as given in the
  theorem statement, where we set $q_i^* = closest(w_i)$ for every $i$, which is
  practically more desirable because it leads to a smaller $\ell_1$-norm of $s$.
\end{proof}
\section{Experiments on CIFAR10}
\label{s:expts}

\begin{figure*}[t!]
  \centering
  \begin{tabular}{@{}cc@{}}
    \begin{tabular}[c]{@{}l@{\hspace{0.5em}}c@{\hspace{0.7em}}c@{\hspace{0.7em}}r@{\hspace{0.7em}}r@{\hspace{0.7em}}r@{\hspace{0.7em}}r@{}}
      \toprule
      & 1bit Q + \% P & $\log L$ & \makebox[1pt][c]{$E_\text{test}$(\%)} & $\rho_{\text{s}}$ & $\rho_{+}$& $\rho_{\times}$ \\
      \midrule
      &{\textbf{R}}  &-0.80& 8.35                 & 1.00           & \textbf{1.00} & 1.00 \\
      & Q + 1.0\% P  &-0.84& 9.16                 & \textbf{22.67} & 0.97          & \textbf{30.74}\\ 
      \raisebox{0pt}[0pt][0pt]{\rotatebox{90}{\makebox[0pt][c]{  ResNet20 }}}
      & Q + 2.0\% P  &-0.92& 8.92                 & 19.44         & 0.96           & 19.74\\
      & Q + 3.0\% P  &-0.93& 8.31                 & 17.08         & 0.94           & 15.80 \\
      & Q + 5.0\% P  &-0.99& \textbf{8.26}        & 13.84         & 0.92           & 11.54 \\
      \midrule
      &{\textbf{R}}  &-0.82& 7.14        & 1.00           & \textbf{1.00} & 1.00 \\
      & Q + 1.0\% P  &-1.03& 7.57                 & \textbf{22.81} & 0.97          & \textbf{30.52}\\ 
      \raisebox{0pt}[0pt][0pt]{\rotatebox{90}{\makebox[0pt][c]{  ResNet32 }}}
      & Q + 2.0\% P  &-1.07& 7.61                 & 19.54         & 0.96           & 19.85\\
      & Q + 3.0\% P  &-1.10& 7.29                 & 17.14         & 0.94           & 15.80 \\
      & Q + 5.0\% P  &-1.14& \textbf{7.09}        & 13.84         & 0.92           & 11.56 \\
      \midrule
      &{\textbf{R}}  &-0.81& 6.58                 & 1.00           & \textbf{1.00} & 1.00 \\
      & Q + 0.5\% P  &-1.08& 6.77                 & \textbf{25.04} & 0.98          & \textbf{49.79}\\ 
      \raisebox{0pt}[0pt][0pt]{\rotatebox{90}{\makebox[0pt][c]{ResNet56 }}}
      & Q + 1.0\% P  &-1.13& 6.73                 & 22.87         & 0.97           & 32.04\\
      & Q + 2.0\% P  &-1.17& 6.70                 & 19.55         & 0.96           & 20.46\\
      & Q + 3.0\% P  &-1.18& \textbf{6.23}        & 17.11         & 0.94           & 15.98 \\
      \midrule
      &{\textbf{R}}  &-0.77& 6.02                 & 1.00           & \textbf{1.00} & 1.00 \\
      & Q + 0.5\% P  &-1.16& 6.20                 & \textbf{25.03} & 0.99          & \textbf{55.63}\\ 
      \raisebox{0pt}[0pt][0pt]{\rotatebox{90}{\makebox[0pt][c]{ResNet110 }}}
      & Q + 1.0\% P  &-1.20& 5.80                 & 22.80         & 0.98           & 35.94\\
      & Q + 2.0\% P  &-1.23& 5.66                 & 19.47         & 0.96           & 27.27 \\
      & Q + 3.0\% P  &-1.25& \textbf{5.58}         & 17.04         & 0.95           & 17.84 \\
      \bottomrule
    \end{tabular} &
    \psfrag{resnet20}[l][l]{\small ResNet20}
    \psfrag{resnet32}[l][l]{\small ResNet32}
    \psfrag{resnet56}[l][l]{\small ResNet56}
    \psfrag{resnet110}[l][l]{\small ResNet110}
    \psfrag{qpchoi}[l][l]{~\tiny P$\rightarrow$Q$\rightarrow$HC\cite{Choi_17a}}
    \psfrag{agusts17}[l][l]{~\tiny Q$\rightarrow$HC\cite{Agusts_17a}}
    \psfrag{qzhu17}[l][l]{~\tiny $\text{Q}$  \cite{Zhu_17a}}
    \psfrag{yin18b}[l][bl]{~\tiny $\text{Q}$ \cite{Yin_18b}}
    \psfrag{xu18a}[bl][tl]{\tiny $\text{Q}$ \cite{Xu_18b}}
    \psfrag{cp}[l][l]{~\tiny $\text{P}$ \cite{CarreirIdelbay18a}}
    \psfrag{liu19a}[bl][tl]{~\tiny $\text{P}$ \cite{Liu_19a}}
    \psfrag{err}[B][]{test error $E_{\text{test}}$ (\%)}
    \psfrag{loss}[][]{loss L}
    \psfrag{CR}[c][b]{storage ratio $\rho_{\text{s}}$}
    \begin{tabular}[c]{@{}r@{}}
      \psfrag{CR}[B][]{}\\[0.1em]
      \includegraphics[clip,height=0.37\linewidth]{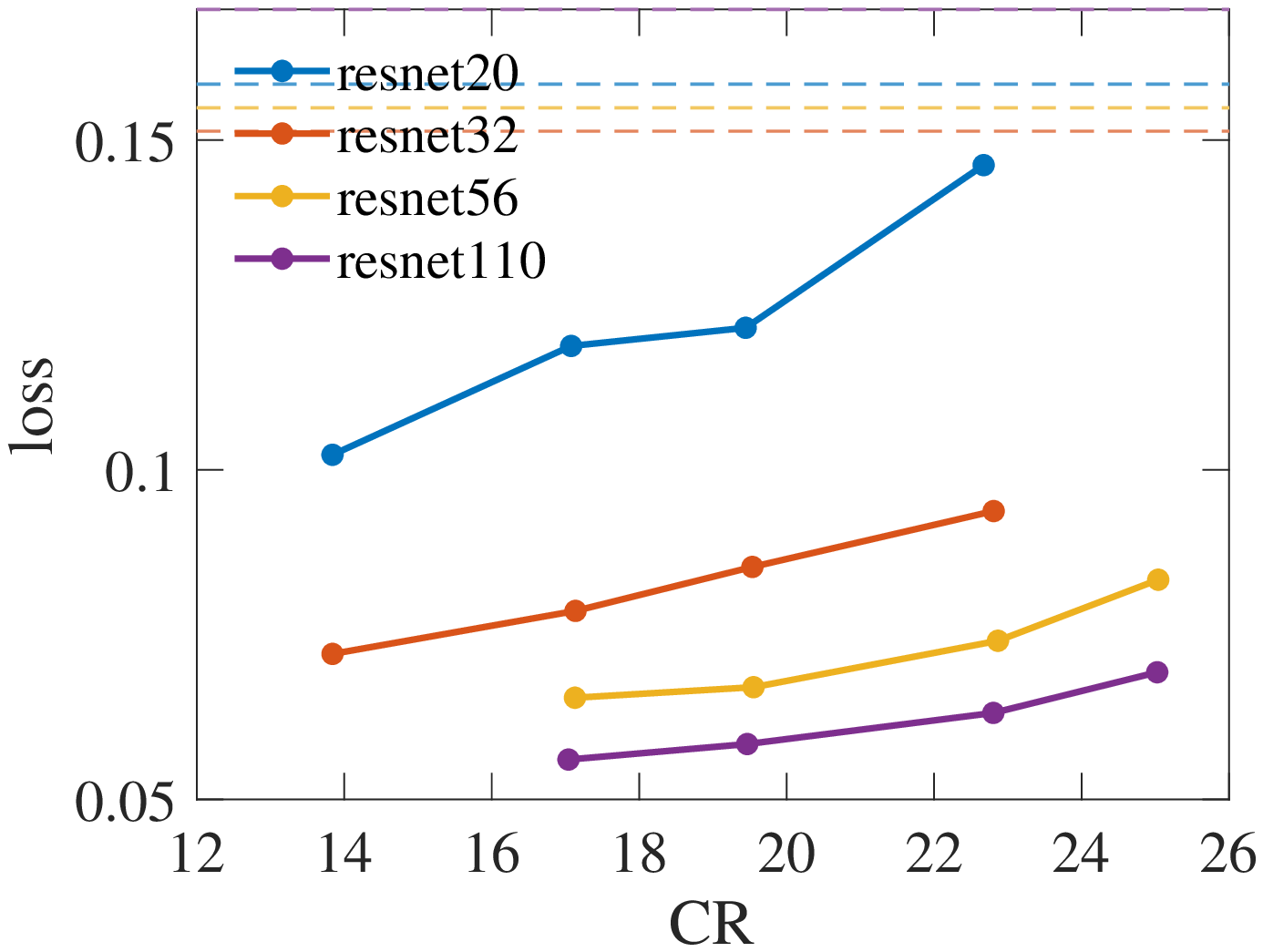} \\[1em]
      \includegraphics[clip,height=0.37\linewidth]{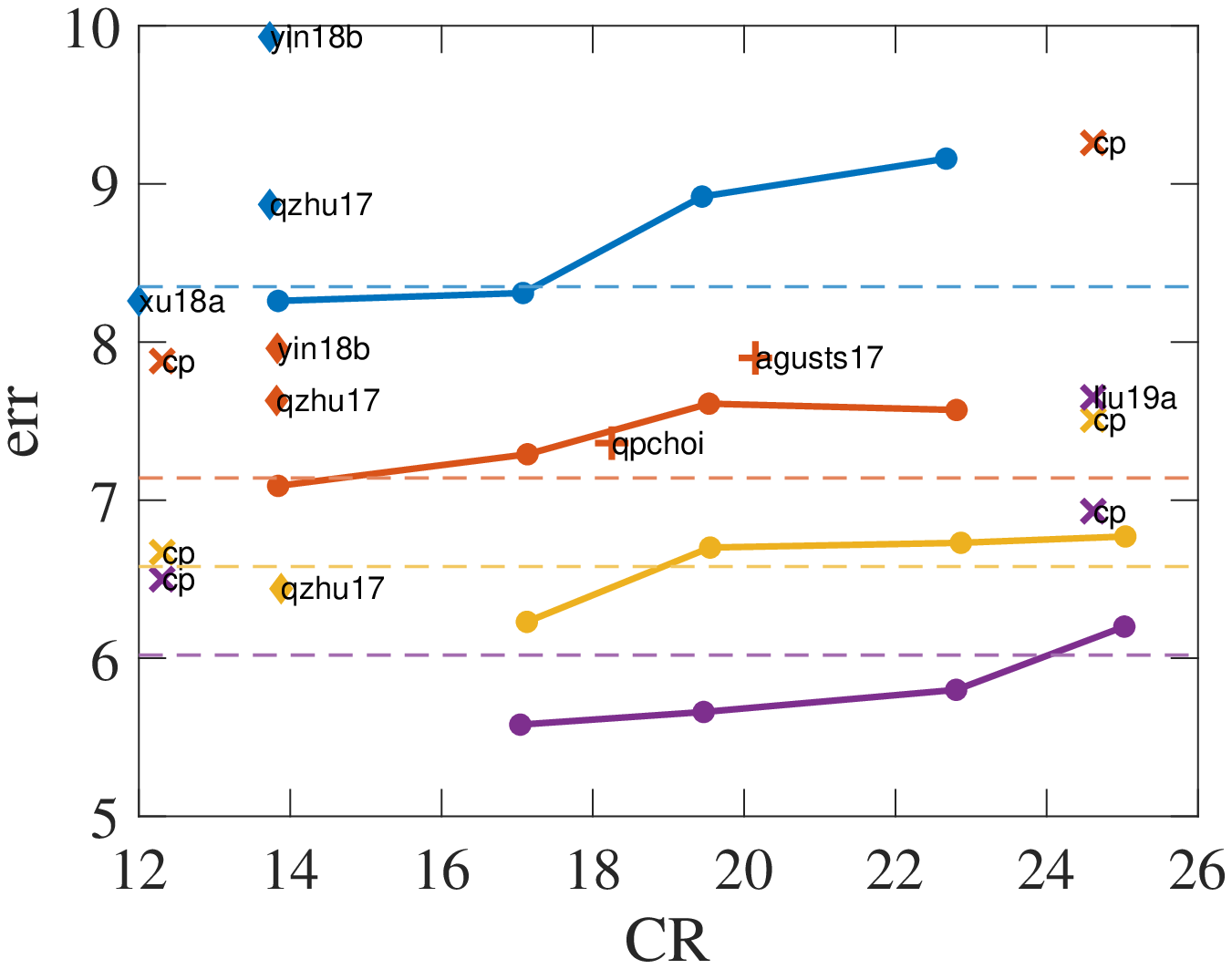}
    \end{tabular}
  \end{tabular}
  \caption{Q+P. \emph{Left}: results of running 1-bit quantization with varying amounts of additive pruning (corrections) on ResNets of different depth on CIFAR-10 (with reference nets denoted \textbf{R}). We report training loss (logarithms are base 10), test error $E_\text{test}$ (\%), and ratios of storage $\rho_{\text{s}}$ and floating point additions ($\rho_{+}$) and multiplications ($\rho_{\times}$). Boldfaced results are the best for each ResNet depth. \emph{Right}: training loss (\emph{top}) and test error (\emph{bottom}) as a function of the storage ratio. For each net, we give our algorithm's compression over several values of P, thus tracing a line in the error-compression space (reference nets: horizontal dashed lines). We also report results from the literature as isolated markers with a citation: quantization Q, pruning P, Huffman coding HC, and their sequential combination using arrows (e.g., Q$\rightarrow$HC). Point ``Q \cite{Xu_18b}'' on the left border is outside the plot ($\rho_{\text{s}} < 12$). }
  \label{fig:q_p}
\end{figure*}

\begin{figure*}
  \centering
  \begin{tabular}{@{}cc@{}}
    \begin{tabular}{@{}l@{\hspace{0.1em}}c@{\hspace{0.5em}}c@{\hspace{0.5em}}c@{\hspace{0.0em}}r@{\hspace{0.5em}}r@{\hspace{0.5em}}r@{}}
      \toprule
      & 1bit Q + rank $r$  & $\log L$ & $E_\text{test}$ (\%) & $\rho_{\text{s}}$ & $\rho_{+}$& $\rho_{\times}$ \\
      \midrule
      & {\textbf{R}}      &-0.80& \textbf{8.35}        & 1.00           & \textbf{1.00} & 1.00 \\
      &Q + rank 1   & -0.77 & 9.71                 & \textbf{20.71} & 0.96          & \textbf{21.45}\\ 
      \raisebox{0pt}[0pt][0pt]{\rotatebox{90}{\makebox[0pt][c]{\hspace{1.5em}\scriptsize  ResNet20 }}}
      &Q + rank 2   & -0.84 & 9.30                 & 16.62         & 0.92           & 11.26\\
      &Q + rank 3   & -0.89 & 8.64                 & 13.88         & 0.89           & 7.64 \\
      \midrule
      &{\textbf{R}}        &-0.82& \textbf{7.14}        & 1.00           & \textbf{1.00} & 1.00 \\
      &Q + rank 1   & -0.99 & 7.90                 & \textbf{20.94} & 0.97          & \textbf{21.89}\\ 
      \raisebox{0pt}[0pt][0pt]{\rotatebox{90}{\makebox[0pt][c]{\hspace{1.5em} \scriptsize  ResNet32 }}}
      &Q + rank 2   & -1.04 & 8.06                 & 16.81         & 0.92           & 11.47\\
      &Q + rank 3   & -1.10 & 7.52                 & 14.04         & 0.89           & 7.77 \\
      \midrule
      &{\textbf{R}}        &-0.81& 6.58                 & 1.00           & \textbf{1.00} & 1.00 \\
      &Q + rank 1   & -1.13 & 7.19                 & \textbf{21.04} & 0.96          & \textbf{22.19}\\ 
      \raisebox{0pt}[0pt][0pt]{\rotatebox{90}{\makebox[0pt][c]{  \hspace{1.5em} \scriptsize ResNet56 }}}
      &Q + rank 2   & -1.19 & 6.51                 & 16.91         & 0.92           & 11.61\\
      &Q + rank 3   & -1.22 & \textbf{6.29}        & 14.10         & 0.89           &  7.87 \\
      \midrule
      &{\textbf{R}}        &-0.77& 6.02                 & 1.00           & \textbf{1.00} & 1.00 \\
      &Q + rank 1   & -1.19 & 5.98                 & \textbf{21.11} & 0.96          & \textbf{22.38}\\ 
      \raisebox{0pt}[0pt][0pt]{\rotatebox{90}{\makebox[0pt][c]{  \hspace{1.5em} \scriptsize ResNet110 }}}
      &Q + rank 2   & -1.24 & 5.93                 & 16.96         & 0.92           & 11.70\\
      &Q + rank 3   & -1.27 & \textbf{5.50}        & 14.18         & 0.89           & 7.92 \\
      \bottomrule
    \end{tabular} &
    \psfrag{resnet20}[l][l]{\small ResNet20}
    \psfrag{resnet32}[l][l]{\small ResNet32}
    \psfrag{resnet56}[l][l]{\small ResNet56}
    \psfrag{resnet110}[l][l]{\small ResNet110}
    \psfrag{qzhu17}[l][l]{~\tiny $\text{Q}$ \cite{Zhu_17a}}
    \psfrag{yin18b}[l][bl]{~\tiny $\text{Q}$ \cite{Yin_18b}}
    \psfrag{xu18a}[bl][tl]{\tiny $\text{Q}$ \cite{Xu_18b}}
    \psfrag{wen17}[l][l]{~\tiny $\text{L}_1$ \cite{Wen_17a}}
    \psfrag{xu18b}[bl][tl]{\tiny $\text{L}_2$ \cite{Xu_20a}}
    \psfrag{err}[B][]{test error $E_{\text{test}}$ (\%)}
    \psfrag{loss}[][]{loss L}
    \psfrag{CR}[c][b]{storage ratio $\rho_{\text{s}}$}
    \begin{tabular}{@{}r@{}}
      \psfrag{CR}[B][]{}
      \includegraphics[clip,height=0.31\linewidth]{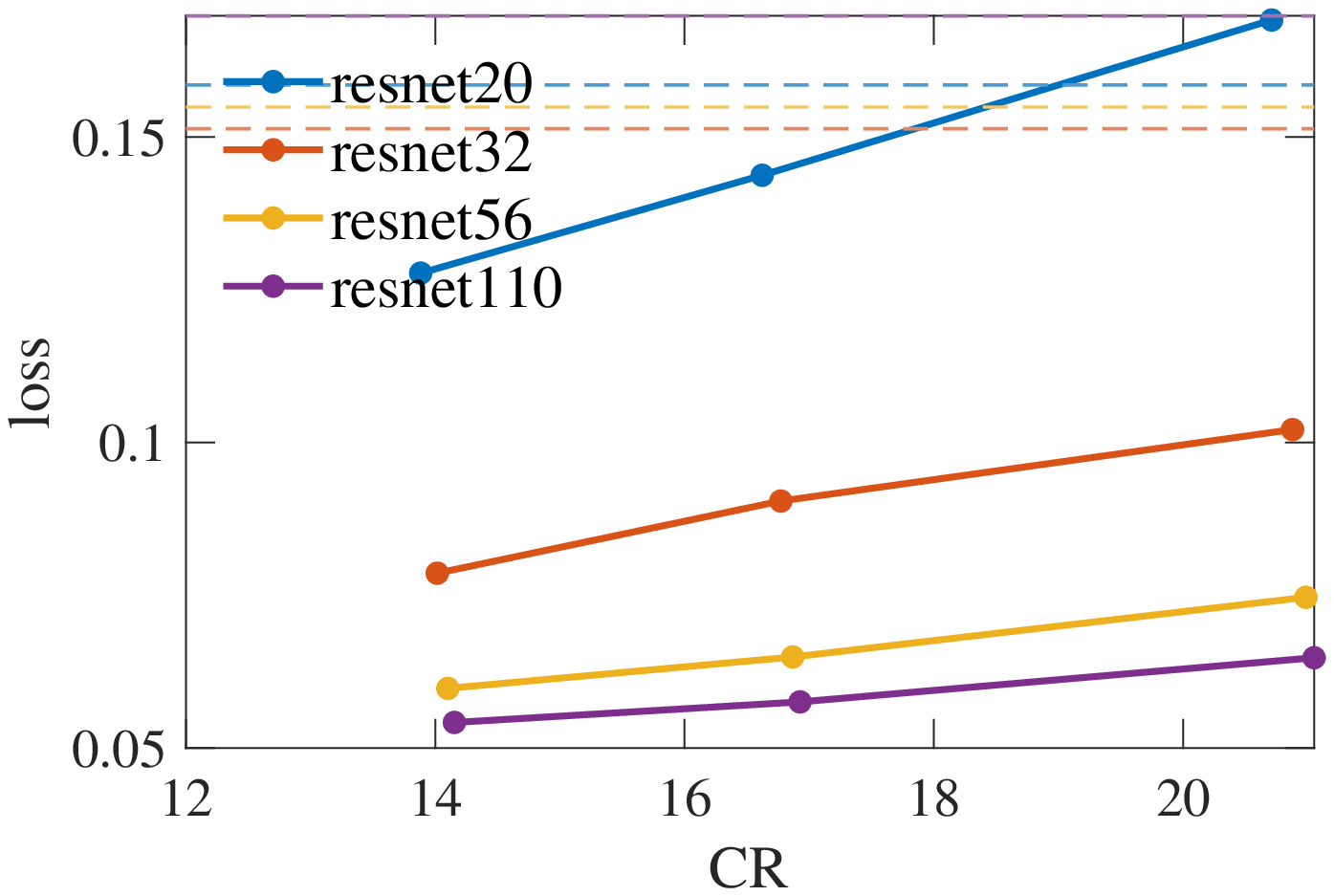} \\
      \includegraphics[clip,height=0.32\linewidth]{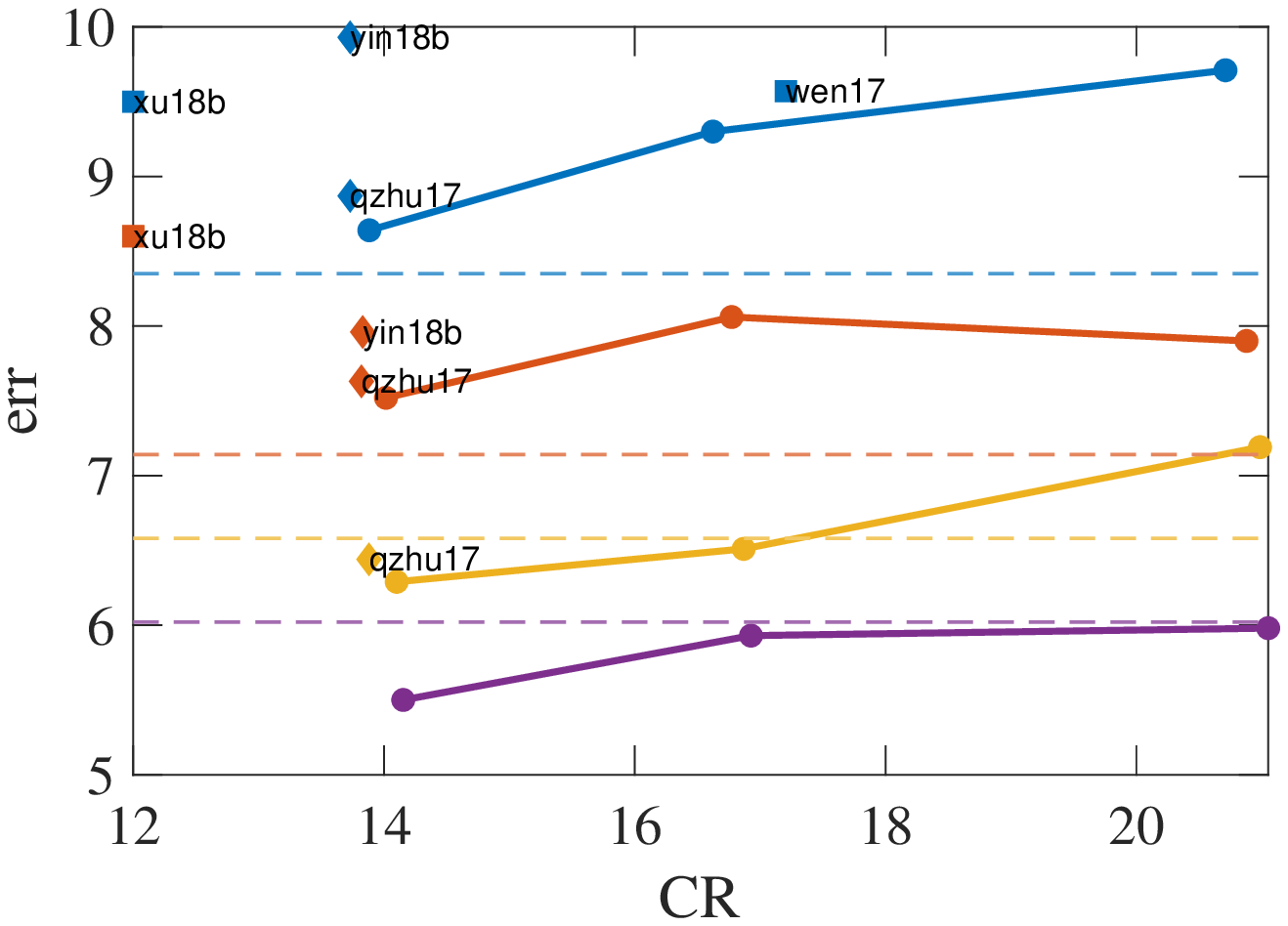}
    \end{tabular}
  \end{tabular}
  \caption{Q+L.~\emph{Left}: results of running 1-bit quantization with addition of a low-rank matrix of different rank on ResNets on CIFAR10. The organization is as for fig.~\ref{fig:q_p}. In the right-bottom plot we also show results from the literature for single compressions (Q: quantization, L: low-rank). Points ``$\text{L}$ \cite{Xu_20a}'' on the left border are outside the plot ($\rho_{\text{s}} < 12$). }
  \label{fig:q_l}
\end{figure*}

We evaluate the effectiveness of additively combining compressions on deep nets of different sizes on the CIFAR10 (VGG16 and ResNets). We systematically study each combination of two or three compressions out of quantization, low-rank and pruning. We demonstrate that the additive combination improves over any single compression contained in the combination (as expected), and is comparable or better than sequentially engineered combinations such as first pruning some weights and then quantizing the rest. We sometimes achieve models that not only compress the reference significantly but also reduce its error. Notably, this happens with ResNet110 (using quantization plus either pruning or low-rank), even though our reference ResNets were already well trained and achieved a lower test error than in the original paper \citep{He_16a}.

We initialize our experiments from reasonably well-trained reference models. We train reference ResNets of depth 20, 32, 56, and 110 following the procedure of the original paper \citep{He_16a} (although we achieve lower errors). The models have 0.26M, 0.46M, 0.85M, and 1.7M parameters and test errors of 8.35\%, 7.14\%, 6.58\% and 6.02\%, respectively. We adapt VGG16 \citep{SimonyZisser15a} to the CIFAR10 dataset (see details in appendix~\ref{section:exp_cifar10}) and train it using the same data augmentation as for ResNets. The reference VGG16 has 15.2M parameters and achieves a test error of 6.45\%.

The optimization protocol of our algorithm is as follows throughout all experiments with minor changes (see appendices \ref{section:exp_cifar10} and \ref{section:exp_imagenet}). To optimize the L step we use Nesterov's accelerated gradient method \citep{Nester83a} with momentum of 0.9 on minibatches of size 128, with a decayed learning rate schedule of $\eta_0 \cdot a^m$ at the $m$th epoch. The initial learning rate $\eta_0$ is one of \{0.0007,0.007,0.01\}, and the learning rate decay one of \{0.94,0.98\}. Each L step is run for 20 epochs. Our LC algorithm (we use augmented Lagrangian version) runs for $j$ steps where $j\leq 50$, and has a penalty parameter schedule $\mu_j = \mu_0 \cdot 1.1^j$; we choose $\mu_0$ to be one of $\{5\cdot 10^{-4},10^{-3}\}$. The solution of the C step requires alternating optimization over individual compressions, which we perform 30 times per each step.

We report the training loss and test error as measures of the model classification performance; and the ratios of storage (memory occupied by the parameters) $\rho_{\text{s}}$, number of multiplications $\rho_{\times}$ and number of additions $\rho_{+}$ as measures of model compression. Although the number of multiplications and additions is about the same in a deep net's inference pass, we report them separately because different compression techniques (if efficiently implemented) can affect quite differently their costs. We store low-rank matrices and sparse correction values using 16-bit precision floating point values. See our appendix~\ref{section:metrics} for precise definitions and details of these metrics.

\subsection{Q+P: quantization plus pruning}
\label{s:expts:quant+pruning}
We compress ResNets with a combination of quantization plus pruning. Every layer is quantized separately with a codebook of size 2 (1 bit). For pruning we employ the constrained $\ell_0$ formulation \cite{CarreirIdelbay18a}, which allows us to specify a single number of nonzero weights $\kappa$ for the entire net ($\kappa$ is the ``\% P'' value in fig.~\ref{fig:q_p}). The C step of eq.~\eqref{e:Cstep} for this combination alternates between running $k$-means (for quantization) and a closed-form solution based on thresholding (for pruning).

Fig.~\ref{fig:q_p} shows our results; published results from the literature are at the bottom-right part, which shows the error-compression space. We are able to achieve considerable compression ratios $\rho_{\text{s}}$ of up to 20$\times$ without any degradation in accuracy, and even higher ratios with minor degradation. These results beat single quantization or pruning schemes reported in the literature for these models. The best 2-bit quantization approaches for ResNets we know about \citep{Zhu_17a,Yin_18b} have $\rho_{\text{s}} \approx 14\times$ and lose up to 1\% in test error comparing to the reference; the best unstructured pruning \citep{CarreirIdelbay18a,Liu_19a} achieves $\rho_{\text{s}} \approx 12\times$ and loses 0.8\%. 

ResNet20 is the smallest and hardest to compress out of all ResNets. With 1-bit quantization plus 3\% pruning corrections we achieve an error of 8.26\% with $\rho_{\text{s}} =$ 13.84$\times$. To the best of our knowledge, the highest compression ratio of comparable accuracy using only quantization is 6.22$\times$ and has an error of 8.25\% \citep{Xu_18b}. On  ResNet110 with 1-bit quantization plus 3\% corrections, we achieve 5.58\% error while still compressing 17$\times$.

Our results are comparable or better than published results where multiple compressions are applied sequentially (Q$\rightarrow$HC and P$\rightarrow$Q$\rightarrow$HC in fig.~\ref{fig:q_p}). For example, quantizing and then applying Huffman coding to ResNet32 \citep{Agusts_17a} achieves $\rho_{\text{s}} =$ 20.15$\times$ with 7.9\% error, while we achieve $\rho_{\text{s}} =$ 22.81$\times$ with 7.57\% error. We re-emphasize that unlike the ``prune then quantize'' schemes, our additive combination is different: we quantize all weights and apply a pointwise correction. 

\begin{table}[t]
  \centering
  \begin{tabular}{@{}@{}llcrrrr@{}}
    \toprule
    & Model & $E_\text{test}$ (\%)  & $\rho_{\text{s}}$  \\
    \midrule
    &{\textbf{R}}  VGG16      & \textbf{6.45}                 & 1.00            \\
    \midrule
    & rank 2 + 2\% P  &  6.66                 & \textbf{60.99} \\ 
    \raisebox{0pt}[0pt][0pt]{\rotatebox{90}{\makebox[0pt][c]{\hspace*{1.5em}ours}}}
    & rank 3 + 2\% P &  6.65                 & 56.58         \\
    \midrule
    & pruning \citep{Liu_19a} &  6.66 &  $\approx$ 24.53  \\
    & filter pruning \citep{Li_17b} & 6.60 & 5.55   \\
    & quantization \citep{Qu_20a} & 8.00 & 43.48   \\
    \bottomrule
  \end{tabular}
  \caption{L+P. Compressing VGG16 with low-rank and pruning using our algorithm (top) and by recent works on structured and unstructured pruning. Metrics as in Fig.~\ref{fig:q_p}.} \label{table:vgg16}
\end{table}

\begin{figure*}[t]
  \centering
  \begin{tabular}{@{}cc@{}}
    \begin{tabular}{@{}l@{\hspace{0.3em}}l@{\hspace{0.1em}}c@{\hspace{0.4em}}r@{}l@{\hspace{0.3em}}c@{}}
      \toprule
       &\multicolumn{1}{c}{Model} & top-1 &  MBs&& MFLOPs  \\
         \midrule
       &Caffe-AlexNet \cite{Jia_14a}  & 42.70               & 243.5&  & 724           \\
       \midrule
       & AlexNet-QNN \cite{Wu_16a} & 44.24 & 13.0& & 175 \\
       &P$\rightarrow_1$Q \cite{Han_16a}  & 42.78   & 6.9&  & 724    \\
       &P$\rightarrow_2$Q \cite{Choi_17a}  & 43.80   & 5.9&  & 724    \\
       & P$\rightarrow_3$Q \cite{TungMori18a} & 42.10 & 4.8& & 724 \\
       & P$\rightarrow_4$Q \cite{Yang_20b} & 42.48 & 4.7& & 724\\
       & P$\rightarrow_5$Q \cite{Yang_20b} & 43.40 & 3.1& & 724 \\
       & filter pruning \cite{Li_19a} & 43.17 & 232.0 & & 334\\
       \midrule
       &\textbf{R} Low-rank AlexNet ($\text{L}_1$) & 39.61 & 100.5 & & 227           \\
       &$\text{L}_1$ $\rightarrow$ Q + P (0.25M)  & 39.67  & 3.7 & & 227 \\
       \raisebox{0pt}[0pt][0pt]{\rotatebox{90}{\makebox[0pt][c]{\hspace*{2em}ours }}}
       &$\text{L}_1$ $\rightarrow$ Q  + P (0.50M)  & \textbf{39.25}  & 4.3 &  & 227 \\
       \midrule 
       &\textbf{R} Low-rank AlexNet ($\text{L}_2$) & 39.93     & 69.8 & & 185           \\
       &$\text{L}_2$ $\rightarrow$ Q  + P (0.25M)  &   40.19 & 2.8 & & 185 \\
       \raisebox{0pt}[0pt][0pt]{\rotatebox{90}{\makebox[0pt][c]{\hspace*{2em}ours }}}
       &$\text{L}_2$ $\rightarrow$ Q  + P (0.50M)  & 39.97  & 3.4 &  & 185 \\ 
       \midrule
       &\textbf{R} Low-rank AlexNet ($\text{L}_3$) & 41.02     & 45.9 & & \textbf{152} \\
       &$\text{L}_3$ $\rightarrow$ Q  + P (0.25M)   & 41.27  & \textbf{2.1}& & \textbf{152} \\
       \raisebox{0pt}[0pt][0pt]{\rotatebox{90}{\makebox[0pt][c]{\hspace*{2em}ours }}}
       &$\text{L}_3$ $\rightarrow$ Q  + P (0.50M)    & 40.88  & 2.7 &  & \textbf{152} \\ 
      \bottomrule
    \end{tabular} &
    \begin{tabular}[c]{@{}c@{}}
    \psfrag{caffe}[l][l]{\scriptsize Caffe-AlexNet } 
    \psfrag{qnn}[l][l]{~\scriptsize QNN }
    \psfrag{han16}[l][l]{~\scriptsize P$\rightarrow_1$Q}
    \psfrag{choi17}[l][l]{~\scriptsize P$\rightarrow_2$Q} 
    \psfrag{tung18}[l][l]{~\scriptsize P$\rightarrow_3$Q}
    \psfrag{yang20v1}[l][l]{~\scriptsize P$\rightarrow_4$Q}
    \psfrag{yang20v2}[l][l]{~\scriptsize P$\rightarrow_5$Q} 
    \psfrag{wen17}[l][l]{~\scriptsize $\text{L}_1$ } 
    \psfrag{xu18b}[bl][tl]{\scriptsize $\text{L}_2$} 
    \psfrag{err}[B][]{top-1 test error (\%)}
    \psfrag{loss}[][]{loss L}
    \psfrag{CR}[c][b]{storage ratio $\rho_{\text{s}}$}
    \psfrag{l1alexnet}[l][l]{\scriptsize$\text{L}_1\rightarrow$ Q+P }
    \psfrag{l2alexnet}[l][l]{\scriptsize$\text{L}_2\rightarrow$ Q+P }
    \psfrag{l3alexnet}[l][l]{\scriptsize$\text{L}_3\rightarrow$ Q+P }
    \psfrag{L1}[l][l]{~\scriptsize $\text{L}_1$}
    \psfrag{L2}[l][l]{~\scriptsize $\text{L}_2$}
    \psfrag{L3}[l][l]{~\scriptsize $\text{L}_3$}
    \includegraphics[clip,height=0.39\linewidth]{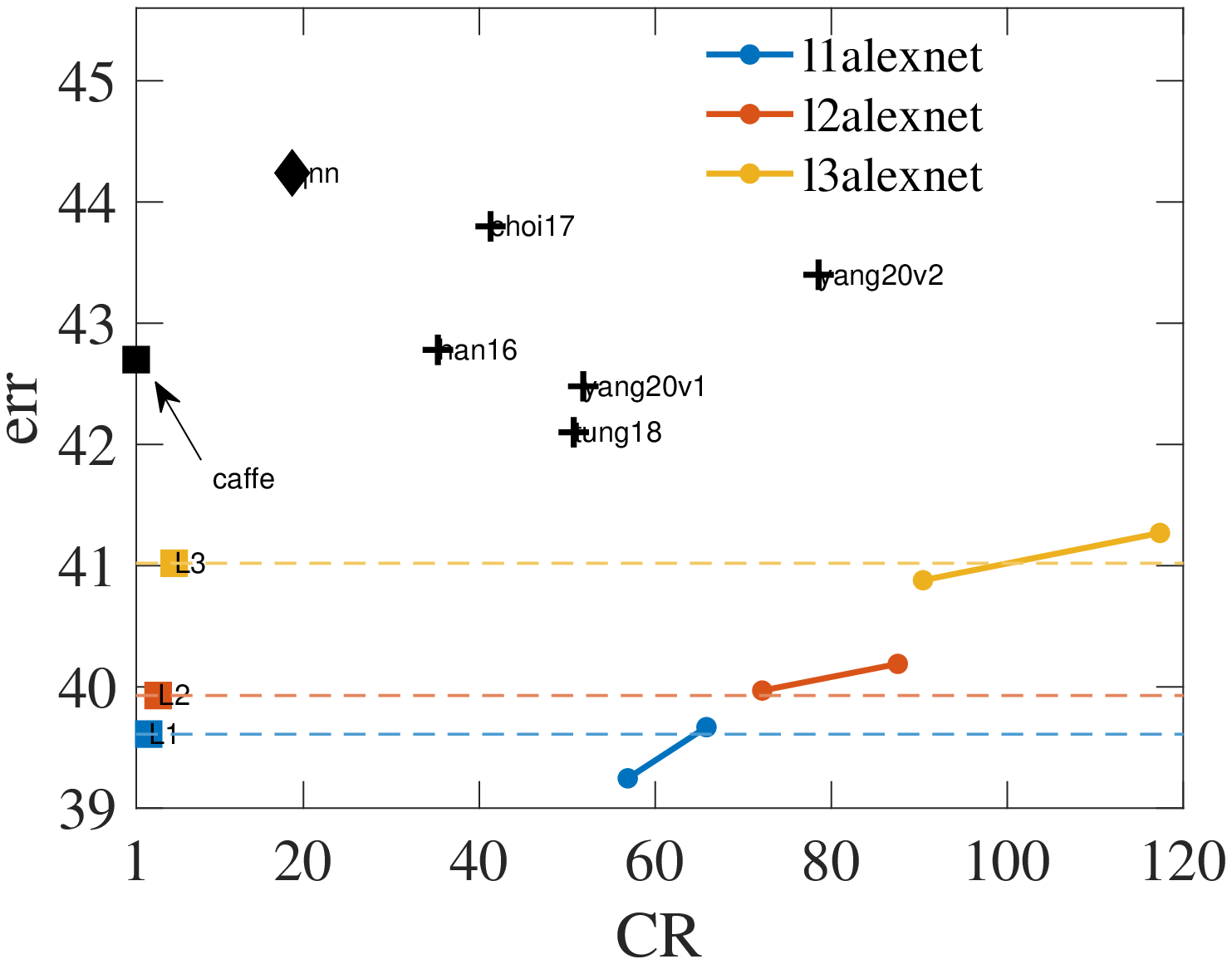} \\[1em]
    Inference time and speed-up using 1-bit Q + 0.25M P\\
    \begin{tabular}{@{}lc@{\hspace{1em}}c@{}}
      \toprule
      Model & time, ms & speed-up \\
      \midrule
      Caffe-AlexNet & 23.27 & 1.00 \\
      $\text{L}_1$ $\rightarrow$ Q + P   & 11.32 & 2.06 \\
      $\text{L}_2$ $\rightarrow$ Q + P   & \phantom{0}8.75 & 2.66 \\
      $\text{L}_3$ $\rightarrow$ Q + P   & \phantom{0}6.72 & 3.46 \\
      \bottomrule
    \end{tabular} 
    \end{tabular} 
  \end{tabular}
  \caption{Q+P scheme is powerful enough to further compress already downsized models, here, it is used to further compress the low-rank AlexNets \cite{IdelbayCarreir20a}. In all our experiments reported here, we use 1-bit quantization with varying amount of pruning. \emph{Left:} We report top-1 validation error, size of the final model in MB when saved to disk, and resulting FLOPs. P---pruning, Q--- quantization, L---low-rank. \emph{Top right:} same as the table on the left, but in graphical form. Our compressed models are given as solid connected lines.  \emph{Bottom right:} The delay (in milliseconds) and corresponding speed-ups of our compressed models on Jetson Nano Edge GPU. }
  \label{fig:qp_alexnet_low_rank}
\end{figure*}

\subsection{Q+L: quantization plus low-rank}
\label{s:expts:quant+low-rank}
We compress the ResNets with the additive combination of 1-bit quantized weights (as in section~\ref{s:expts:quant+pruning}) and rank-$r$ matrices, where the rank is fixed and has the same value for all layers. The convoloutional layers are parameterized by low-rank as in Wen et al.~\cite{Wen_17a}. The solution of the C step \eqref{e:Cstep1} for this combination is an alternation between $k$-means and truncated SVD. 

Fig.~\ref{fig:q_l} shows our results and at bottom-right of Fig.~\ref{fig:q_l} we see that our additive combination (lines traced by different values of the rank $r$ in the error-compression space) consistently improve over individual compression techniques reported in the literature (quantization or low-rank, shown by markers Q or L). Notably, the low-rank approximation is not a popular choice for compression of ResNets: fig.~\ref{fig:q_l} shows only two markers, for the only two papers we know  \citep{Wen_17a,Xu_20a}. Assuming storage with 16-bit precision on ResNet20, Wen et al.~\cite{Wen_17a} achieve 17.20$\times$ storage compression (with 9.57\% error) and Xu et al.~\cite{Xu_20a} respectively 5.39$\times$ (with 9.5\% error), while our combination of 1-bit quantization plus rank-2 achieves 16.62$\times$ (9.3\% error).

\subsection{L+P: low-rank plus pruning}
\label{s:expts:low-rank+pruning}
We compress VGG16 trained on CIFAR10 using the additive combination of low-rank matrices and pruned weights. The reference model has 15.2M parameters, uses 58.17 MB of storage and achieves 6.45\% test error. 
When compressing with L+P scheme of rank 2 and 3\% point-wise corrections (Table~\ref{table:vgg16}), we achieve a compression ratio of to 60.99$\times$ (0.95 MB storage), and the test error of 6.66\%.
\section{Experiments on ImageNet}
\label{s:expts_imagenet}
To demonstrate the power and complementary benefits of additive combinations, we proceed by applying the Q+P combination to \emph{already downsized} models trained on the ILSVRC2012 dataset. We obtain low-rank AlexNets following the work of \cite{IdelbayCarreir20a}, and compress them further with Q+P scheme. The hyperparameters of the experiments are almost identical to CIFAR10 experiments (sec.~\ref{s:expts}) with minor changes, see appendix~\ref{section:exp_imagenet}. 

In Fig.~\ref{fig:qp_alexnet_low_rank} (left) we report our results: the achieved top-1 error, the size in megabytes when a compressed model is saved to disk (we use the sparse index compression procedure of Han et al.~\cite{Han_16a}), and floating point operations required to perform the forward pass through a network. Additionally, we include prominent results from the literature to put our models in perspective. Our Q+P models achieve \emph{significant compression} of AlexNet: we get $117\times$ compression (2.075MB) without degradation in accuracy and $87\times$ compression with more than 2\% improvement in the \mbox{top-1} accuracy when compared to the Caffe-AlexNet. Recently, Yang et al.~\cite{Yang_20b} (essentially using our Learning-Compression framework) reported $118\times$ and $205\times$ compression on AlexNet with none to small reduction of accuracy. However, as can be found by inspecting the code of Yang et al.~\cite{Yang_20b}, these numbers are artificially inflated in that they do not account for the storage of the element indices (for a sparse matrix), for the storage of the codebook, and use a fractional number of bits per element instead of rounding it up to an integer. If these are taken into account, the actual compression ratios become much smaller ($52\times$ and $79\times$), with models of sizes 4.7MB and 3.1MB respectively (see left of Fig.~\ref{fig:qp_alexnet_low_rank}). Our models outperform those and other results not only in terms of size, but also in terms of inference speed. We provide the runtime evaluation (when processing a single image) of our compressed models on a small edge device (NVIDIA's Jetson Nano) on the right bottom of Fig.~\ref{fig:qp_alexnet_low_rank}.

\section{Conclusion}

We have argued for and experimentally demonstrated the benefits of applying multiple compressions as an additive combination. We achieve this via a general, intuitive formulation of the optimization problem via constraints characterizing the additive combination, and an algorithm that can handle \emph{any choice of compression combination} as long as each individual compression can be solved on its own. In this context, pruning takes the meaning of adding a few elementwise corrections where they are needed most. This can not only complement existing compressions such as quantization or low-rank, but also be an interesting way to learn skip connections in deep net architectures. With deep neural nets, we observe that we can find significantly better models in the error-compression space, indicating that \emph{different compression types have complementary benefits}, and that the best type of combination depends exquisitely on the type of neural net. The resulting compressed nets may also make better use of the available hardware. Our codes and models are available at \url{https://github.com/UCMerced-ML/LC-model-compression} as part of LC Toolkit \cite{IdelbayCarreir20b}.

Our work opens up possibly new and interesting mathematical problems regarding the best approximation of a matrix by X, such as when X is the sum of a quantized matrix and a sparse matrix. Also, we do not claim that additive combination is the only or the best way to combine compressions, and future work may explore other ways.

\paragraph{Acknowledgments}
We thank NVIDIA Corporation for multiple GPU donations.

\clearpage
\appendix
\appendixpage
This appendix contains the following material: a) the pseudocode for augmented Lagrangian version of our algorithm (page~\pageref{section:code})
b) description of how we define the reduction ratios of storage, additions, and multiplications (page~\pageref{section:metrics})
c) description and results of the experiment compressing biases (page~\pageref{section:biases})
d) full details of all reported experiments with extended tables and figures for the CIFAR10 (page~\pageref{section:exp_cifar10}) and ImageNet (page~\pageref{section:exp_imagenet}) datasets with description of Jetson Nano measurements (page~\pageref{sec:runtime})

\section{Pseudocode using augmented Lagrangian}
\label{section:code}
In the main paper, we gave pseudocode for our algorithm using Quadratic Penalty formulation, here we give the pseudocode using the augmented Lagrangian version when there are two additive combinations.

\begin{center}
\small
  \fbox{
    \begin{minipage}[c]{0.6\linewidth}
      \begin{tabbing}
        n \= n \= n \= n \= n \= \kill
        \underline{\textbf{input:}} \caja{t}{l}{training data, neural net architecture with weights \w } \\
        $\w \leftarrow \argmin_{\w}{ L(\w) }$ \` {\small\textsf{reference net}} \\
        $\btheta_1, \btheta_2 \leftarrow \argmin_{\btheta_1, \btheta_2} \norm{\w - \bDelta_1(\btheta_1) - \bDelta_2(\btheta_2)}^2$ \` {\small\textsf{init as in C step}} \\
        $\blambda  \leftarrow \0$ \` {\small\textsf{Lagrange multipliers}} \\
        \underline{\textbf{for}} $\mu = \mu_0 < \mu_1 < \dots < \infty$ \+ \\
        $\w \leftarrow \argmin_{\w}{ L(\w) + \smash{\frac{\mu}{2} \norm{\smash{\w - \bDelta_1(\btheta_1) -\bDelta_2(\btheta_2)} - \frac{1}{\mu} \blambda }^2} }$ \` {\small\textsf{L step}} \\[0.2em]
        \underline{\textbf{while}} alternation does not converge \+ \\
        $\btheta_1 \leftarrow \argmin_{\btheta_1} \norm{\left(\w-\bDelta_2(\btheta_2) - \frac{1}{\mu} \blambda \right) - \bDelta_1(\btheta_1)}^2$ \` {\small\raisebox{0pt}[0pt][0pt]{$\left.\rule{0cm}{0.98cm}\right\}\quad\,\,$}\textsf{C step}} \\
        $\btheta_2 \leftarrow \argmin_{\btheta_2} \norm{\left(\w - \bDelta_1(\btheta_1) - \frac{1}{\mu} \blambda \right) -\bDelta_2(\btheta_2)}^2$ \-\\
        
        $\blambda \leftarrow \blambda - \mu \left(\W - \bDelta_1(\btheta_1) -\bDelta_2(\btheta_2)\right)$ \` {\small\textsf{multipliers step}} \\
        \textbf{if} $\norm{\w - \bDelta_1(\btheta_1) -\bDelta_2(\btheta_2)}$ is small enough \textbf{then} exit the loop \- \\
        
        \underline{\textbf{return}} $\w, \btheta_1, \btheta_2$ 
      \end{tabbing}
    \end{minipage}
     }
\end{center}

\begin{section}{Metrics}
\label{section:metrics}
\subsection{Compression ratio, storage}
\label{section:rho_storage}
We define a storage \emph{compression ratio} ($\rho_\text{s}$) as a ratio between bits required to store a reference model over bits of a compressed model:
\begin{equation}
\rho_\text{s} = \frac{\bits (\text{reference})}{\bits (\text{compressed})}
\end{equation}
In this paper, we assume that the weights of a reference model are stored as 32-bit IEEE floating point numbers, therefore, the total bits for storing the reference model is
\begin{equation}
\bits (\text{reference}) = \params \times 32,
\end{equation}
here $\params$ is the total number of parameters in the reference network. We discuss how we compute storage bits of compressed models, $\bits (\text{compressed})$, next.

\subsubsection{Quantization}

When we quantize the weights with a codebook of size $k$, we need to store codebook itself and $\ceil{\log k}$ bits for every weight to index them from the codebook. In our experimental setup, we define a separate codebook for every layer with the same size $k$. The total compressed storage then:
\begin{equation}
\texttt{bits}_Q (\text{compressed}) = \underbrace{\layers \times \, k  \cdot 32}_{\text{codebook bits}} \quad + \quad  \underbrace{\params \times \ceil{\log k}}_{\text{index bits}}
\end{equation}

\subsubsection{Sparse corrections (Pruning)}

In additive combination sense, un-pruned weights are point-wise corrections. We store such corrections separately for each layer as a list of index-value pairs, where index is the location of a correction in a vector of flattened weights. Instead of storing indexes directly, we adopt storing differences between subsequent indexes, as in \citep{Han_16a}, using unsigned $p$-bit integers. If a difference is larger than $2^p-1$, we add a dummy pair of zeros, i.e., $(0,0)$; with such scheme storing some corrections would require multiple index-value pairs. We choose to store the values of the corrections as 16-bit IEEE floating point numbers. Then, the compressed storage for a layer is
\begin{equation}
\bits_P(\text{layer}) = \texttt{diffs} \times (p + 16).
\end{equation}

\subsubsection{Low-rank}

Rank $r$ matrix of shape $m\times n$, is stored with two matrices of shapes $m\times r$ and $r \times m$ using 16-bit floating point numbers. The compressed storage for such layer is
\begin{equation}
\bits_L(\text{layer}) = 16 \times r \times (m+n),
\end{equation}
and the total bits for compressed storage is the sum over all layers.
\end{section}

\subsubsection{Uncompressed parts}

Since we only compress weights of a layer, some additional structures like biases and batch-normalization parameters would not be compressed. Therefore, we store them in full precision and add their storage cost to the total of compressed bits. In some cases, we can reduce the storage by fusing elements, e.g.,\ biases can be fused with batch-normalization layer preceding it, which we take advantage of.

\subsubsection{Additive combination}

When compression combined additively, the total compressed storage is the sum of compression parts over all layers, e.g.,\ for quantization with point-wise corrections the total compressed storage is:
\begin{equation}
\bits(\text{compressed})=\sum_{ l} \,\Big(\bits_P(\text{layer } l) + \bits_Q(\text{layer } l)\Big)
\end{equation} 

\subsection{Multiplications and additions, efficient implementation}
\label{section:efficient_impl}
We define reduction ratio of the number of floating point additions ($\rho_+$) and floating point multiplications ($\rho_\times$) as a ratio between the number of additions (and respectively multiplications) in reference model over the number of additions (multiplications) in a compressed model:
\begin{align}
\rho_+ &= \frac{\text{\texttt{\num add} in reference}}{\text{\texttt{\num add} in compressed}} \\
\rho_\times &= \frac{\text{\texttt{\num mult} in reference}}{\text{\texttt{\num mult} in compressed}}
\end{align}

\subsubsection{FLOPs for uncompressed model}

A fully connected layer with weights \W\ of shape $n\times m$ and biases \b\ of shape $n\times 1$ requires the following number of multiplications and additions to compute a result of $\W\x + \b$:
\begin{align*}
\texttt{\num add} & = n\times(m-1) + n = nm\\
\texttt{\num mult} & = nm.
\end{align*}
For a convolutional layer with shape $n\times c\times d\times d$ (here $n$ filters with $c$ channels and spatial resolution $d\times d$) and biases of size $n$, the total number of multiplications and additions are:
\begin{align*}
\texttt{\num add} & = ncd^2 \times M,\\
\texttt{\num mult} & = ncd^2 \times M,
\end{align*}
where $M$ is the total number of one convolutional filter being applied to the input image.

\subsubsection{Quantization}

To perform efficient inference with quantized weights, for each neuron with weight vector $\w$ we need to maintain an accumulator corresponding to each value of the codebook $\calC = \{c_1 \dots c_k \}$: \begin{equation}
\w^T \x = \sum_i w_i x_i = \sum_i c(w_i) x_i = \sum_k c_k \underbrace{\sum_{c(w_i)=c_k} x_i}_{\text{accumulate}}
\end{equation} 
Therefore for each neuron, there are $k$ multiplications and the number of additions is equal to the number of the weights connected to the neuron. In the case of fully connected layers with $m$ inputs and $n$ outputs, there are following number of add./mult.: \begin{align*}
  &\texttt{\num mult} = k\times n \\
  &\texttt{\num add} = m\times n
\end{align*} For convolutional layers with $n$ filters of shape $c\times d \times d$, we respectively have:
\begin{align*}
  &\texttt{\num mult} = k\times n \times M, \\
  &\texttt{\num add} = ncd^2 \times M,
\end{align*} where $M$ is the total number of the applications of conv.\ filters to the input.

\subsubsection{Pruning}

When the weight matrix $\W$ has only $p$ non-zero items, it will require $p$ multiplications and $p-1$ additions for matrix-vector product; for convolutional layer these numbers should be multiplied by number of applications of conv filter, $M$ (see above).

\subsubsection{Low-rank}

A rank $r$ matrix of shape $n\times m$ can be represented by two matrices of shapes $n\times r$ and $r\times m$, therefore, during the inference of $\W\x + \b$  we have $\texttt{\num mult} = \texttt{\num add} = r(n+m)$.

\section{Experiments on CIFAR10}
\label{section:exp_cifar10}
In this section, we give full details of our experimental setup for the CIFAR10 networks reported in the paper, as well as extended analysis of the results.
\subsection{Quantization plus pruning, Q+P}
\label{section:q+p}
\subsubsection{Compressing biases together with weights}
\label{section:biases}
We would like to verify whether point-wise corrections are going to recover bad compression decisions. One such decision is to quantize both weights and biases with a single codebook. We report the results of such compression in Table~\ref{t:compressed_biases}. As you can see with few corrections (say, 0.65\%) most of the biases are ``recovered'' from the bad quantization decision. Below we give a full experimental setup.
\begin{description}
\item[Reference model] We train a multinomial logistic regression classifier on the CIFAR10 dataset (60k color images of $32\times32$ pixels, 10 classes). We use Nesterov's SGD with a batch size of 1024 and learning rate of 0.05 which decayed after every epoch by 0.98; we use momentum of 0.9 and run the training for 300 epochs. We preprocess images by standardizing pixel-wise means and variances. The model has 30730 weights ($3072\times10$ weights and 10 biases), and achieves train loss of 1.5253 and test accuracy of 38.79\%. 
\item[Compression setup] We run our algorithm with quantization and point-wise corrections on the weights and biases jointly. The codebook has $k=2$ entries and we vary the number of corrections. The algorithm runs for 50 LC steps with $\mu=\times 5 \times 10^{-4}\times 1.1^k$ at $k$-th step. Each  L-step is performed by Nesterov's SGD with momentum 0.9 and run for 20 epochs with the initial learning rate of $0.05$ decayed by 0.98 after each epoch. We do not perform any finetuning. Running the LC algorithm is 3.33 times longer comparing to the training of the reference model. We report results (loss and accuracies) corresponding to the model with the smallest train loss seen during the training, which is usually on the last iteration. For each C-step, we alternate between Q and P compressions 30 times.
\end{description}

\begin{table}[t]
\centering
\begin{tabular}{@{}ll@{ + }lcrcrrr@{}}
\toprule
 &\multicolumn{2}{c}{Model} & $L$ & test acc, \% &corrected biases (out of 10)\\
   \midrule
 &\multicolumn{2}{l}{\textbf{R} logistic regression}& \textbf{1.5253} & \textbf{38.79}                 & &\\
 &1-bit quant. & 100 corrections & 1.6830 & 36.06                 & 9 \\
 &1-bit quant. & 200 corrections & 1.6716 & 37.29                 & 9 \\
\bottomrule
\end{tabular}
\caption{Results of running 1-bit quantization with corrections on simple multinomial logistic regression model (\textbf{R}) trained on the CIFAR10, where both weights and biases are compressed jointly with a single codebook. We report train loss $L$, test accuracy, and the number of corrections acting on a total of 10 biases. When we allow correcting only 100 (0.16\%) values 9 out of 10 biases are already corrected. This indeed confirms that a) biases are important, and compressing them requires higher precision b) point-wise corrections are able to ``fix" bad compression decisions. }
\label{t:compressed_biases}
\end{table}

\subsubsection{ResNet experiments}
\label{section:resnet_qp}

We train ResNets \cite{He_16a} of depth 20, 32, 56, and 110 layers (0.27M, 0.46M, 0.85M, and 1.7M parameters, respectively) on the CIFAR10 dataset using the same augmentation setup as in \cite{He_16a}. Images in the dataset are normalized to have channel-wise zero mean, variance 1. For training, we use a random horizontal flip, zero pad the image with 4 pixels on each side and randomly crop $32\times32$ image out of it. For test, we use normalized images without augmentation. We report results obtained at the end of the training. The loss is average cross entropy with weight decay (as in the original paper).
\begin{description}
\item[Training reference nets] The models are trained with Nesterov's SGD \citep{Nester83a} with a momentum of 0.9 on the minibatches of size 128. The loss is average cross entropy with weight decay of $10^{-4}$; weights initialized following \cite{He_15a}. Each reference network is trained for 200 epochs with a learning rate of 0.1 which is divided by 10 after 100 and 150 epochs.
\item[Training compressed models] We run our algorithm for 50 LC iterations on the ResNet-20/32, and for 45 iterations on the ResNet-56/110, with $\mu=\times 10^{-3}\times 1.1^k$ at $k$-th step. Each L-step is performed by Nesterov's SGD with a momentum of 0.9 and runs for 20 epochs with a learning rate of $0.01$ at the beginning of the step and decayed by 0.94 after each epoch. We do not perform any finetuning. Running the LC algorithm is 5 times longer comparing to the training of the reference network. We report results (train loss and test error) corresponding to the networks with the smallest train loss seen during the training, which is usually the last iteration. For each C-step, we alternate between Q and P compressions 30 times.
\end{description}

In our compression setup, each layer is quantized with its own codebook of size 2, however, corrections are applied throughout, to all weights with a predefined $\kappa$. We quantize only weights, and not the biases. For ResNet 20 and 32, we chose $\kappa$ values to be 1, 2, 3, 5 \% of the total number of parameters. For ResNet 56 and 110, we chose $\kappa$ values to be 0.5, 1, 2, 3 \% accordingly. The results are given in Table~\ref{table:q_p}, and we compare our results to others in Fig.\ref{fig:qp_plots}. We additionally plot how pointwise corrections affect the weight of each layer in Fig.~\ref{fig:qp_sprs}.

\begin{table}[t]
\centering
\begin{tabular}{@{}ll@{ + }lccrrr@{}}
\toprule
 &\multicolumn{2}{c}{Model} & $\log L$ & $E_\text{test}$, \% & $\rho_\text{s}$ & $\rho_+$ & $\rho_\times$ \\
   \midrule
 &\multicolumn{2}{l}{\textbf{R}}  &-0.80& 8.35                 & 1.00           & \textbf{1.00} & 1.00 \\
 &1-bit quant. & 1.0\% correction  &-0.84& 9.16                 & \textbf{22.67} & 0.97          & \textbf{30.74}\\ 
 \raisebox{0pt}[0pt][0pt]{\rotatebox{90}{\makebox[0pt][c]{  ResNet20 }}}
 &1-bit quant. & 2.0\% correction  &-0.92& 8.92                 & 19.44         & 0.96           & 19.74\\
 &1-bit quant. & 3.0\% correction  &-0.93& 8.31                 & 17.08         & 0.94           & 15.80 \\
 &1-bit quant. & 5.0\% correction  &-0.99& \textbf{8.26}        & 13.84         & 0.92           & 11.54 \\
  \midrule
 &\multicolumn{2}{l}{\textbf{R}}  &-0.82& 7.14        & 1.00           & \textbf{1.00} & 1.00 \\
 &1-bit quant. & 1.0\% correction  &-1.03& 7.57                 & \textbf{22.81} & 0.97          & \textbf{30.52}\\ 
 \raisebox{0pt}[0pt][0pt]{\rotatebox{90}{\makebox[0pt][c]{  ResNet32 }}}
 &1-bit quant. & 2.0\% correction  &-1.07& 7.61                 & 19.54         & 0.96           & 19.85\\
 &1-bit quant. & 3.0\% correction  &-1.10& 7.29                 & 17.14         & 0.94           & 15.80 \\
 &1-bit quant. & 5.0\% correction  &-1.14& \textbf{7.09}        & 13.84         & 0.92           & 11.56 \\
 \midrule
 &\multicolumn{2}{l}{\textbf{R}}  &-0.81& 6.58                 & 1.00           & \textbf{1.00} & 1.00 \\
 &1-bit quant. & 0.5\% correction  &-1.08& 6.77                 & \textbf{25.04} & 0.98          & \textbf{49.79}\\ 
 \raisebox{0pt}[0pt][0pt]{\rotatebox{90}{\makebox[0pt][c]{  \hspace{1em} ResNet56 }}}
 &1-bit quant. & 1.0\% correction  &-1.13& 6.73                 & 22.87         & 0.97           & 32.04\\
 &1-bit quant. & 2.0\% correction  &-1.17& 6.70                 & 19.55         & 0.96           & 20.46\\
 &1-bit quant. & 3.0\% correction  &-1.18& \textbf{6.23}        & 17.11         & 0.94           & 15.98 \\
 \midrule
 &\multicolumn{2}{l}{\textbf{R}}  &-0.77& 6.02                 & 1.00           & \textbf{1.00} & 1.00 \\
 &1-bit quant. & 0.5\% correction  &-1.16& 6.20                 & \textbf{25.03} & 0.99          & \textbf{55.63}\\ 
 \raisebox{0pt}[0pt][0pt]{\rotatebox{90}{\makebox[0pt][c]{  \hspace{1em} ResNet110 }}}
 &1-bit quant. & 1.0\% correction  &-1.20& 5.80                 & 22.80         & 0.98           & 33.94\\
 &1-bit quant. & 2.0\% correction  &-1.23& 5.66                 & 19.47         & 0.96           & 23.27 \\
 &1-bit quant. & 3.0\% correction  &-1.25& \textbf{5.58}       & 17.04         & 0.95           & 17.84 \\
\bottomrule
\end{tabular}
\caption{Results of running 1-bit quantization with corrections on the ResNets (\textbf{R}-s) of different depth. We report test error, the reduction ratio of storage($\rho_\text{s}$), floating point additions ($\rho_+$), and multiplications ($\rho_\times$), under assumption of efficient implementation (section \ref{section:efficient_impl}); logarithms are base 10. We trade-off additions to multiplications, e.g., for ResNet110 with 3\% corrections, the number of multiplications is decreased by factor of $17$ with only a 5\% increase in the number of additions. Also, notice that with a higher percentage of the corrections the compressed model achieves better than reference test error.}
\label{table:q_p}
\end{table}

\begin{figure}
\psfrag{resnet20}[l][l]{ResNet20}
\psfrag{resnet32}[l][l]{ResNet32}
\psfrag{resnet56}[l][l]{ResNet56}
\psfrag{resnet110}[l][l]{ResNet110}
\psfrag{qpchoi}[l][l]{~\small P$\rightarrow$Q$\rightarrow$H.C \cite{Choi_17a}}
\psfrag{agusts17}[l][l]{~\small Q$\rightarrow$H.C \cite{Agusts_17a}}
\psfrag{qzhu17}[l][l]{~\small Q \cite{Zhu_17a}}
\psfrag{yin18b}[l][l]{~\small Q \cite{Yin_18b}}
\psfrag{cp}[l][l]{~\small P \cite{CarreirIdelbay18a}}
\psfrag{err}[B][]{test error $E_{\text{test}}$, \%}
\psfrag{loss}[B][]{loss L}
\psfrag{CR}[c][b]{compression ratio}
\begin{tabular}{@{}r@{\hspace{0.04\linewidth}}r@{}}
\includegraphics[height=0.41\linewidth]{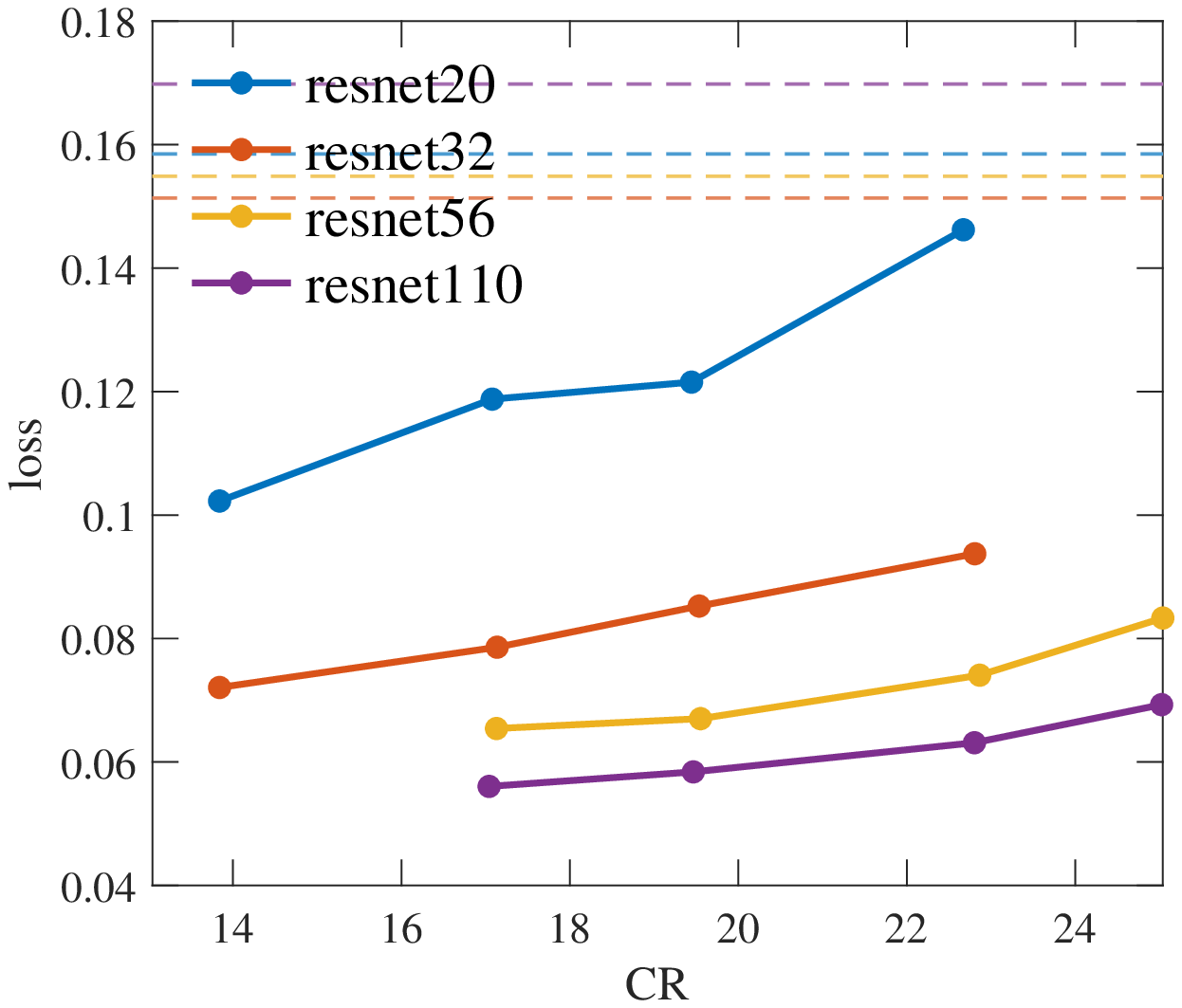} &
\includegraphics[height=0.41\linewidth]{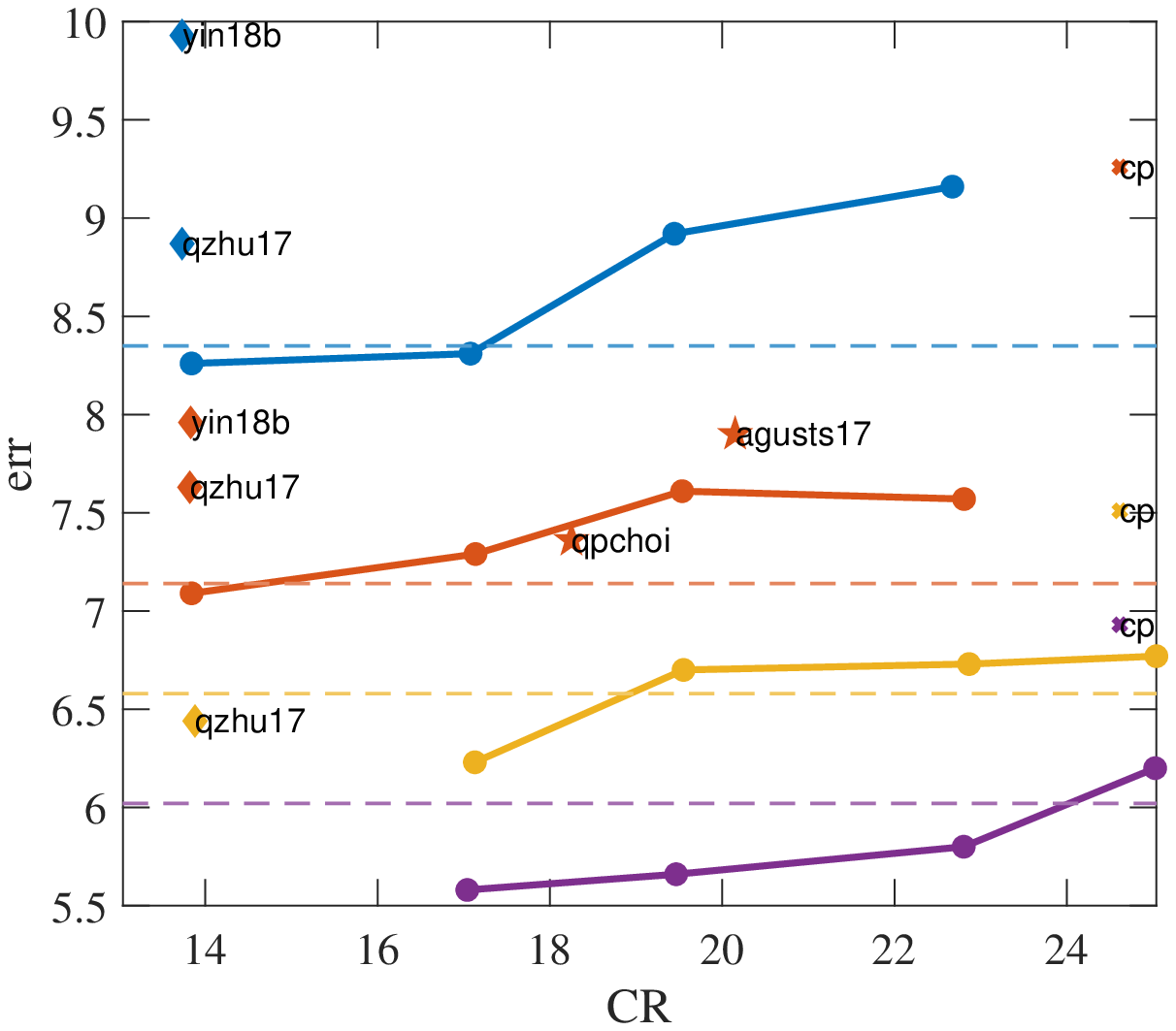}
\end{tabular}
\caption{Results of running 1-bit quantization with corrections on ResNets of different depth, we plot train loss and test error as a function of compression (See Table~\ref{table:q_p} for more detail). Here, thick lines --- our results, horizontal thin lines --- reference nets. We also additionally indicate (via markers) results from the literature: Q --- quantization, P---pruning, H.C --- Huffman coding, right arrow indicates nesting the compression via ``add'' combination.}
\label{fig:qp_plots}
\end{figure}

\begin{figure}
\centering
\psfrag{corrections}[B][]{correction, \%}
\psfrag{layers}[t][t]{layers}
\begin{tabular}{@{}c@{\hspace{2em}}c@{}}
ResNet20 & ResNet32 \\
\psfrag{low}[l][l]{1\% corr.}
\psfrag{high}[l][l]{5\% corr.}
\includegraphics[clip,height=0.42\linewidth]{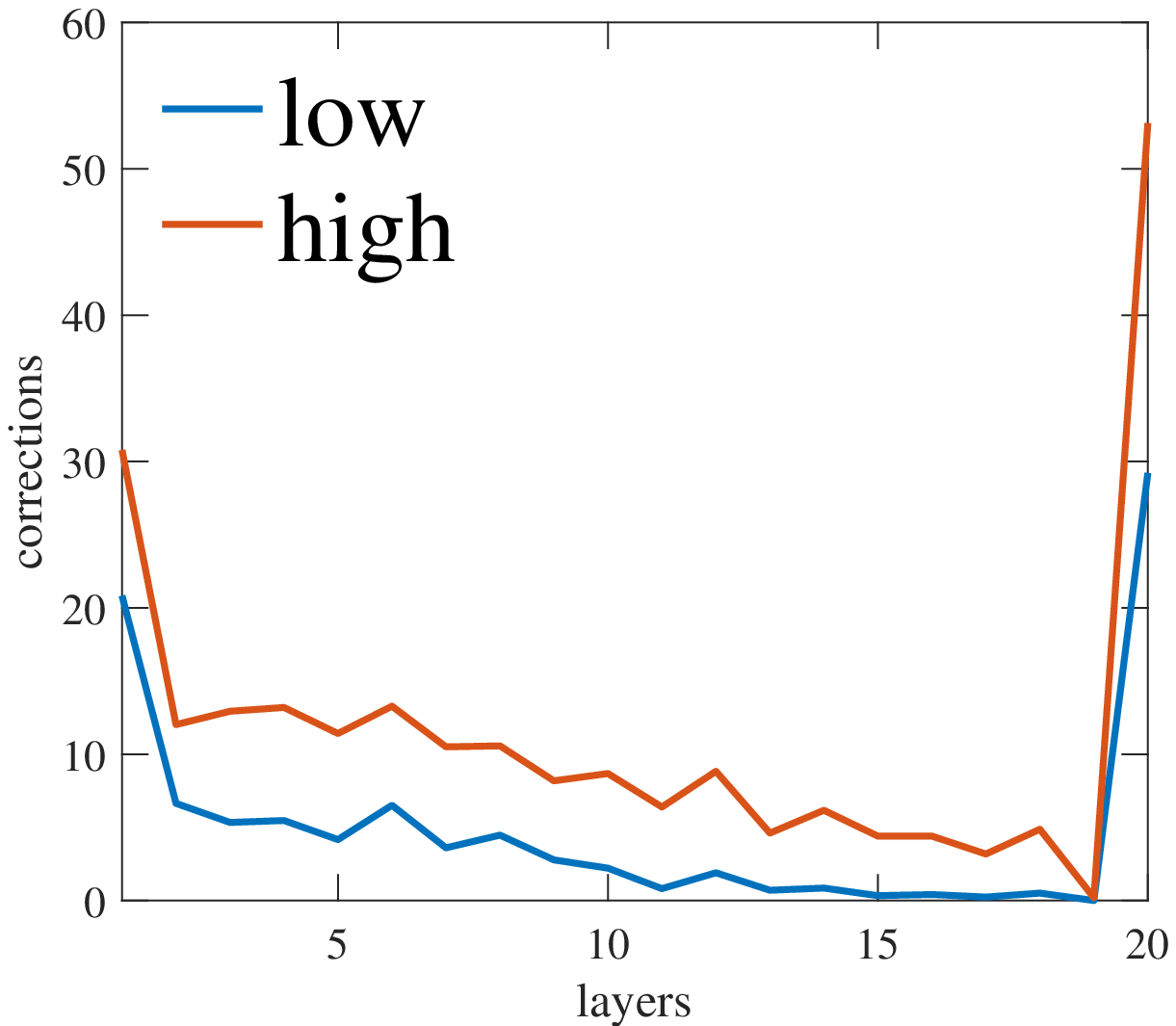} &
\psfrag{low}[l][l]{1\% corr.}
\psfrag{high}[l][l]{5\% corr.}
\includegraphics[clip,height=0.42\linewidth]{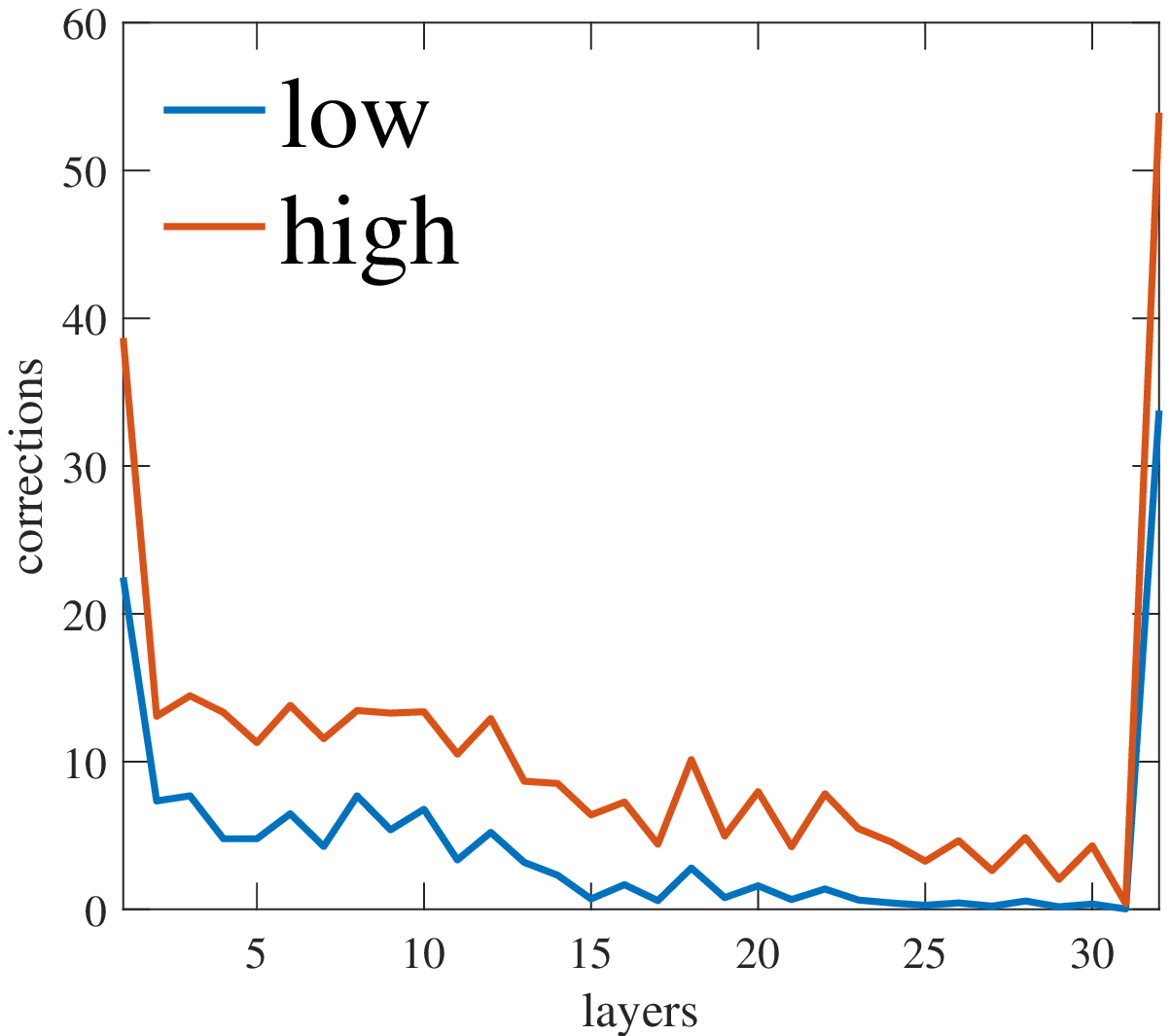} \\[2ex]
ResNet56 & ResNet110 \\[0.5em]
\psfrag{low}[l][l]{0.5\% corr.}
\psfrag{high}[l][l]{3\% corr.}
\includegraphics[clip,height=0.42\linewidth]{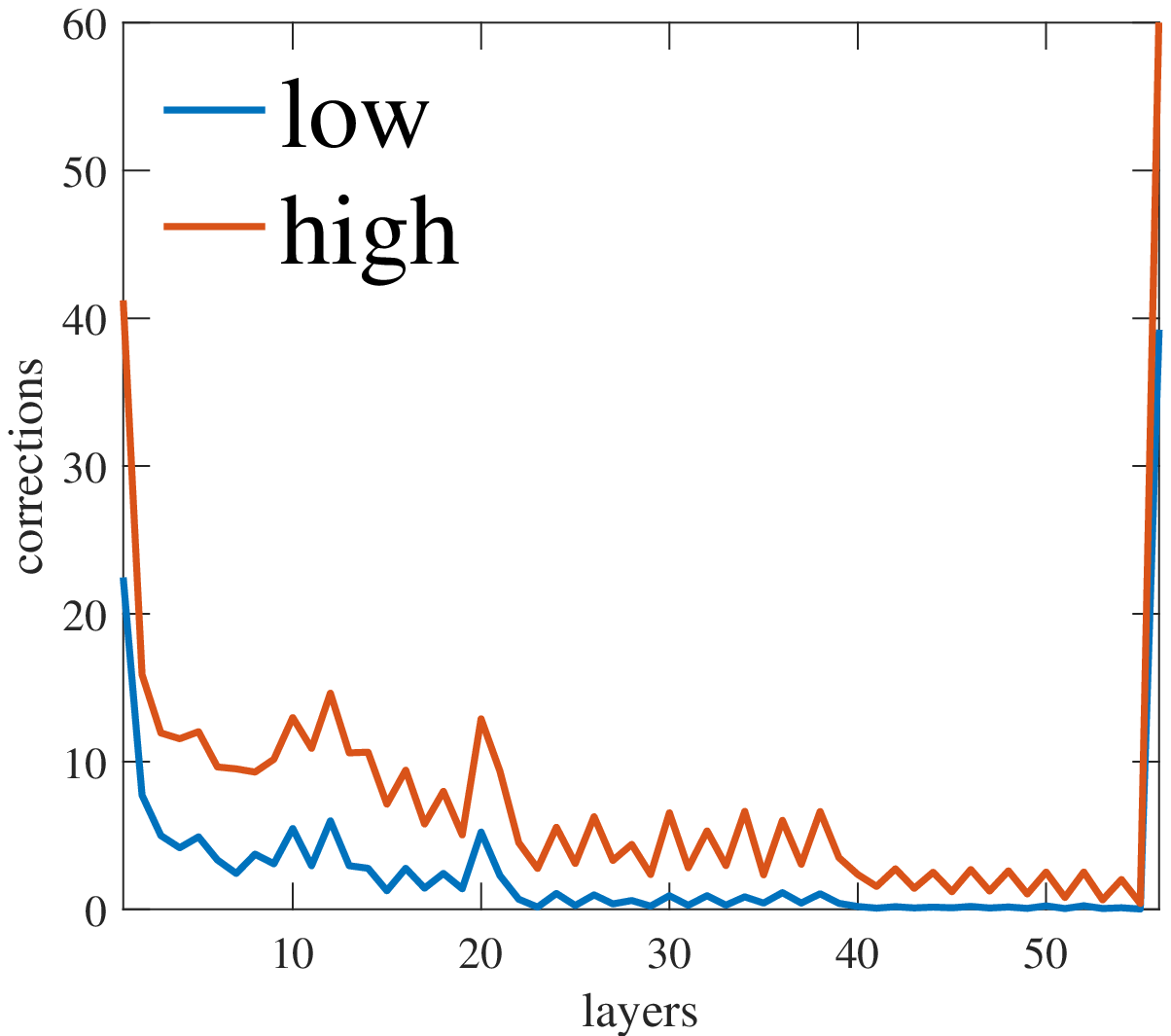} &
\psfrag{low}[l][l]{0.5\% corr.}
\psfrag{high}[l][l]{3\% corr.}
\includegraphics[clip,height=0.42\linewidth]{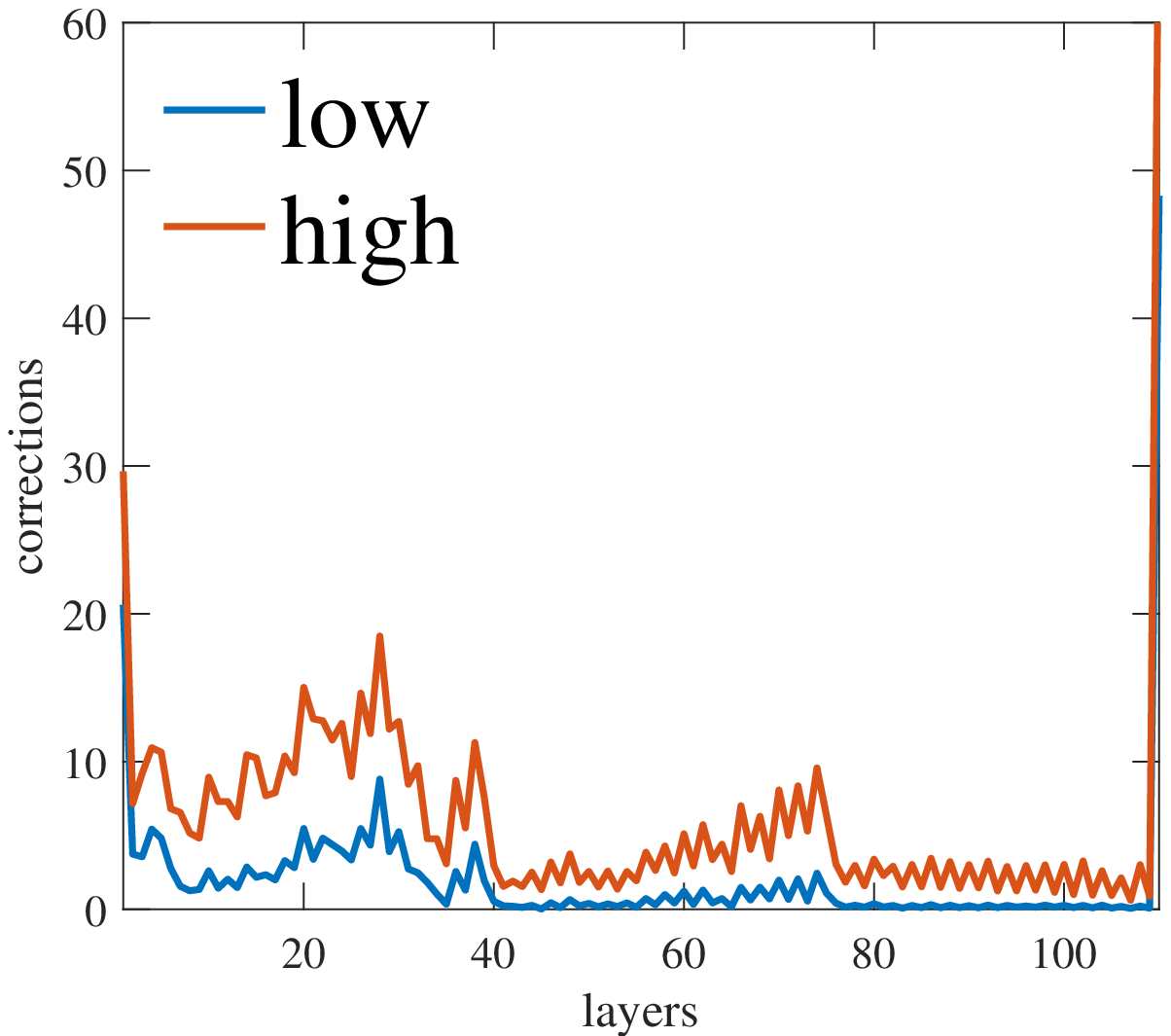} 
\end{tabular}
\caption{For every ResNet compressed with Q+P, we plol the proportion of weight being corrected at every level when the different amount of corrections are allowed. We see that the most corrected layers are the first and the last ones, which coincide with heuristics in the literature of not compressing the first and last layers.}
\label{fig:qp_sprs}
\end{figure}

\subsection{Quantization plus low-rank, Q+L}
\label{section:q+l}
We run quantization with low-rank corrections on ResNet-s of depth 20, 32, 56, and 110. Each layer is quantized with its own codebook of size 2 applied to weights and corrected with $r$-rank matrices. We set $r$ to have the same value throughout all layers.

We follow the procedure for the reference and compressed net training as described in section~\ref{section:resnet_qp} with minor changes. We use 45 LC iterations for all networks, however, learning rates are different: for ResNet20 we set learning rate at the beginning of each step to be 0.01 decayed by 0.94 after every epoch, and for other ResNet-s the initial learning rate for every L-step is 0.007 decayed by 0.94 after every epoch. The results are given in Table~\ref{table:q_lr}. We describe the low-rank parameterization of every convolutional layer next.

\subsubsection{Low-rank parametrization of convolution}

A convolutional layer having $n$ filters of shape $c\times d\times d$ (here $c$ is a number of channels and $d\times d$ is spatial resolution), can be seen as a linear layer with shape $n \times cd^2$ applied to the reshaped volumes of input. Its rank-$r$ parametrization will have two linear mappings with weights $n\times r$ and $r\times cd^2$, which can be efficiently implemented as a sequence of two convolutional layers: the first with $r$ filters of shape $c\times d \times d$ and the second with $n$ filters of shape $r\times 1\times 1$.

\begin{table}[t]
\centering
\begin{tabular}{@{}ll@{ + }lccrrr@{}}
\toprule
 &\multicolumn{2}{c}{Model} & $\log L$& $E_\text{test}$, \% & $\rho_\text{s}$ & $\rho_+$ & $\rho_\times$ \\
   \midrule 
   &\multicolumn{2}{l}{\textbf{R}}      &-0.80& \textbf{8.35}        & 1.00           & \textbf{1.00} & 1.00 \\
 &1-bit quant. & $\rankop=1$   & -0.77 & 9.71                 & \textbf{20.71} & 0.96          & \textbf{21.45}\\ 
 \raisebox{0pt}[0pt][0pt]{\rotatebox{90}{\makebox[0pt][c]{\hspace{1.5em}  ResNet20 }}}
 &1-bit quant. & $\rankop=2$   & -0.84 & 9.30                 & 16.62         & 0.92           & 11.26\\
 &1-bit quant. & $\rankop=3$   & -0.89 & 8.64                 & 13.88         & 0.89           & 7.64 \\
 \midrule
 &\multicolumn{2}{l}{\textbf{R}}        &-0.82& \textbf{7.14}        & 1.00           & \textbf{1.00} & 1.00 \\
 &1-bit quant. & $\rankop=1$   & -0.99 & 7.90                 & \textbf{20.94} & 0.97          & \textbf{21.89}\\ 
 \raisebox{0pt}[0pt][0pt]{\rotatebox{90}{\makebox[0pt][c]{\hspace{1.5em}  ResNet32 }}}
 &1-bit quant. & $\rankop=2$   & -1.04 & 8.06                 & 16.81         & 0.92           & 11.47\\
 &1-bit quant. & $\rankop=3$   & -1.10 & 7.52                 & 14.04         & 0.89           & 7.44 \\
 \midrule
 &\multicolumn{2}{l}{\textbf{R}}        &-0.81& 6.58                 & 1.00           & \textbf{1.00} & 1.00 \\
 &1-bit quant. & $\rankop=1$   & -1.13 & 7.19                 & \textbf{21.04} & 0.96          & \textbf{22.19}\\ 
 \raisebox{0pt}[0pt][0pt]{\rotatebox{90}{\makebox[0pt][c]{  \hspace{1.5em} ResNet56 }}}
 &1-bit quant. & $\rankop=2$   & -1.19 & 6.51                 & 16.91         & 0.92           & 11.61\\
 &1-bit quant. & $\rankop=3$   & -1.22 & \textbf{6.29}        & 14.10         & 0.89           &  7.87 \\
 \midrule
 &\multicolumn{2}{l}{\textbf{R}}        &-0.77& 6.02                 & 1.00           & \textbf{1.00} & 1.00 \\
 &1-bit quant. & $\rankop=1$   & -1.19 & 5.98                 & \textbf{21.11} & 0.96          & \textbf{22.38}\\ 
 \raisebox{0pt}[0pt][0pt]{\rotatebox{90}{\makebox[0pt][c]{  \hspace{1.5em} ResNet110 }}}
 &1-bit quant. & $\rankop=2$   & -1.24 & 5.93                 & 16.96         & 0.92           & 11.70\\
 &1-bit quant. & $\rankop=3$   & -1.27 & \textbf{5.50}        & 14.18         & 0.89           & 7.92 \\
\bottomrule
\end{tabular}
\caption{Results of running 1-bit quantization plus low-rank on ResNet-s (\textbf{R}-s) of different depth.  We report test error, the reduction ratio of storage ($\rho_\text{s}$), floating point additions ($\rho_+$) and multiplications ($\rho_\times$), under assumption of efficient implementation (section~\ref{section:efficient_impl}); logarithms are base 10. Notice how on the 110-layers ResNet, all compressed models achieve better than the reference test error.}
\label{table:q_lr}
\end{table}

\begin{figure}
\psfrag{resnet20}[l][l]{ResNet20}
\psfrag{resnet32}[l][l]{ResNet32}
\psfrag{resnet56}[l][l]{ResNet56}
\psfrag{resnet110}[l][l]{ResNet110}
\psfrag{qpchoi}[l][l]{~\small P$\rightarrow$Q$\rightarrow$H.C \cite{Choi_17a}}
\psfrag{agusts17}[l][l]{~\small Q$\rightarrow$H.C \cite{Agusts_17a}}
\psfrag{qzhu17}[l][l]{~\small Q \cite{Zhu_17a}}
\psfrag{wen17}[l][l]{~\small L \cite{Wen_17a}}
\psfrag{yin18b}[l][l]{~\small Q \cite{Yin_18b}}
\psfrag{err}[B][]{test error $E_{\text{test}}$, \%}
\psfrag{loss}[B][]{loss L}
\psfrag{CR}[c][b]{compression ratio}
\begin{tabular}{@{}r@{\hspace{0.04\linewidth}}r@{}}
\includegraphics[height=0.41\linewidth]{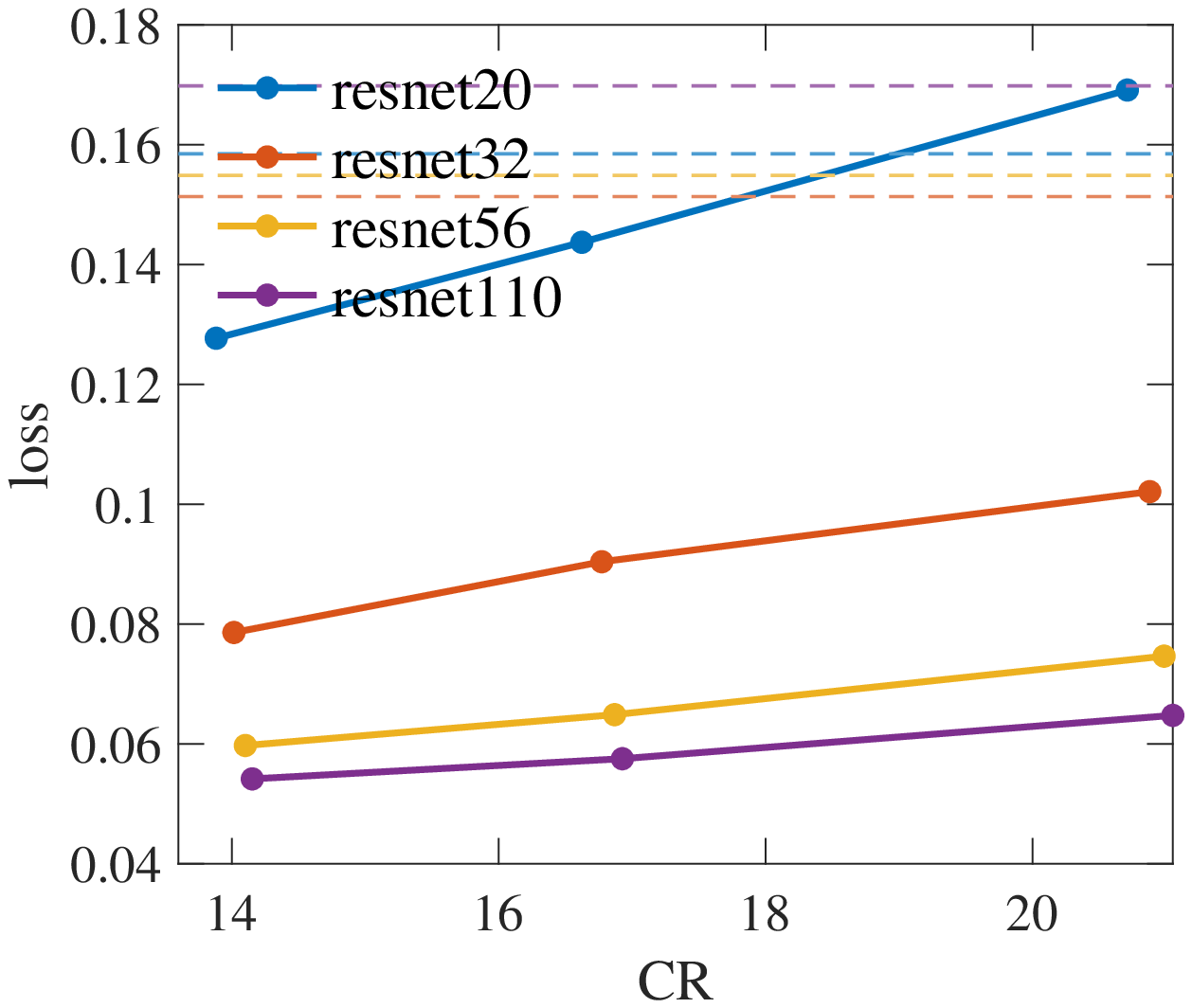} &
\includegraphics[height=0.41\linewidth]{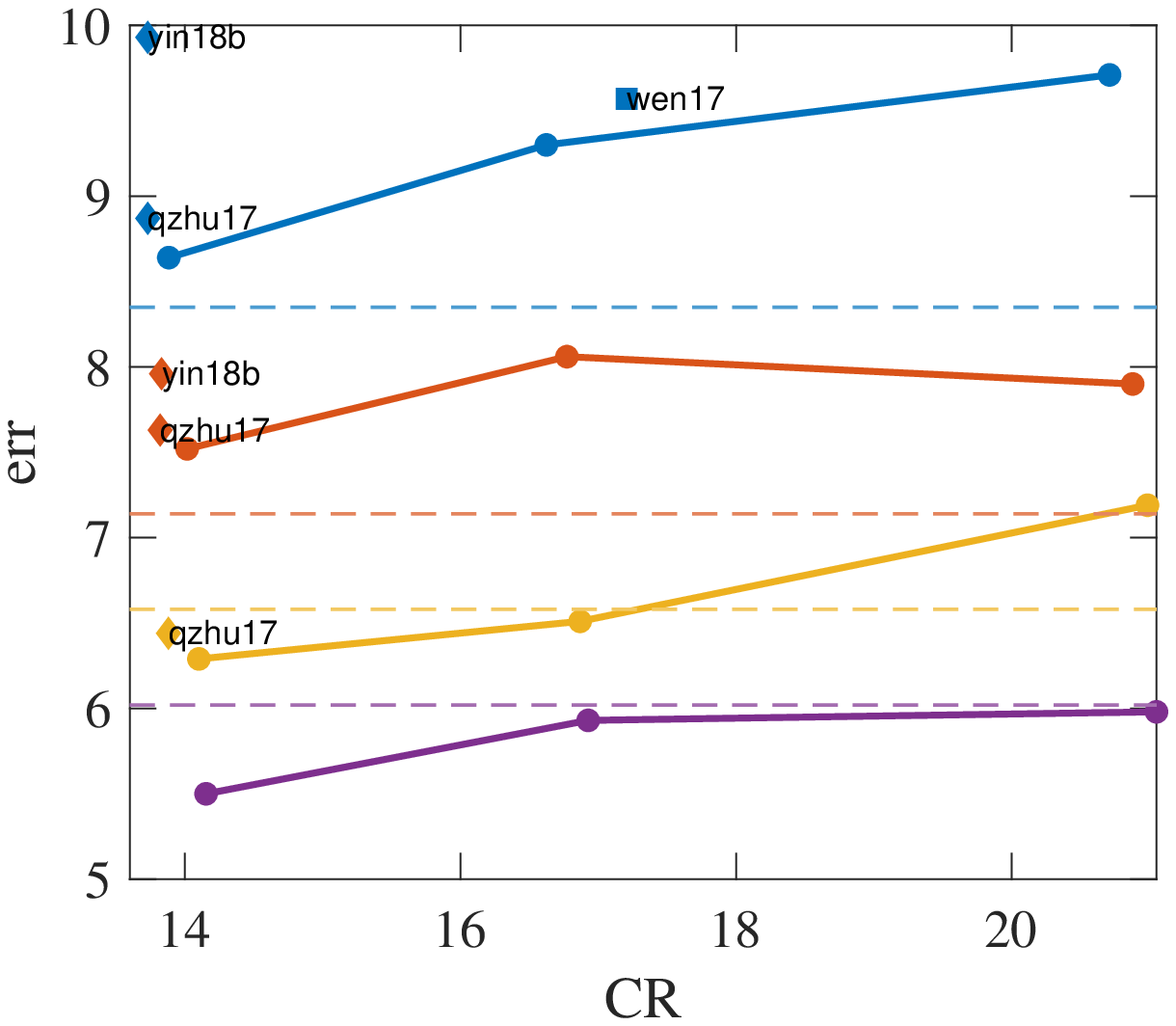}
\end{tabular}
\caption{Results of running 1-bit quantization with corrections on ResNets of different depth, we plot train loss and test error as a function of compression (See Table~\ref{table:q_p} for more detail). Here, thick lines are our results, and horizontal thin lines are reference nets. We also additionally indicate (via markers) results from the literature involving quantization (Q) and low-rank (L).}
\label{fig:ql_plots}
\end{figure}

\subsection{Low-rank plus pruning, L+P}
\label{section:l+p}
We compress using a low-rank plus pruning combination the ResNet-s on CIFAR10 of depth 20, 32, 56, and 110; and the VGG16 adapted for the CIFAR10. Each layer is compressed with $r$-rank matrices and point-wise corrections. We set $r$ to be the same throughout all layers; we set an amount of point-wise corrections (pruning) for the entire network in terms of percentage of a total number of weights. The results are in Table~\ref{table:p_lr}.

\subsubsection{ResNets}

We follow the procedure for reference and compressed net training as described in section~\ref{section:resnet_qp} with minor changes. We 45 LC iterations for all networks, with $\mu_k=5\times 10^{-4}\times 1.1^k$ at $k$-th iteration. The learning rate at the beginning of each L-step is 0.01 and decayed by 0.94 after every epoch; we run each L-step for 20 epochs. 

\subsubsection{VGG-16}

We train the VGG16 \citep{SimonyZisser15a} adapted for the CIFAR10 dataset. We employ batch normalization after every layer except the last, and dropout after fully connected layers (see Table~\ref{t:VGG16} for the full details). Images in the dataset are normalized channel-wise to have zero mean and variance one. For training, we use simple augmentation (random horizontal flip, zero pad with 4 pixels on each side and randomly crop $32\times32$ image). For test we use normalized images without augmentation. We report results corresponding to a model with the smallest loss. The loss is average cross entropy with $\ell_2$ weight decay. The resulting net has 15M parameters.
\begin{description}
\item[Training the reference net] The model is trained with Nesterov's accelerated SGD \citep{Nester83a} with momentum 0.9 on minibatches of size 128. The loss is average cross entropy with weight decay of $5\times 10^{-4}$. The network is trained for 300 epochs with an initial learning rate of 0.05 decayed by 0.97716 after every epoch. The resulting test error is 6.45\%.
\item[Training compressed models] Our algorithm is run for 50 LC iterations, with $\mu=5\times 10^{-4}\times 1.1^k$ at $k$-th iteration. Each L-step is performed by Nesterov's SGD with momentum of 0.9 and runs for 20 epochs with a learning rate of $0.0007
$ at the beginning of the step; and decayed by 0.98 after each epoch. We do not perform any finetuning. Running the LC algorithm with finetuning is 3.4 times longer comparing to the training of the reference network. The results are presented in Table~\ref{table:p_lr}.
\end{description}

\begin{table}[t]
\centering
  \begin{tabular}[t]{@{}ll@{}}
    \toprule
    Layer & Connectivity \\
    \midrule
    Input & $32 \times 32$ image \\
    1 & \caja{t}{l}{convolutional, 64 $3 \times 3$ filters (stride=1), 
      followed by BN and ReLU} \\
    2 & \caja{t}{l}{convolutional, 64 $3 \times 3$ filters (stride=1), 
      followed by BN and ReLU} \\
      & \caja{t}{l}{max pool, $2 \times 2$ window (stride=2)} \\
    3 & \caja{t}{l}{convolutional, 128 $3 \times 3$ filters (stride=1), 
      followed by BN and ReLU} \\
    4 & \caja{t}{l}{convolutional, 128 $3 \times 3$ filters (stride=1), 
      followed by BN and ReLU} \\
      & \caja{t}{l}{max pool, $2 \times 2$ window (stride=2)} \\
    5 & \caja{t}{l}{convolutional, 256 $3 \times 3$ filters (stride=1), 
      followed by BN and ReLU} \\
    6 & \caja{t}{l}{convolutional, 256 $3 \times 3$ filters (stride=1), 
      followed by BN and ReLU} \\
    7 & \caja{t}{l}{convolutional, 256 $3 \times 3$ filters (stride=1), 
      followed by BN and ReLU} \\
      & \caja{t}{l}{max pool, $2 \times 2$ window (stride=2)} \\
    8 & \caja{t}{l}{convolutional, 512 $3 \times 3$ filters (stride=1), 
      followed by BN and ReLU} \\
    9 & \caja{t}{l}{convolutional, 512 $3 \times 3$ filters (stride=1), 
      followed by BN and ReLU} \\
    10 & \caja{t}{l}{convolutional, 512 $3 \times 3$ filters (stride=1), 
      followed by BN and ReLU} \\
      & \caja{t}{l}{max pool, $2 \times 2$ window (stride=2)} \\
    11 & \caja{t}{l}{convolutional, 512 $3 \times 3$ filters (stride=1), 
      followed by BN and ReLU} \\
    12 & \caja{t}{l}{convolutional, 512 $3 \times 3$ filters (stride=1), 
      followed by BN and ReLU} \\
    13 & \caja{t}{l}{convolutional, 512 $3 \times 3$ filters (stride=1), 
      followed by BN and ReLU} \\
      & \caja{t}{l}{max pool, $2 \times 2$ window (stride=2)} \\
    14 & \caja{t}{l}{fully connected, 512 neurons and dropout \\ with $p=0.5$, followed by ReLU} \\
    15 & \caja{t}{l}{fully connected, 512 neurons and dropout \\ with $p=0.5$, followed by ReLU} \\
    \caja{t}{l}{16 \\ (output)} & \caja{t}{l}{fully connected, 10 neurons, followed by softmax} \\
    \midrule
    \multicolumn{2}{c}{14981952 weights, 8970 biases, 8448 running means/variances for BN} \\
    \bottomrule
  \end{tabular}
  \caption{Structure of the adapted VGG16 network for CIFAR10 dataset. BN--Batch Normalization, ReLU -- rectified linear units.}
  \label{t:VGG16}
\end{table}

\begin{table}[t]
\centering
\begin{tabular}{@{}ll@{ + }lrrrrr@{}}
\toprule
 &\multicolumn{2}{c}{Model} & $\log L$& $E_\text{test}$, \% & $\rho_\text{s}$ & $\rho_+$ & $\rho_\times$ \\
   \midrule 
   &\multicolumn{2}{l}{\textbf{R}}      &\textbf{-0.80}& \textbf{8.35}        & 1.00           & 1.00 & 1.00 \\
 &3\% correction & $\rankop=3$   & -0.51 & 12.02                 & \textbf{15.84} & \textbf{5.72}          & \textbf{5.72}\\ 
 \raisebox{0pt}[0pt][0pt]{\rotatebox{90}{\makebox[0pt][c]{\hspace*{2.5em} RN-20 }}}
 &3\% correction  & $\rankop=4$   & -0.55 & 11.44                 & 13.34         & 4.70           & 4.70\\
 
 \midrule
 &\multicolumn{2}{l}{\textbf{R}}        &\textbf{-0.82}& \textbf{7.14}        & 1.00           & {1.00} & 1.00 \\
 &3\% correction & $\rankop=3$   & -0.70 & 9.55                 & \textbf{15.98} & \textbf{5.92}          & \textbf{5.92}\\ 
 \raisebox{0pt}[0pt][0pt]{\rotatebox{90}{\makebox[0pt][c]{\hspace*{2.5em} RN-32 }}}
 &3\% correction & $\rankop=4$   & -0.79 & 9.14                 & 13.47         & 4.85           & 4.85\\
 
 \midrule
 &\multicolumn{2}{l}{\textbf{R}}        &-0.81& \textbf{6.58}                 & 1.00           & 1.00 & 1.00 \\
 &3\% correction & $\rankop=3$   & -0.90 & 8.38                 & \textbf{16.06} & \textbf{5.99}          & \textbf{5.99}\\ 
 \raisebox{0pt}[0pt][0pt]{\rotatebox{90}{\makebox[0pt][c]{  \hspace*{2em} RN-56}}}
 &3\% correction & $\rankop=4$   & \textbf{-0.98} & 8.02                 & 13.54         & 4.90           & 4.90\\

 \midrule
 &\multicolumn{2}{l}{\textbf{R}}        &-0.77& \textbf{6.02}                 & 1.00           & 1.00 & 1.00 \\
 &3\% correction & $\rankop=3$   & -1.11 & 6.63                 & \textbf{16.03} & \textbf{6.38}          & \textbf{6.38}\\ 
 \raisebox{0pt}[0pt][0pt]{\rotatebox{90}{\makebox[0pt][c]{  \hspace*{2.5em} RN-110 }}}
 &3\% correction & $\rankop=4$   & \textbf{-1.14} & 6.36                 & 13.53         & 5.17           & 5.17\\

 \midrule
 &\multicolumn{2}{l}{\textbf{R}}        &-0.89& 6.45                 & 1.00           & 1.00 & 1.00 \\
 &3\% correction & $\rankop=2$   & -1.04 & 6.66                 & \textbf{60.99} & \textbf{8.20}          & \textbf{8.20}\\ 
 \raisebox{0pt}[0pt][0pt]{\rotatebox{90}{\makebox[0pt][c]{  \hspace*{2.5em} VGG16 }}}
 &3\% correction & $\rankop=3$   & \textbf{-1.05} & \textbf{6.65}                 & 56.58         & 7.81           & 7.81\\
\bottomrule
\end{tabular}
\caption{Results of running low-rank with pointwise corrections (L+P) on ResNets of different depth and VGG16.  We report test error, the reduction ratio of storage ($\rho_\text{s}$), floating point additions ($\rho_+$) and multiplications ($\rho_\times$), under the assumption of efficient implementation (section~\ref{section:efficient_impl}). RS stands for ResNet, and \textbf{R} for reference models; logarithms are base 10. In general, this type of compression performs much better on VGG16 than on ResNets.}
\label{table:p_lr}
\end{table}

\subsection{Quantization plus low-rank plus pruning, Q+L+P }
\label{section:q+l+p}
In order to verify the complementarity benefits of additive compressions, we run experiments on ResNet-s to see whether adding another compression is going to help. As we saw in Table\ref{table:q_lr}, the  quantization with additive low-rank (Q+L) achieves good performance on the ResNet110, having compressed models outperforming the reference. However, for smaller depth ResNets the Q+L scheme does not perform as good. We choose to compress these ResNets of depth 20, 32, 56 with Q+L scheme with a small number additional point-wise corrections (pruning), turning it to Q+L+P scheme. Table~\ref{table:q+l+p} shows the results.

\begin{table}[t]
\centering
\begin{tabular}{@{}ll@{ + }lrcrrr@{}}
\toprule
 &\multicolumn{2}{c}{Model} & $\log L$ &  $E_\text{test}$, \% & $\rho_\text{s}$ & improvement $\Delta$ \\
 \midrule
   &\multicolumn{2}{l}{\textbf{R}}                  & -0.80 & \textbf{8.35}                & 1.00           &  \\
 &1-bit quant. & $\rankop=1$ + 0.37\% correction    & -0.87 & 9.55                 & \textbf{19.55} &  +0.16        \\ 
 \raisebox{0pt}[0pt][0pt]{\rotatebox{90}{\makebox[0pt][c]{\hspace{2.5em}  RN20 }}}
 &1-bit quant. & $\rankop=2$ + 0.37\% correction    & -0.93 & 9.14                 & 15.85          & +0.16          \\
   \midrule
 &\multicolumn{2}{l}{\textbf{R}}                    & -0.81 & \textbf{7.14}        & 1.00           &  \\
 &1-bit quant. & $\rankop=1$ + 0.32\% correction    & -1.08 & 7.47                   & \textbf{19.82} & +0.43        \\ 
 \raisebox{0pt}[0pt][0pt]{\rotatebox{90}{\makebox[0pt][c]{\hspace{2.5em}  RN32 }}}
 &1-bit quant. & $\rankop=2$ + 0.32\% correction    & -1.12 & 7.27                   & 16.81          & +0.79         \\
    \midrule
 &\multicolumn{2}{l}{\textbf{R}}                    & -0.77 & 6.58                   & 1.00           &              \\
 &1-bit quant. & $\rankop=1$ + 0.35\% correction    & -1.20 & 6.64                   & \textbf{19.82} & +0.55          \\ 
 \raisebox{0pt}[0pt][0pt]{\rotatebox{90}{\makebox[0pt][c]{\hspace{2.5em}  RN56 }}}
 &1-bit quant. & $\rankop=2$ + 0.35\% correction    & -1.23 &\textbf{6.47}                 & 16.10         & +0.04    \\
\bottomrule
\end{tabular}
\caption{Results of running our algorithm with three compressions combined: quantization $+$ low-rank corrections $+$ point-wise corrections (Q+L+P). We choose to add a few pointwise compression to some of the Q+L experiments in Table~\ref{table:q_lr}. We report loss $L$, test error $E_\text{test}$, storage compression ratio ($\rho_\text{s}$), and improvement $\Delta$ of test error over Q+L scheme (higher is better). \textbf{R} stands for the reference network, and RS is shorthand for ResNet. As you can see, adding few point-wise corrections improve train loss and test error in all compression schemes.}
\label{table:q+l+p}
\end{table}

\begin{figure}
\centering
\psfrag{corrections}[B][]{correction, \%}
\psfrag{layers}[t][t]{layers}
\begin{tabular}{@{}c@{\hspace{2em}}c@{}}
ResNet20 & ResNet32 \\
\psfrag{low}[l][l]{Q + r=1 + 0.37\% corr.}
\psfrag{high}[l][l]{Q + r=2 + 0.37\% corr.}
\includegraphics[clip,height=0.42\linewidth]{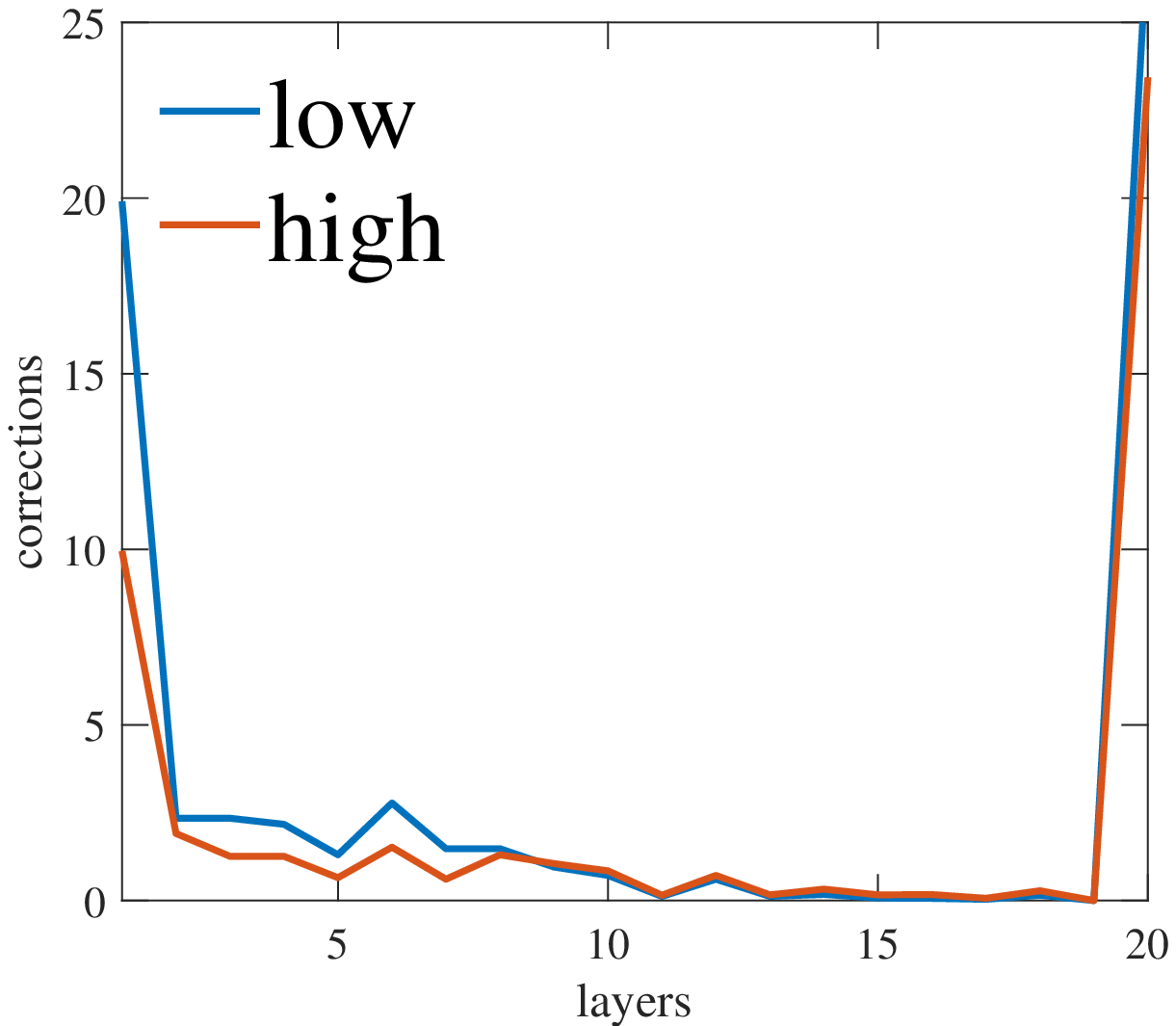} &
\psfrag{low}[l][l]{Q + r=1 + 0.32\% corr.}
\psfrag{high}[l][l]{Q + r=2 + 0.32\% corr.}
\psfrag{corrections}{}
\includegraphics[clip,height=0.42\linewidth]{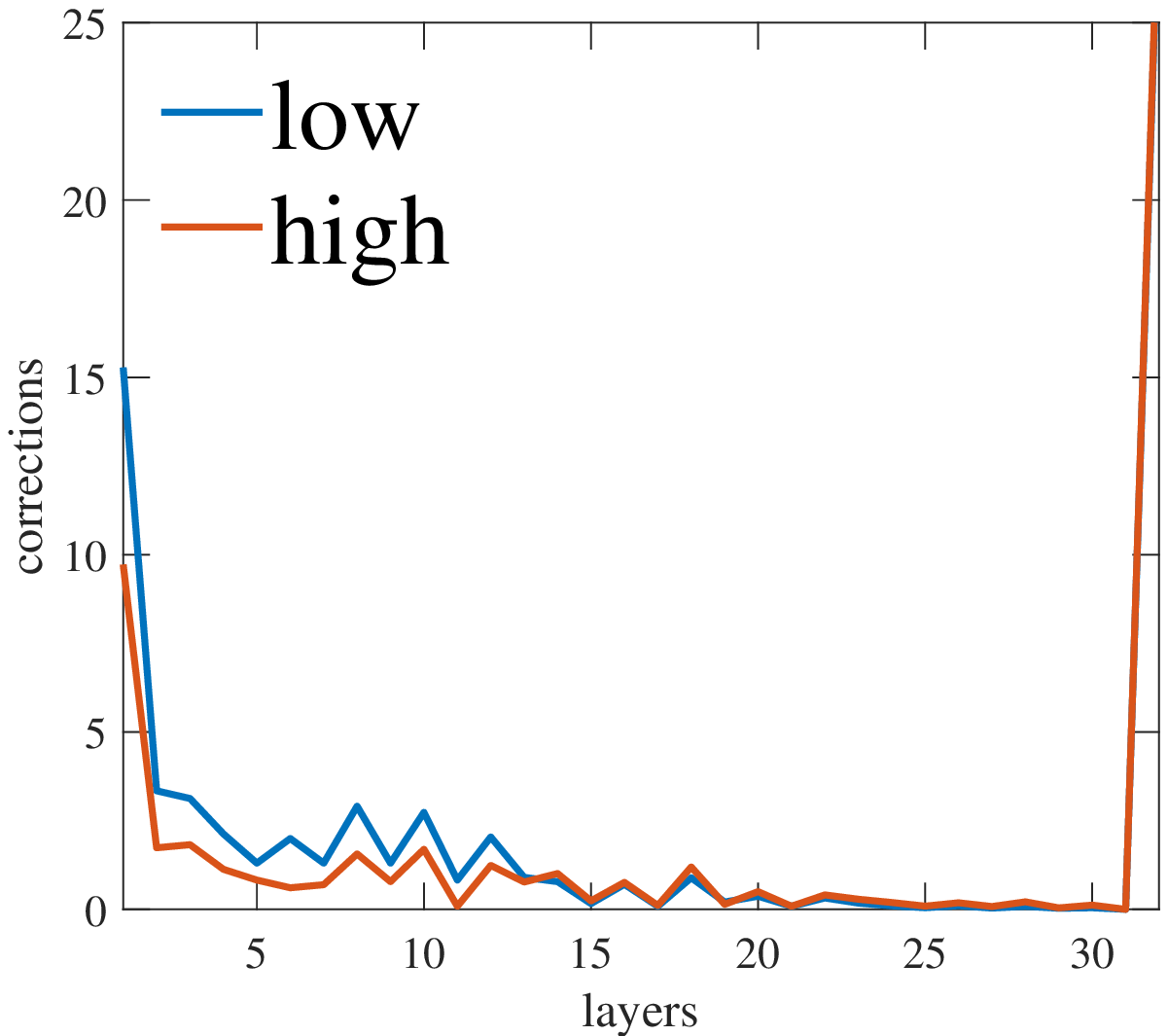} \\[2ex]
ResNet56 \\
\psfrag{low}[l][l]{Q + r=1 + 0.35\% corr.}
\psfrag{high}[l][l]{Q + r=2 + 0.35\% corr.}
\includegraphics[clip,height=0.42\linewidth]{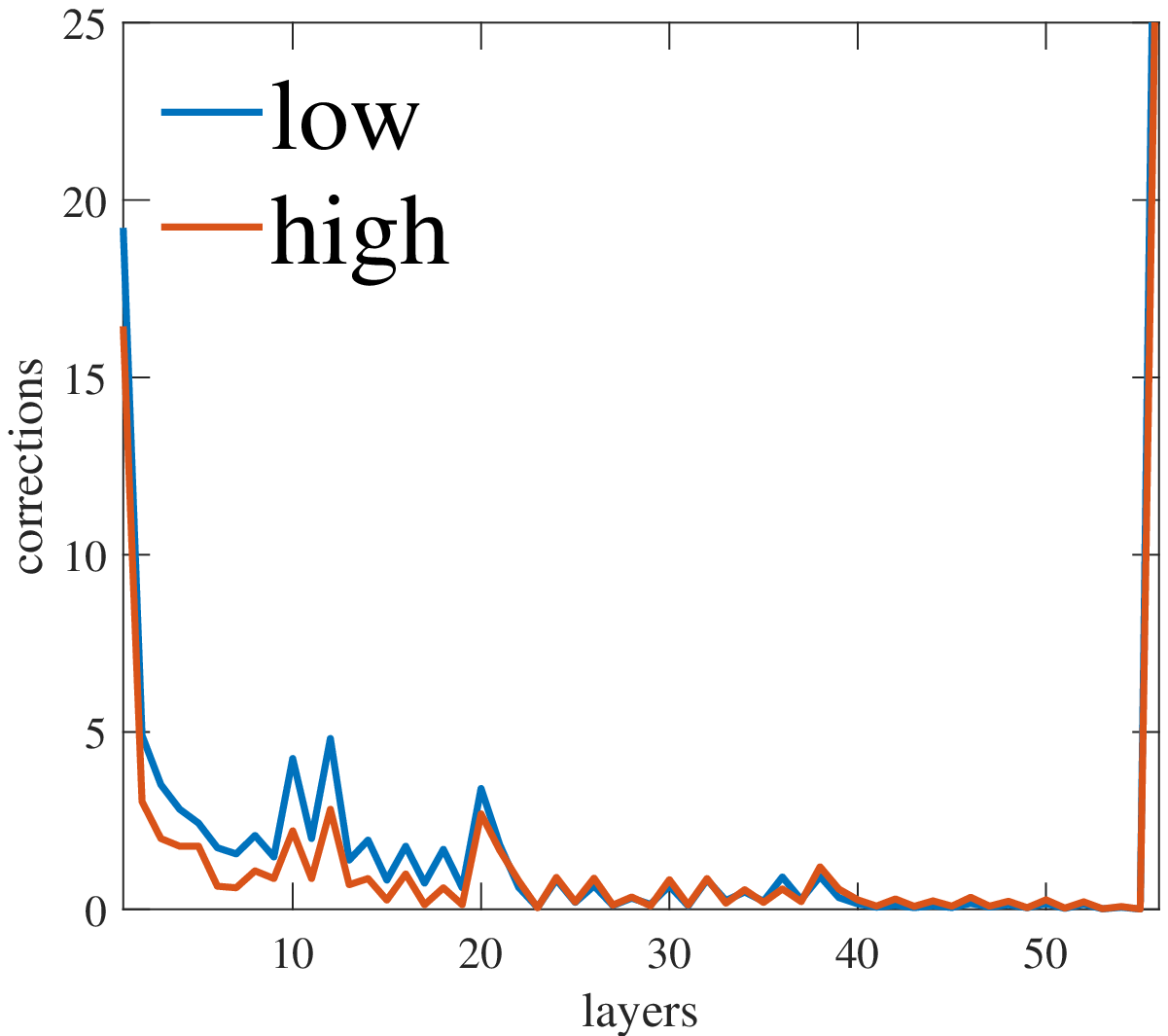} 
\end{tabular}
\caption{We plot here for every ResNet compressed with Q+L+P scheme, the proportion of weight being corrected at every layer when a different amount of rank corrections is applied. As rank increases, the corrections move to other places where it is needed most.}
\label{fig:qp_sprs2}
\end{figure}
\clearpage

\section{Experiments on ImageNet}
\label{section:exp_imagenet}
Due to the large number of possible combinations and scale of each experiment, we limit our attention to quantization with additive point-wise corrections combination (Q+P). Below we give details on training reference and compressed networks. 
\begin{description}
\item[Training the reference net] We train Batch Normalized AlexNet (having 1140 FLOPs) network using data-augmentation of the original paper \citep{Krizhev_12a} on ImageNet ILSVRC2012 dataset \cite{Russak_15a}. During training the images are resized to have the shortest side of 256 pixels length, and random $227\times 227$ part of the image is cropped. Then pixel-wise color mean subtracted and image normalized to have standard variance. During test, we use a central $227\times 227$ crop. We train with SGD on minibatches of size 256, with weight decay of $0.5\times 10^{-4}$ and an initial learning rate of 0.05, which is reduced $10\times$ after every 20 epochs. The trained model achieves Top-1 validation accuracy of 59.57\% and Top-5 validation accuracy of 82.45\%. This is slightly better than the the original paper \citep{Krizhev_12a} and Caffe re-implementation \cite{Jia_14a}.  \emph{Training time} on the NVIDIA Titan V GPU is 17 hours.
\item[Training compressed models] We train Q+P scheme with 1-bit quantization throughout (every layer having its own codebook) and total number of pointwise correction $\kappa$ to be 0.5M and 1M. These models are trained for 35 LC steps, with $\mu=5\times 10^{-4}\times 1.12^k$ at $k$-th iteration. Each L-step is performed by Nesterov's SGD with momentum 0.9 and runs for 10 epochs (with 256 images in a minibatch) with a learning rate of $0.001$ at the beginning of the step and decayed by 0.94 after each epoch. Finally, we perform fine-tuning for 10 epochs with a learning rate of $0.001$ decayed by 0.9 after every epoch, on minibatches of size 512.  Running the LC algorithm with finetuning is 3.6 times longer comparing to the training of the reference network. \\
The resulting compressed models are quantized and have sparse corrections. To efficiently save them to disk, we utilized the sparse index compression technique described in \citet{Han_16a}, and used the standard compressed save option of numpy library. 
\end{description}

\begin{table}[t]
\centering
\begin{tabular}{@{}ll@{ + }ccrrr@{}}
\toprule
 &\multicolumn{2}{c}{Model} & $E$ top-1, \% & $E$ top-5, \% & $\rho_\text{s}$  \\
   \midrule
 &\multicolumn{2}{l}{\textbf{R} BN-AlexNet}  & 40.43 & 17.55                & 1.00            \\
 \raisebox{0pt}[0pt][0pt]{\rotatebox{90}{\makebox[0pt][c]{  ours }}}
 &1-bit quant. & 0.5M corrections (0.8\%)  & 39.09  & 16.84 & \textbf{25.96}       \\ 
 &1-bit quant. & 1.0M corrections (1.6\%)  & \textbf{38.94}  & \textbf{16.69} & 22.42       \\
\midrule
& \multicolumn{2}{l}{1-bit (DoReFa) \cite{Zhou_16b}} & 46.10 &  23.70 & 10.35\\
& \multicolumn{2}{l}{1-bit (BWN) \cite{Rasteg_16a}} & 43.20 &  20.59 & 10.35\\
& \multicolumn{2}{l}{1-bit (ADMM) \cite{Leng_18a}} & 43.00 &  20.29 & 10.35\\
& \multicolumn{2}{l}{1-bit (Quantization Net) \cite{Yang_19b}} & 41.20 &  18.30 & 10.35\\
\midrule 
& \multicolumn{2}{l}{multi-bit AlexNet-QNN \cite{Wu_16a}} & 44.24 & 20.92 & 18.76\\
\bottomrule
\end{tabular}
\caption{Results of running 1-bit quantization with 0.5M and 1M corrections on AlexNet and comparison to the quantization methods in the literature. We report top-1 and top-5 errors on the validation set, the compression ratio of parameter storage ($\rho_\text{s}$). Notice that although results in the literature use 1-bit quantization, the compression ratio is rather small due to not compressing first and last layers, keeping them full precision. Also, notice that our compressed model achieves better than the reference validation error.}
\label{table:qp_alexnet}
\end{table}

Table~\ref{table:qp_alexnet} summarizes our results and compares to existing quantization methods. We would like to note that we apply quantization throughout all layers, and allow compression mechanism (sparse additions) to self-correct at necessary positions. This is in contrast to quantization results on AlexNet where first and last layers are excluded from compression to keep overall performance from degradation. Thus, we achieve 25.96$\times$ compression in parameter storage size without any accuracy loss, while best 1-bit quantization methods achieve only 10.35$\times$ compression.

Although the Q+P scheme is quite powerful in representing the weights, its compression ratio with scalar quantization is limited by at most $32\times$, as the minimal amount of bits required to represent one weight is at least 1bit. If we would like to drive it further, compressions need to be nested, and in the following set of experiments we show that we can capitalize on the power of the Q+P scheme and drive the compression further.
To demonstrate it, we decided to further compress using the Q+P scheme the low-rank versions of AlexNet obtained from \cite{IdelbayCarreir20a}, see Table~\ref{table:qp_alexnet_low_rank}. We choose multiple low-rank networks achieving $3.20\times$ to $4.78\times$ reduction in FLOPs and applied our algorithm with varying amount of corrections (P) and fixed 1-bit quantization (Q). 

\begin{table}[t]
\center
\begin{tabular}{@{}l@{\hspace{0.3em}}lccccc@{}}
\toprule
 &\multicolumn{1}{c}{Model} & top-1 & top-5 &  size & compression & MFLOPs  \\
  & & \% & \% & MB & ratio, $\rho_\text{size}$ & \\
   \midrule
 &Caffe-AlexNet\cite{Han_16a}                        & 42.70 & 19.80 & 243.5 & 1  & 724           \\
 \midrule
 & Caffe-AlexNet-QNN \cite{Wu_16a}                   & 44.24 & 20.92 &  13  & 18.7  & 175$^*$ \\
 &P$\rightarrow$Q \cite{Han_16a}                     & 42.78 & 19.70 &  6.9 & 35.2  & 724 \\
 & P$\rightarrow$Q \cite{Choi_17a}                 & 43.80 & \na   &  5.9 & 41.2  & 724 \\
 & P$\rightarrow$Q \cite{TungMori18a}                & 42.10 & \na   &  4.8 & 50.7  & 724 \\
 & P$\rightarrow$Q \cite{Yang_20b}                   & 42.48 & \na   &  4.7 & 51.8  & 724 \\
 & P$\rightarrow$Q \cite{Yang_20b}                   & 43.40 & \na   &  3.1 & 78.5  & 724 \\
 & filter pruning  \cite{Wen_16a}                    &  \na  & 21.63 &  \na & \na   & 231 \\
 & filter pruning  \cite{Yu_18a}                     & 44.13 & \na   &  \na & \na   & 232 \\ 
 & filter pruning \cite{Li_19a}                      & 43.17 & \na   &  232 & 1.04  & 334 \\
 & filter pruning \cite{Ding_19a}                    & 43.83 & 20.47 &  \na & \na   & 492 \\
 & weight pruning \cite{Xiao_19a}                    & 44.10 & \na   &   13 &  18.7 & \na \\
 \midrule
 &\textbf{R} Low-rank AlexNet ($\text{L}_1$)         & 39.61 & 17.40 & 100.9&  2.4 & 227 \\
 &$\text{L}_1$ $\rightarrow$ Q (1-bit) + P (0.25M)   & 39.67 & 17.36 & 3.7  &  65.8  & 227
 \\
 \raisebox{0pt}[0pt][0pt]{\rotatebox{90}{\makebox[0pt][c]{\hspace*{1em}our }}}
 &$\text{L}_1$ $\rightarrow$ Q (1-bit) + P (0.50M)   & 39.25 & 16.97 & 4.3  &  56.9  & 227 \\
 \midrule 
 &\textbf{R} Low-rank AlexNet ($\text{L}_2$)         & 39.61 & 17.40 & 72.4 &  3.6 & 185 \\
 &$\text{L}_2$ $\rightarrow$ Q (1-bit) + P (0.25M)   & 40.19 & 17.50  & 2.8  &  87.6 & 185 \\
 \raisebox{0pt}[0pt][0pt]{\rotatebox{90}{\makebox[0pt][c]{\hspace*{2em}our }}}
 &$\text{L}_2$ $\rightarrow$ Q (1-bit) + P (0.50M)   & 39.97 & 17.35 & 3.4  &  72.15 & 185 \\ 
 \midrule
 &\textbf{R} Low-rank AlexNet ($\text{L}_3$)         & 41.02 & 18.22 & 49.9 &  4.8  & \textbf{152}  \\
 &$\text{L}_3$ $\rightarrow$ Q (1-bit) + P (0.25M)   & 41.27 & 18.44 & \textbf{2.1
} & \textbf{117.3}  & \textbf{151} \\
 \raisebox{0pt}[0pt][0pt]{\rotatebox{90}{\makebox[0pt][c]{\hspace*{2em}our }}}
 &$\text{L}_3$ $\rightarrow$ Q (1-bit) + P (0.50M)   & 40.88  & 18.29 & 2.7 &  90.4 & \textbf{151} \\ 
\bottomrule
\end{tabular}
\vspace{1em}
\caption{Q+P scheme is powerful enough to further compress already downsized models, here (bottom of the table) obtained by low-rank compression (L). We report top-1 validation error, size of the final model in MB, and resulting FLOPs. Shorthands are as follows: P stands for pruning, Q for quantization, L for low-rank, and H.C.\ for Huffman Coding. Numbers with $^*$ assumes efficient software/hardware implementation. }
\label{table:qp_alexnet_low_rank}
\vspace{-1em}
\end{table}

\subsection{Runtime measurements on Jetson Nano}
\label{sec:runtime}
\begin{figure}
  \centering
  \begin{minipage}[t]{0.59\linewidth}
    \centering
    characteristics\\[1em]
    \begin{tabular}{@{}lp{0.75\linewidth}@{}}
    \toprule
    CPU & Quad-core ARM Cortex-A57 at 1479 MHz \\
    GPU & 128 CUDA cores at 922 MHz \\
    RAM & 4 GB 64-bit LPDDR4 at 1600MHz \\
    OS  & Ubuntu 18.04.5 LTS \\
    Kernel & GNU/Linux 4.9.140-tegra  \\
    Storage & 128 GB microSDXC memory card (class UHS-I) \\
    Software & PyTorch v1.6.0, TensorFlow v2.2.0, \newline ONNXRuntime v1.4.0, TensorRT v7.1.3.0 \\ 
    \bottomrule
    \end{tabular} \\[0.1em]
  \end{minipage}
  \begin{minipage}[t]{0.38\linewidth}
    \centering
    photo of the actual board \\[1em]
    \includegraphics[width=0.87\linewidth]{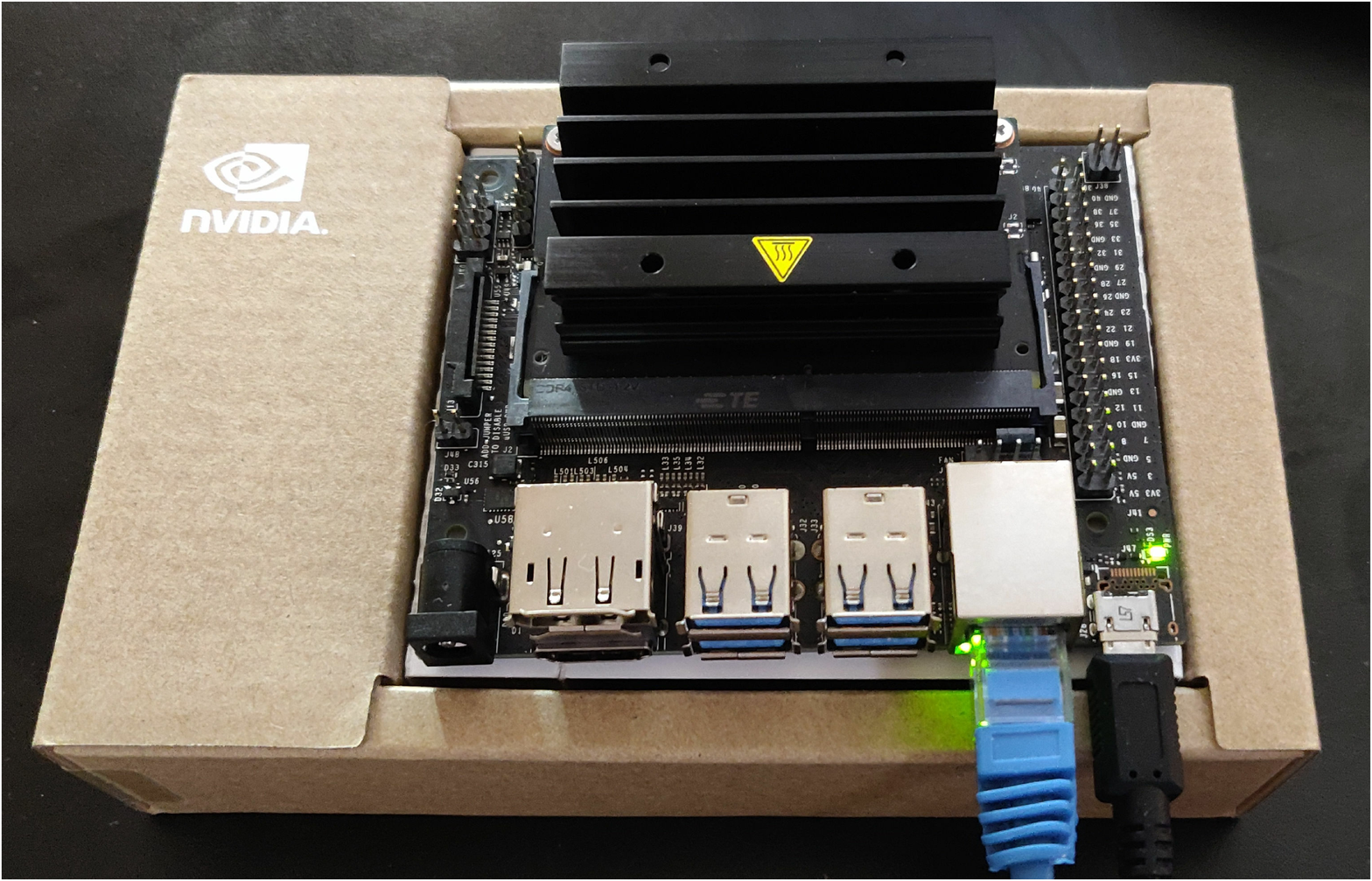}
  \end{minipage}
  \caption{The characteristics of the Jetson Nano developer kit we have used and a photo of the actual experimental setup.}
  \label{fig:jetsonnano_setup}
\end{figure}

The NVIDIA's Jetson Nano\footnote{\url{https://developer.nvidia.com/embedded/jetson-nano}} is a small yet powerful edge inference device with quad core CPU and 128 core GPU requiring only 10 watts. We use its publicly available developer kit\footnote{\url{https://developer.nvidia.com/embedded/jetson-nano-developer-kit}} to run our experiments. The full characteristics of our setup is given on Figure~\ref{fig:jetsonnano_setup}. Since device runs the Ubuntu operating system, most of the software is readily available. However, the deep-learning libraries require specific configuration and re-compilation to run on the available GPU. The pre-configured libraries are made available by NVIDIA's software teams.

\subsubsection{Runtime evaluation}

We evaluated the models on both CPU and GPU. We run the inference task (the forward pass) of tested neural networks using the batch size of 1, i.e., a single image, using 32 bit floating point values. For the CPU inference we converted all models into the ONNX inter-operable neural network format and used highly optimized ONNXRuntime. For the GPU inference we used the TensorRT library to convert the ONNX models into TensorRuntime models. To account for various operating systems scheduling issues and context switches, we repeated the measurements $N$ times and dropped $p$-proportion of the largest measurements. For the CPU measurements we used $N=100$ and for the GPU we used $N=1000$. We set $p=0.1$, i.e., the highest 10\% of the measurements will not be used in calculating the average delay. To avoid the influence of the thermal throttling, we allowed a cooldown period of at least 30 seconds between the measurements of different neural networks.

\begin{table}
  \centering
  \begin{tabular}{@{}lcr@{\hspace*{0.6em}}cr@{\hspace*{0.6em}}c@{}}
    \toprule
    \multirow{ 2}{*}{\hspace{4em}Model}  & \multirow{ 2}{*}{Top-1 err, \%} & \multicolumn{2}{c}{Edge CPU} & \multicolumn{2}{c}{Edge GPU}\\
    & & time, ms & speed-up & time, ms & speed-up  \\
    \midrule
    Caffe-AlexNet & 42.70 & 328.69 & 1.00 & 23.27 & 1.00 \\
    $\text{L}_1$ $\rightarrow$ Q (1-bit ) + P (0.25M) & 39.67 & 83.62  & 3.93 & 11.32 & 2.06 \\
    $\text{L}_2$ $\rightarrow$ Q (1-bit ) + P (0.25M) & 40.19 & 60.49  & 5.43 &  8.74 & 2.66  \\
    $\text{L}_3$ $\rightarrow$ Q (1-bit ) + P (0.25M) & 41.27 & 44.80  & 7.33 &  6.72 & 3.46  \\
    \bottomrule
  \end{tabular} 
  \caption{Runtime measurements of forward pass on Jetson Nano and corresponding speed-up when compared to Caffe-Alexnet.}
\end{table}


\begin{thebibliography}{56}
\providecommand{\natexlab}[1]{#1}
\providecommand{\url}[1]{\texttt{#1}}
\expandafter\ifx\csname urlstyle\endcsname\relax
  \providecommand{\doi}[1]{doi: #1}\else
  \providecommand{\doi}{doi: \begingroup \urlstyle{rm}\Url}\fi

\bibitem[Agustsson et~al.(2017)Agustsson, Mentzer, Tschannen, Cavigelli,
  Timofte, Benini, and Van~Gool]{Agusts_17a}
E.~Agustsson, F.~Mentzer, M.~Tschannen, L.~Cavigelli, R.~Timofte, L.~Benini,
  and L.~Van~Gool.
\newblock Soft-to-hard vector quantization for end-to-end learning compressible
  representations.
\newblock In I.~Guyon, U.~v.~Luxburg, S.~Bengio, H.~Wallach, R.~Fergus,
  S.~Vishwanathan, and R.~Garnett, editors, \emph{Advances in Neural
  Information Processing Systems (NIPS)}, volume~30, pages 1141--1151. MIT
  Press, Cambridge, MA, 2017.

\bibitem[Alvarez and Salzmann(2017)]{AlvarezSalzman17a}
J.~M. Alvarez and M.~Salzmann.
\newblock Compression-aware training of deep networks.
\newblock In I.~Guyon, U.~v.~Luxburg, S.~Bengio, H.~Wallach, R.~Fergus,
  S.~Vishwanathan, and R.~Garnett, editors, \emph{Advances in Neural
  Information Processing Systems (NIPS)}, volume~30, pages 856--867. MIT Press,
  Cambridge, MA, 2017.

\bibitem[Babenko and Lempitsky(2014)]{BabenkLempit14a}
A.~Babenko and V.~Lempitsky.
\newblock Additive quantization for extreme vector compression.
\newblock In \emph{Proc. of the 2014 IEEE Computer Society Conf. Computer
  Vision and Pattern Recognition (CVPR'14)}, pages 931--938, Columbus, OH,
  June~23--28 2014.

\bibitem[Beck and Tetruashvili(2013)]{BeckTetruas13a}
A.~Beck and L.~Tetruashvili.
\newblock On the convergence of block coordinate descent type methods.
\newblock \emph{SIAM J. Optimization}, 23\penalty0 (4):\penalty0 2037--2060,
  2013.

\bibitem[Bertsekas(1999)]{Bertsek99a}
D.~P. Bertsekas.
\newblock \emph{Nonlinear Programming}.
\newblock Athena Scientific, Nashua, NH, second edition, 1999.

\bibitem[Bouwmans and hadi Zahzah(2016)]{BouwmanZahzah16a}
T.~Bouwmans and E.~hadi Zahzah.
\newblock Robust principal component analysis via decomposition into low-rank
  and sparse matrices: An overview.
\newblock In T.~Bouwmans, N.~S. Aybat, and E.~hadi Zahzah, editors,
  \emph{Handbook of Robust Low-Rank and Sparse Matrix Decomposition.
  {Applications} in Image and Video Processing}, chapter~1, pages 1.1--1.61.
  CRC Publishers, 2016.

\bibitem[Cand{\`e}s et~al.(2011)Cand{\`e}s, Li, Ma, and Wright]{Candes_11a}
E.~J. Cand{\`e}s, X.~Li, Y.~Ma, and J.~Wright.
\newblock Robust principal component analysis?
\newblock \emph{Journal of the ACM}, 58\penalty0 (3):\penalty0 11, May 2011.

\bibitem[Carreira-Perpi{\~n}{\'a}n(2017)]{Carreir17a}
M.~{\'A}. Carreira-Perpi{\~n}{\'a}n.
\newblock Model compression as constrained optimization, with application to
  neural nets. {Part} {I}: General framework.
\newblock arXiv:1707.01209, July~5 2017.

\bibitem[Carreira-Perpi{\~n}{\'a}n and Idelbayev(2018)]{CarreirIdelbay18a}
M.~{\'A}. Carreira-Perpi{\~n}{\'a}n and Y.~Idelbayev.
\newblock ``{Learning}-compression'' algorithms for neural net pruning.
\newblock In \emph{Proc. of the 2018 IEEE Computer Society Conf. Computer
  Vision and Pattern Recognition (CVPR'18)}, pages 8532--8541, Salt Lake City,
  UT, June~18--22 2018.

\bibitem[Chandrasekaran et~al.(2010)Chandrasekaran, Sanghavi, Parrilo, and
  Willsky]{Chandr_10a}
V.~Chandrasekaran, S.~Sanghavi, P.~A. Parrilo, and A.~S. Willsky.
\newblock Rank-sparsity incoherence for matrix decomposition.
\newblock \emph{SIAM J. Optimization}, 21\penalty0 (2):\penalty0 572--596,
  2010.

\bibitem[Choi et~al.(2017)Choi, El-Khamy, and Lee]{Choi_17a}
Y.~Choi, M.~El-Khamy, and J.~Lee.
\newblock Towards the limit of network quantization.
\newblock In \emph{Proc. of the 5th Int. Conf. Learning Representations (ICLR
  2017)}, Toulon, France, Apr.~24--26 2017.

\bibitem[Ding et~al.(2019)Ding, Ding, Guo, Han, and Yan]{Ding_19a}
X.~Ding, G.~Ding, Y.~Guo, J.~Han, and C.~Yan.
\newblock Approximated oracle filter pruning for destructive {CNN} width
  optimization.
\newblock In K.~Chaudhuri and R.~Salakhutdinov, editors, \emph{Proc. of the
  36th Int. Conf. Machine Learning (ICML 2019)}, pages 1607--1616, Long Beach,
  CA, June~9--15 2019.

\bibitem[Han et~al.(2016)Han, Mao, and Dally]{Han_16a}
S.~Han, H.~Mao, and W.~J. Dally.
\newblock Deep compression: Compressing deep neural networks with pruning,
  trained quantization and {Huffman} coding.
\newblock In \emph{Proc. of the 4th Int. Conf. Learning Representations (ICLR
  2016)}, San Juan, Puerto Rico, May~2--4 2016.

\bibitem[Hastie and Tibshirani(1990)]{HastieTibshir90a}
T.~J. Hastie and R.~J. Tibshirani.
\newblock \emph{Generalized Additive Models}.
\newblock Number~43 in Monographs on Statistics and Applied Probability.
  Chapman \& Hall, London, New York, 1990.

\bibitem[He et~al.(2015)He, Zhang, Ren, and Sun]{He_15a}
K.~He, X.~Zhang, S.~Ren, and J.~Sun.
\newblock Delving deep into rectifiers: Surpassing human-level performance on
  {ImageNet} classification.
\newblock In \emph{Proc. 15th Int. Conf. Computer Vision (ICCV'15)}, pages
  1026--1034, Santiago, Chile, Dec.~11--18 2015.

\bibitem[He et~al.(2016)He, Zhang, Ren, and Sun]{He_16a}
K.~He, X.~Zhang, S.~Ren, and J.~Sun.
\newblock Deep residual learning for image recognition.
\newblock In \emph{Proc. of the 2016 IEEE Computer Society Conf. Computer
  Vision and Pattern Recognition (CVPR'16)}, pages 770--778, Las Vegas, NV,
  June~26 -- July~1 2016.

\bibitem[Idelbayev and
  Carreira-Perpi{\~n}{\'a}n(2020{\natexlab{a}})]{IdelbayCarreir20a}
Y.~Idelbayev and M.~{\'A}. Carreira-Perpi{\~n}{\'a}n.
\newblock Low-rank compression of neural nets: Learning the rank of each layer.
\newblock In \emph{Proc. of the 2020 IEEE Computer Society Conf. Computer
  Vision and Pattern Recognition (CVPR'20)}, pages 8046--8056, Seattle, WA,
  June~14--19 2020{\natexlab{a}}.

\bibitem[Idelbayev and
  Carreira-Perpi{\~n}{\'a}n(2020{\natexlab{b}})]{IdelbayCarreir20b}
Y.~Idelbayev and M.~{\'A}. Carreira-Perpi{\~n}{\'a}n.
\newblock A flexible, extensible software framework for model compression based
  on the {LC} algorithm.
\newblock arXiv:2005.07786, May~15 2020{\natexlab{b}}.

\bibitem[Idelbayev and
  Carreira-Perpi{\~n}{\'a}n(2021{\natexlab{a}})]{IdelbayCarreir21b}
Y.~Idelbayev and M.~{\'A}. Carreira-Perpi{\~n}{\'a}n.
\newblock Optimal selection of matrix shape and decomposition scheme for neural
  network compression.
\newblock In \emph{Proc. of the IEEE Int. Conf. Acoustics, Speech and Sig.
  Proc. (ICASSP'21)}, pages 3250--3254, Toronto, Canada, June~6--11
  2021{\natexlab{a}}.

\bibitem[Idelbayev and
  Carreira-Perpi{\~n}{\'a}n(2021{\natexlab{b}})]{IdelbayCarreir21e}
Y.~Idelbayev and M.~{\'A}. Carreira-Perpi{\~n}{\'a}n.
\newblock More general and effective model compression via an additive
  combination of compressions.
\newblock In \emph{Proc. of the 32nd European Conf. Machine Learning
  (ECML--21)}, Bilbao, Spain, Sept.~13--17 2021{\natexlab{b}}.

\bibitem[Jia et~al.(2014)Jia, Shelhamer, Donahue, Karayev, Long, Girshick,
  Guadarrama, and Darrell]{Jia_14a}
Y.~Jia, E.~Shelhamer, J.~Donahue, S.~Karayev, J.~Long, R.~Girshick,
  S.~Guadarrama, and T.~Darrell.
\newblock Caffe: Convolutional architecture for fast feature embedding.
\newblock arXiv:1408.5093 [cs.CV], June~20 2014.

\bibitem[Kim et~al.(2019)Kim, Khan, and Kyung]{Kim_19a}
H.~Kim, M.~U.~K. Khan, and C.-M. Kyung.
\newblock Efficient neural network compression.
\newblock In \emph{Proc. of the 2019 IEEE Computer Society Conf. Computer
  Vision and Pattern Recognition (CVPR'19)}, pages 12569--12577, Long Beach,
  CA, June~16--20 2019.

\bibitem[Krizhevsky et~al.(2012)Krizhevsky, Sutskever, and Hinton]{Krizhev_12a}
A.~Krizhevsky, I.~Sutskever, and G.~Hinton.
\newblock {ImageNet} classification with deep convolutional neural networks.
\newblock In F.~Pereira, C.~J.~C. Burges, L.~Bottou, and K.~Q. Weinberger,
  editors, \emph{Advances in Neural Information Processing Systems (NIPS)},
  volume~25, pages 1106--1114. MIT Press, Cambridge, MA, 2012.

\bibitem[{LeCun} et~al.(1990){LeCun}, Denker, and Solla]{Lecun_90a}
Y.~{LeCun}, J.~S. Denker, and S.~A. Solla.
\newblock Optimal brain damage.
\newblock In D.~S. Touretzky, editor, \emph{Advances in Neural Information
  Processing Systems (NIPS)}, volume~2, pages 598--605. Morgan Kaufmann, San
  Mateo, CA, 1990.

\bibitem[Leng et~al.(2018)Leng, Li, Zhu, and Jin]{Leng_18a}
C.~Leng, H.~Li, S.~Zhu, and R.~Jin.
\newblock Extremely low bit neural network: Squeeze the last bit out with
  {ADMM}.
\newblock In \emph{Proc. of the 32nd AAAI Conference on Artificial Intelligence
  (AAAI 2018)}, pages 3466--3473, New Orleans, LA, Feb.~2--7 2018.

\bibitem[Li et~al.(2016)Li, Zhang, and Liu]{Li_16b}
F.~Li, B.~Zhang, and B.~Liu.
\newblock Ternary weight networks.
\newblock arXiv:1605.04711, Nov.~19 2016.

\bibitem[Li et~al.(2017)Li, Kadav, Durdanovic, and Graf]{Li_17b}
H.~Li, A.~Kadav, I.~Durdanovic, and H.~P. Graf.
\newblock Pruning filters for efficient {ConvNets}.
\newblock In \emph{Proc. of the 5th Int. Conf. Learning Representations (ICLR
  2017)}, Toulon, France, Apr.~24--26 2017.

\bibitem[Li et~al.(2019)Li, Qi, Wang, Ge, Li, Yue, and Sun]{Li_19a}
J.~Li, Q.~Qi, J.~Wang, C.~Ge, Y.~Li, Z.~Yue, and H.~Sun.
\newblock {OICSR}: Out-in-channel sparsity regularization for compact deep
  neural networks.
\newblock In \emph{Proc. of the 2019 IEEE Computer Society Conf. Computer
  Vision and Pattern Recognition (CVPR'19)}, pages 7046--7055, Long Beach, CA,
  June~16--20 2019.

\bibitem[Liu et~al.(2019)Liu, Sun, Zhou, Huang, and Darrell]{Liu_19a}
Z.~Liu, M.~Sun, T.~Zhou, G.~Huang, and T.~Darrell.
\newblock Rethinking the value of network pruning.
\newblock In \emph{Proc. of the 7th Int. Conf. Learning Representations (ICLR
  2019)}, New Orleans, LA, May~6--9 2019.

\bibitem[Nesterov(1983)]{Nester83a}
Y.~Nesterov.
\newblock A method of solving a convex programming problem with convergence
  rate {$\calO(1/k^2)$}.
\newblock \emph{Soviet Math. Dokl.}, 27\penalty0 (2):\penalty0 372--376, 1983.

\bibitem[Nocedal and Wright(2006)]{NocedalWright06a}
J.~Nocedal and S.~J. Wright.
\newblock \emph{Numerical Optimization}.
\newblock Springer Series in Operations Research and Financial Engineering.
  Springer-Verlag, New York, second edition, 2006.

\bibitem[Novikov et~al.(2015)Novikov, Podoprikhin, Osokin, and
  Vetrov]{Novikov_15a}
A.~Novikov, D.~Podoprikhin, A.~Osokin, and D.~P. Vetrov.
\newblock Tensorizing neural networks.
\newblock In C.~Cortes, N.~D. Lawrence, D.~D. Lee, M.~Sugiyama, and R.~Garnett,
  editors, \emph{Advances in Neural Information Processing Systems (NIPS)},
  volume~28, pages 442--450. MIT Press, Cambridge, MA, 2015.

\bibitem[Nowlan and Hinton(1992)]{NowlanHinton92a}
S.~J. Nowlan and G.~E. Hinton.
\newblock Simplifying neural networks by soft weight-sharing.
\newblock \emph{Neural Computation}, 4\penalty0 (4):\penalty0 473--493, July
  1992.

\bibitem[Qu et~al.(2020)Qu, Zhou, Cheng, and Thiele]{Qu_20a}
Z.~Qu, Z.~Zhou, Y.~Cheng, and L.~Thiele.
\newblock Adaptive loss-aware quantization for multi-bit networks.
\newblock In \emph{Proc. of the 2020 IEEE Computer Society Conf. Computer
  Vision and Pattern Recognition (CVPR'20)}, pages 7988--7997, Seattle, WA,
  June~14--19 2020.

\bibitem[Rastegari et~al.(2016)Rastegari, Ordonez, Redmon, and
  Farhadi]{Rasteg_16a}
M.~Rastegari, V.~Ordonez, J.~Redmon, and A.~Farhadi.
\newblock {XNOR}-net: {ImageNet} classification using binary convolutional
  neural networks.
\newblock In B.~Leibe, J.~Matas, N.~Sebe, and M.~Welling, editors, \emph{Proc.
  14th European Conf. Computer Vision (ECCV'16)}, pages 525--542, Amsterdam,
  The Netherlands, Oct.~11--14 2016.

\bibitem[Russakovsky et~al.(2015)Russakovsky, Deng, Su, Krause, Satheesh, Ma,
  Huang, Karpathy, Khosla, Bernstein, Berg, and Fei-Fei]{Russak_15a}
O.~Russakovsky, J.~Deng, H.~Su, J.~Krause, S.~Satheesh, S.~Ma, Z.~Huang,
  A.~Karpathy, A.~Khosla, M.~Bernstein, A.~C. Berg, and L.~Fei-Fei.
\newblock {ImageNet} large scale visual recognition challenge.
\newblock \emph{Int. J. Computer Vision}, 115\penalty0 (3):\penalty0 211--252,
  Dec. 2015.

\bibitem[Simonyan and Zisserman(2015)]{SimonyZisser15a}
K.~Simonyan and A.~Zisserman.
\newblock Very deep convolutional networks for large-scale image recognition.
\newblock In \emph{Proc. of the 3rd Int. Conf. Learning Representations (ICLR
  2015)}, San Diego, CA, May~7--9 2015.

\bibitem[Tseng(2001)]{Tseng01a}
P.~Tseng.
\newblock Convergence of a block coordinate descent method for
  nondifferentiable minimization.
\newblock \emph{J. Optimization Theory and Applications}, 109\penalty0
  (3):\penalty0 475--494, June 2001.

\bibitem[Tung and Mori(2018)]{TungMori18a}
F.~Tung and G.~Mori.
\newblock {CLIP-Q}: Deep network compression learning by in-parallel
  pruning-quantization.
\newblock In \emph{Proc. of the 2018 IEEE Computer Society Conf. Computer
  Vision and Pattern Recognition (CVPR'18)}, pages 7873--7882, Salt Lake City,
  UT, June~18--22 2018.

\bibitem[Wen et~al.(2016)Wen, Wu, Wang, Chen, and Li]{Wen_16a}
W.~Wen, C.~Wu, Y.~Wang, Y.~Chen, and H.~Li.
\newblock Learning structured sparsity in deep neural networks.
\newblock In D.~D. Lee, M.~Sugiyama, U.~von Luxburg, I.~Guyon, and R.~Garnett,
  editors, \emph{Advances in Neural Information Processing Systems (NIPS)},
  volume~29, pages 2074--2082. MIT Press, Cambridge, MA, 2016.

\bibitem[Wen et~al.(2017)Wen, Xu, Wu, Wang, Chen, and Li]{Wen_17a}
W.~Wen, C.~Xu, C.~Wu, Y.~Wang, Y.~Chen, and H.~Li.
\newblock Coordinating filters for faster deep neural networks.
\newblock In \emph{Proc. 16th Int. Conf. Computer Vision (ICCV'17)}, Venice,
  Italy, Dec.~11--18 2017.

\bibitem[Wright(2016)]{Wright16a}
S.~J. Wright.
\newblock Coordinate descent algorithms.
\newblock \emph{Math. Prog.}, 151\penalty0 (1):\penalty0 3--34, June 2016.

\bibitem[Wu et~al.(2016)Wu, Leng, Wang, Hu, and Cheng]{Wu_16a}
J.~Wu, C.~Leng, Y.~Wang, Q.~Hu, and J.~Cheng.
\newblock Quantized convolutional neural networks for mobile devices.
\newblock In \emph{Proc. of the 2016 IEEE Computer Society Conf. Computer
  Vision and Pattern Recognition (CVPR'16)}, pages 4020--4028, Las Vegas, NV,
  June~26 -- July~1 2016.

\bibitem[Xiao et~al.(2019)Xiao, Wang, and Rajasekaran]{Xiao_19a}
X.~Xiao, Z.~Wang, and S.~Rajasekaran.
\newblock {AutoPrune}: Automatic network pruning by regularizing auxiliary
  parameters.
\newblock In H.~Wallach, H.~Larochelle, A.~Beygelzimer, F.~d'Alch{\'e} Buc,
  E.~Fox, and R.~Garnett, editors, \emph{Advances in Neural Information
  Processing Systems (NEURIPS)}, volume~32, pages 13681--13691. MIT Press,
  Cambridge, MA, 2019.

\bibitem[Xu et~al.(2018)Xu, Wang, Zhou, Lin, and Xiong]{Xu_18b}
Y.~Xu, Y.~Wang, A.~Zhou, W.~Lin, and H.~Xiong.
\newblock Deep neural network compression with single and multiple level
  quantization.
\newblock In \emph{Proc. of the 32nd AAAI Conference on Artificial Intelligence
  (AAAI 2018)}, pages 4335--4342, New Orleans, LA, Feb.~2--7 2018.

\bibitem[Xu et~al.(2020)Xu, Li, Zhang, Wen, Wang, Qi, Chen, Lin, and
  Xiong]{Xu_20a}
Y.~Xu, Y.~Li, S.~Zhang, W.~Wen, B.~Wang, Y.~Qi, Y.~Chen, W.~Lin, and H.~Xiong.
\newblock {TRP}: Trained rank pruning for efficient deep neural networks.
\newblock In \emph{Proc. of the 29th Int. Joint Conf. Artificial Intelligence
  (IJCAI'20)}, pages 977--983, Yokohama, Japan, Jan.~21--15 2020.

\bibitem[Yang et~al.(2020)Yang, Gui, Zhu, and Liu]{Yang_20b}
H.~Yang, S.~Gui, Y.~Zhu, and J.~Liu.
\newblock Automatic neural network compression by sparsity-quantization joint
  learning: A constrained optimization-based approach.
\newblock In \emph{Proc. of the 2020 IEEE Computer Society Conf. Computer
  Vision and Pattern Recognition (CVPR'20)}, pages 2175--2185, Seattle, WA,
  June~14--19 2020.

\bibitem[Yang et~al.(2019)Yang, Shen, Xing, Tian, Li, Deng, Huang, and
  Hua]{Yang_19b}
J.~Yang, X.~Shen, J.~Xing, X.~Tian, H.~Li, B.~Deng, J.~Huang, and X.-S. Hua.
\newblock Quantization networks.
\newblock In \emph{Proc. of the 2019 IEEE Computer Society Conf. Computer
  Vision and Pattern Recognition (CVPR'19)}, pages 7308--7316, Long Beach, CA,
  June~16--20 2019.

\bibitem[Ye et~al.(2018)Ye, Lu, Lin, and Wang]{Ye_18a}
J.~Ye, X.~Lu, Z.~Lin, and J.~Wang.
\newblock Rethinking the smaller-norm-less-informative assumption in channel
  pruning of convolution layers.
\newblock In \emph{Proc. of the 6th Int. Conf. Learning Representations (ICLR
  2018)}, Vancouver, Canada, Apr.~30 -- May~3 2018.

\bibitem[Yin et~al.(2018)Yin, Zhang, Lyu, Osher, Qi, and Xin]{Yin_18b}
P.~Yin, S.~Zhang, J.~Lyu, S.~Osher, Y.~Qi, and J.~Xin.
\newblock {BinaryRelax}: A relaxation approach for training deep neural
  networks with quantized weights.
\newblock \emph{SIAM J. Imaging Sciences}, 11\penalty0 (4):\penalty0
  2205--2223, 2018.

\bibitem[Yu et~al.(2018)Yu, Li, Chen, Lai, Morariu, Han, Gao, Lin, and
  Davis]{Yu_18a}
R.~Yu, A.~Li, C.-F. Chen, J.-H. Lai, V.~I. Morariu, X.~Han, M.~Gao, C.-Y. Lin,
  and L.~S. Davis.
\newblock {NISP}: Pruning networks using neuron importance score propagation.
\newblock In \emph{Proc. of the 2018 IEEE Computer Society Conf. Computer
  Vision and Pattern Recognition (CVPR'18)}, pages 9194--9203, Salt Lake City,
  UT, June~18--22 2018.

\bibitem[Yu et~al.(2017)Yu, Liu, Wang, and Tao]{Yu_17a}
X.~Yu, T.~Liu, X.~Wang, and D.~Tao.
\newblock On compressing deep models by low rank and sparse decomposition.
\newblock In \emph{Proc. of the 2017 IEEE Computer Society Conf. Computer
  Vision and Pattern Recognition (CVPR'17)}, pages 67--76, Honolulu, HI,
  July~21--26 2017.

\bibitem[Zhou et~al.(2017)Zhou, Yao, Guo, Xu, and Chen]{Zhou_17a}
A.~Zhou, A.~Yao, Y.~Guo, L.~Xu, and Y.~Chen.
\newblock Incremental network quantization: Towards lossless {CNNs} with
  low-precision weights.
\newblock In \emph{Proc. of the 5th Int. Conf. Learning Representations (ICLR
  2017)}, Toulon, France, Apr.~24--26 2017.

\bibitem[Zhou et~al.(2016)Zhou, Ni, Zhou, Wen, Wu, and Zou]{Zhou_16b}
S.~Zhou, Z.~Ni, X.~Zhou, H.~Wen, Y.~Wu, and Y.~Zou.
\newblock Dorefa-net: Training low bitwidth convolutional neural networks with
  low bitwidth gradients.
\newblock arXiv:1606.06160, July~17 2016.

\bibitem[Zhou and Tao(2011)]{ZhouTao11a}
T.~Zhou and D.~Tao.
\newblock {GoDec}: Randomized low-rank \& sparse matrix decomposition in noisy
  case.
\newblock In L.~Getoor and T.~Scheffer, editors, \emph{Proc. of the 28th Int.
  Conf. Machine Learning (ICML 2011)}, pages 33--40, Bellevue, WA, June~28 --
  July~2 2011.

\bibitem[Zhu et~al.(2017)Zhu, Han, Mao, and Dally]{Zhu_17a}
C.~Zhu, S.~Han, H.~Mao, and W.~J. Dally.
\newblock Trained ternary quantization.
\newblock In \emph{Proc. of the 5th Int. Conf. Learning Representations (ICLR
  2017)}, Toulon, France, Apr.~24--26 2017.

\end{thebibliography}
\end{document}